\documentclass[twoside]{article}
\usepackage[accepted]{aistats2026}
\usepackage{times}  
\usepackage{helvet}  
\usepackage{courier}  
\usepackage[hyphens]{url}  
\usepackage{graphicx} 
\urlstyle{rm} 
\usepackage[round]{natbib}

\usepackage{caption} 
\usepackage{refcount}
\usepackage{algpseudocode}

\usepackage{dblfloatfix}
\usepackage{algorithm}
\usepackage[colorlinks=true]{hyperref}

\setcounter{secnumdepth}{2} 

%


\usepackage{tikz}

\usepackage{enumitem}
\setlistdepth{9}
\setlist[itemize,1]{label=$\bullet$}
\setlist[itemize,2]{label=$+$}
\setlist[itemize,3]{label=$--$}
\setlist[itemize,4]{label=$\star$}
\setlist[itemize,5]{label=$\circ$}
\setlist[itemize,6]{label=$\bullet$}
\setlist[itemize,7]{label=$\bullet$}
\setlist[itemize,8]{label=$\bullet$}
\setlist[itemize,9]{label=$\bullet$}
\renewlist{itemize}{itemize}{9}

\usepackage{times}
\usepackage{comment}


\usepackage{amsmath,amsfonts,bm}









\def\eqref#1{equation~\ref{#1}}









\def\1{\bm{1}}










\DeclareMathAlphabet{\mathsfit}{\encodingdefault}{\sfdefault}{m}{sl}
\SetMathAlphabet{\mathsfit}{bold}{\encodingdefault}{\sfdefault}{bx}{n}














\usepackage{url}
\usepackage{graphicx}
\usepackage{dcolumn}
\usepackage{bm}
\usepackage{amsmath}
\usepackage{amssymb} %
\usepackage{amsfonts}
\usepackage{nameref}
\usepackage{mathtools}
\newcommand{\mathdash}{\relbar\mkern-9mu\relbar}

\usepackage{bm}

\usepackage{algorithm}

\usepackage{color,xcolor}


\let\emptyset\varnothing

\newcommand{\cind}{\mathrel{\perp\mspace{-9mu}\perp}}
\newcommand{\notcind}{\mathrel{\,\not\!\cind}}
\usepackage{amsthm}
\newtheorem{theorem}{Theorem}
\newtheorem{corollary}{Corollary}[theorem]
\newtheorem{lemma}[theorem]{Lemma}
\newtheorem{remark}{Remark}
\newtheorem{assumption}{Assumption}

\newtheorem{proposition}{Proposition}
\newtheorem{definition}{Definition}[section]
\makeatletter
\def\munderbar#1{\underline{\sbox\tw@{$#1$}\dp\tw@\z@\box\tw@}}
\makeatother





\newcommand{\be}{\begin{equation}}
\newcommand{\ee}{\end{equation}}
\newcommand{\bea}{\begin{equation*}\begin{aligned}}
\newcommand{\eea}{\end{aligned}\end{equation*}}










\begin{document}
\twocolumn[
\aistatstitle{Causal Additive Models with Unobserved Causal Paths and Backdoor Paths}

\aistatsauthor{ Thong Pham \And Takashi Nicholas Maeda \And  Shohei Shimizu}

\aistatsaddress{Shiga University \\ RIKEN AIP \And Gakushuin University \\ RIKEN AIP \And The University of Osaka \\ Shiga University \\ RIKEN AIP} ]

\begin{abstract}%
Causal additive models provide a tractable yet expressive framework for causal discovery in the presence of hidden variables. When unobserved backdoor or causal paths exist between two variables, their causal relationship is often unidentifiable under existing theories. We establish sufficient conditions under which causal directions can be identified in many such cases. These conditions rely on new characterizations of regression sets to determine independence among regression residuals and conditional independencies among observed variables. Building on these results, we introduce a search algorithm that incorporates these innovations and prove its soundness and completeness. Empirical evaluations demonstrate its competitive performance against state-of-the-art methods.
\end{abstract}

\section{INTRODUCTION}

Discovering causal structure from observational data is a fundamental challenge across the sciences, with implications for reliable prediction~\citep{causality_for_ML}, decision-making~\citep{causal_decision_making}, and scientific understanding~\citep{glymour2019review}. A central approach in this area is to model functional relationships between variables together with stochastic noise, which allows for distinguishing causal directions under suitable assumptions~\citep{causal_functional}. Causal Additive Models (CAMs)~\citep{Buhlmann14CAM} form a particularly influential family of such models, as they capture nonlinear dependencies while remaining amenable to analysis. Due to their tractability, they have been studied considerably and have many practical applications in machine learning~\citep{pmlr-v162-budhathoki22a,yokoyama_2025}. 

When there are hidden variables, the problem of identifying the causal graph in CAMs remains relatively underexplored, limiting its practical adoption in settings where hidden variables are almost always present. Although causal discovery with hidden variables can be handled in full generality using the Fast Causal Inference (FCI) framework~\citep{Spirtes2001}, it is natural to expect that we can do better than FCI in certain aspects by exploiting specific properties of CAMs.

\cite{maeda21a} identify cases in which parent–child relationships in CAMs can be determined in the presence of unobserved variables by analyzing independencies and dependencies among certain regression residuals. This result is significant and relies on specific properties of CAMs, since parent–child relationships are unidentifiable in the FCI framework. Although their approach accommodates hidden variables, it requires the absence of unobserved backdoor or causal paths. When such paths exist, the parent–child relationship is deemed unidentifiable.

Given a target variable $x_j$, \cite{Schultheiss_2024} provide sufficient conditions for identifying a causally well-specified set of observed variables, i.e., a set for which the causal effect of intervening on $x_j$ is identifiable in CAMs with unobserved variables. Although they do not discuss causal search as an application, their framework can be used to identify causal directions in CAMs.

Focusing on CAMs with unobserved variables, we show that a) the parent–child relationship is identifiable in certain cases even in the presence of an unobserved backdoor or causal path, thereby extending~\cite{maeda21a}, and b) some causal directions beyond those identified by~\cite{Schultheiss_2024} are identifiable. 

Using our theory, we show that in some cases, the parent–child relationship between $x_i$ and $x_j$ is identifiable even when they share a common hidden parent $U$. This configuration, known as a \emph{bow}, is notoriously difficult for causal discovery because the hidden confounding introduced by $U$ obscures the parent–child relationship between $x_i$ and $x_j$~\citep{wang_2023,ashman2023causal}. To our knowledge, no prior work has established this kind of identifiability without additional assumptions on the graphical structure.

A high-level summary of our main contributions is as follows.
\begin{itemize}
\item By characterizing the regression sets used in determining independence, we show that the causal direction or parent–child relationship between a pair of variables can sometimes be identified using independence between the residuals, even when there are unobserved backdoor paths. We provide examples where the parent–child relationship in a bow is fully identified using the regression approach alone.
\item We introduce sufficient conditions, which combine conditional independence between the original variables with independence between regression residuals, to identify causal directions or parent–child relationships in pairs with unobserved backdoor or causal paths. We present examples showing that this hybrid approach can identify parent–child relationships in bow patterns and causal directions in pairs with unobserved causal paths, cases where neither regression alone nor conditional independence methods, such as the FCI framework, can.
\item We introduce the CAM-UV-X algorithm that incorporates the above innovations and also addresses a previously overlooked limitation of the CAM-UV algorithm in identifying causal relationships when all backdoor and causal paths are observable. We prove the soundness and completeness of CAM-UV-X in identifying certain causal patterns. We have released the code for the algorithm\footnote{\url{https://github.com/thongphamthe/CAM-UV-X}}.
\end{itemize}
The paper is organized as follows. We provide the background in Section~\ref{sec:prelim}. New identifiability results are presented in Section~\ref{sec:theory}. Our proposed search method CAM-UV-X is described in Section~\ref{sec:algorithms}. Numerical experiments are provided in Section~\ref{sec:experiment}. Conclusions are given in Section~\ref{sec:concluding}. 
\section{PRELIMINARIES}\label{sec:prelim}
\subsection{The causal model}
We assume the following causal additive model with unobserved variables as in~\cite{maeda21a}.
Let $X = \{x_i\}$ and $U = \{u_i\}$ be the sets of observable and unobservable variables, respectively. $G = (V,E)$ is the DAG with the vertex set $V = \{v_i\} = X \cup U$ and the edge set $E \subseteq V \times V$. The data generation model is
\begin{equation}
v_i = \sum_{j \in P_i}f_{j}^{(i)}(x_j) + \sum_{k\in R_i}f_{k}^{(i)}(u_k) + n_i,\label{eq:CAM_UV}
\end{equation}
where $P_i = \{j \mid (i,j) \in E \land x_j \in X\}$ is the set of observable direct causes of $v_i$, $R_i = \{k \mid (i,k) \in E \land u_k \in U\}$ is the set of unobservable direct causes of $v_i$, $f_{j}^{(i)}$ is a non-linear function, and $n_i$ is the external noise at $v_i$. 

We assume the following Causal Faithfulness Condition (CFC): 
\begin{assumption}\label{assumption:CFC}
Any conditional independence on $V$  that is not entailed by the d-separation criterion on $G$ 
 does not hold. 
\end{assumption}
CFC is standard in causal discovery methods~\citep{Spirtes2001}, and implies Assumption~1 of~\cite{maeda21a} (see Appendix~\ref{appendix:assumption}).

Let $\mathcal{G}$ be the class of GAM regression functions~\citep{hastie_GAM}. For a function $G_i\in \mathcal{G}$, $G_i(N) = \sum_{x_m\in N} g_{i,m}(x_m)$ where each $g_{i,m}(x_m)$ is a nonlinear function of $x_m$. As in~\cite{maeda21a}, we assume that the data-generating process (DGP) satisfies the following residual-faithfulness assumption with respect to $\mathcal{G}$:
\begin{assumption}\label{assumption:function_class}
For any $G_i,G_j\in \mathcal{G}$, any $x_i,x_j \in X$, any external noise $n_k$, and any $M,N \subseteq X$,  $(n_k \notcind x_i - G_i(M))\land (n_k \notcind x_j - G_j(N)) \Rightarrow x_i -  G_i(M)\notcind x_j - G_j(N)$.
\end{assumption}
The intuition is that, without such a restriction, a property identified using \(\mathcal{G}\) may reflect either the underlying causal structure of the DGP or merely an artifact of the model class. Assumption~\ref{assumption:function_class} rules out this ambiguity by requiring that whenever two residuals obtained using functions in \(\mathcal{G}\) both retain signal from the same exogenous noise variable, they must be statistically dependent. Analogous assumptions are common in functional causal discovery~\citep{ashman2023causal}.

\subsection{Unobserved backdoor paths and unobserved causal paths}
Unobserved backdoor paths (UBPs) and unobserved causal paths (UCPs) play central roles in the theory of causal additive models with unobserved variables. We introduce the concepts of UBPs/UCPs with respect to a set $X' \subseteq X$, extending the definitions of~\cite{maeda21a}, which were originally given for $X' = X$.

\begin{definition}[Unobserved Causal Path]
\label{def:UCP}
Consider a set of variables $X' \subseteq X$. A path in $G$ is called an unobserved causal path from $x_i$ to $x_j$ with respect to $X'$ if and only if it has the form $x_i \rightarrow \cdots \rightarrow v_k \rightarrow x_j$ with $x_i,x_j\in X'$ and $v_k \notin X'$. When omitted, $X'$ is taken to be $X$, the full set of observable variables. We use the term UCP between $x_i$ and $x_j$ to denote the existence of a UCP in either direction, from $x_i$ to $x_j$ or from $x_j$ to $x_i$.
\end{definition}
The path $x_4 \rightarrow U_3 \rightarrow x_5$ in Fig.~\ref{fig:illustrative_1}c is a UCP from $x_4$ to $x_5$. The path $x_1 \rightarrow U_1 \rightarrow x_3 \rightarrow x_2$ in Fig.~\ref{fig:illustrative_1}d is not a UCP between $x_1$ and $x_2$, but \emph{is} a UCP with respect to the set $X' =X \setminus \{x_3\}$. 

\begin{definition}[Unobserved Backdoor Path]
\label{def:UBP}
Consider a set of variables $X' \subseteq X$. For $x_i,x_j\in X'$, a path in $G$ is called an unobserved backdoor path between $x_i$ and $x_j$ with respect to $X'$ if and only if it is of the form $x_i \leftarrow v_{k}\leftarrow \cdots  \leftarrow v_c \rightarrow \cdots \rightarrow v_l \rightarrow x_j$ with $v_k,v_l \notin X'$. When omitted, $X'$ is taken to be $X$. \end{definition}

In other words, a UBP is a backdoor path between $x_i$ and $x_j$ where both parents of the endpoints on the path are not in $X'$.

If a path is a UBP/UCP between $x_i$ and $x_j$ with respect to some $S \subseteq X$, then it is also a UBP/UCP with respect to any $S'\subseteq S$ that contains $x_i$ and $x_j$.

Consider Fig.~\ref{fig:illustrative_1}a. The path $x_3 \leftarrow U_2 \rightarrow x_2$ is a UBP between $x_3$ and $x_2$ with respect to $X$, since both parents of $x_3$ and $x_2$ on the path, namely $U_2$, are not in $X$. The path $x_1 \leftarrow U_1 \rightarrow x_3 \rightarrow x_2$ is \emph{not} a UBP between $x_1$ and $x_2$ with respect to $X$, since the parent of $x_2$ on the path, namely $x_3$, is in $X$. However, it \emph{is} a UBP with respect to the set $X' = X \setminus \{x_3\}$. 
 

\begin{figure}[!ht]
\centering
\includegraphics[width = \columnwidth]{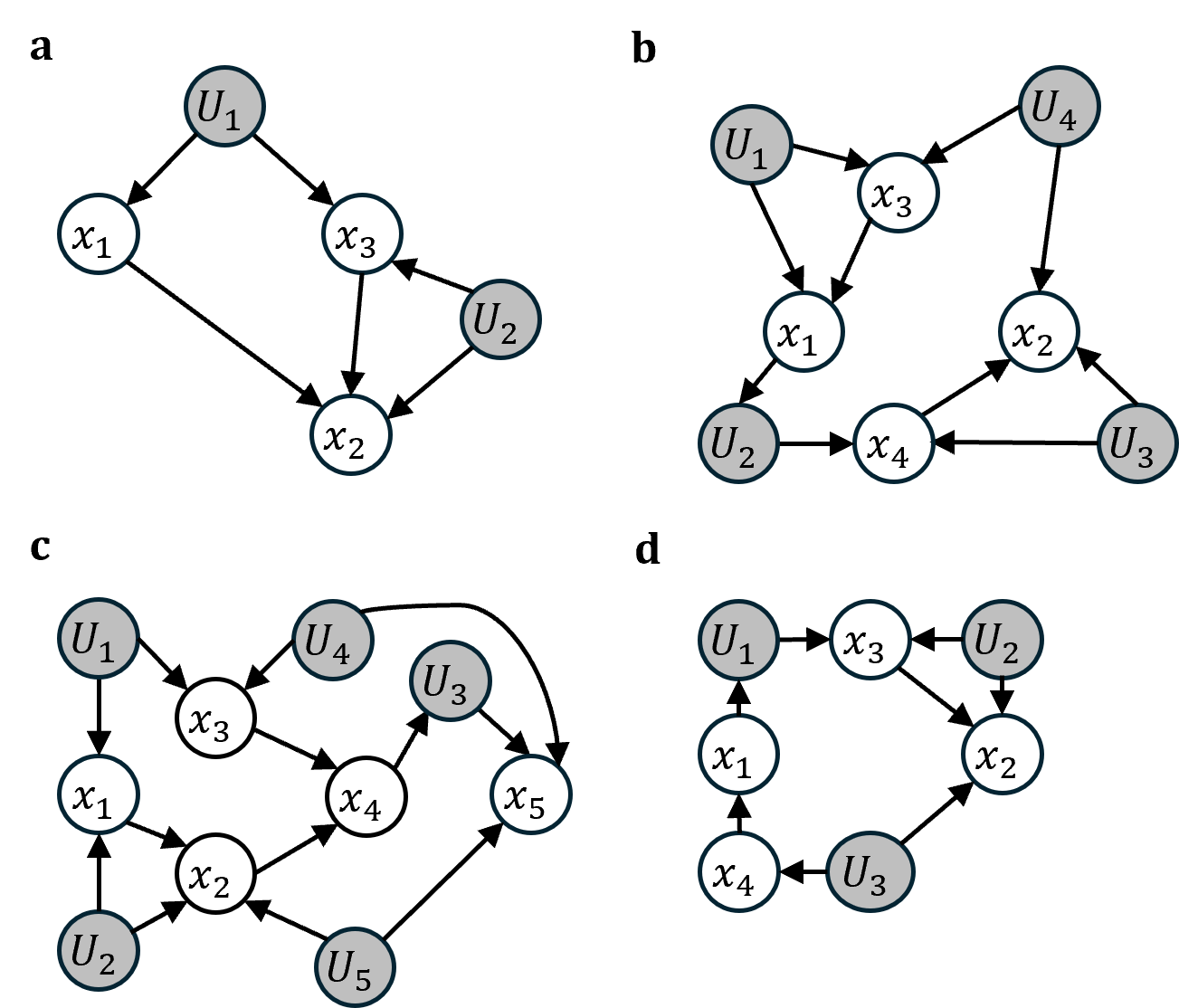}
\caption{Examples of identifying causal relationships in the presence of UBPs/UCPs.\label{fig:illustrative_1}}
\end{figure}
\subsection{Visible and invisible pairs}
Based on Lemmas~1-3 of~\cite{maeda21a}, we introduce the concepts of visibility/invisibility with respect to a set $X' \subseteq X$, which are fundamental to our theory and our proposed CAM-UV-X. The proofs of all lemmas in this section proceed by the same arguments as those of their corresponding lemmas, which were established for the case $X' = X$. We therefore omit the details.
\begin{lemma}[visible parent]\label{lemma:visible_parent}
Consider $X' \subseteq X$ and $x_i,x_j \in X'$. $x_j$ is a parent of $x_i$ and there is no UBP or UCP between $x_j$ and $x_i$ with respect to $X'$ if and only if
\begin{align}
\forall G_1,G_2\in \mathcal{G},M \subseteq X' \setminus \{x_i,x_j\}, N \subseteq X' \setminus \{x_j\}:\nonumber \\ 
x_i - G_1(M) \notcind x_j - G_2(N), \label{eq:visible_parent_1}\\
\exists G_1,G_2\in \mathcal{G},M \subseteq X' \setminus \{x_i\}, N \subseteq X' \setminus \{x_i,x_j\}:\nonumber\\
x_i - G_1(M) \cind x_j - G_2(N). \label{eq:visible_parent_2}
\end{align}
In this case, we refer to $x_j$ as a visible parent of $x_i$ with respect to $X'$. When omitted, $X'$ is taken to be $X$.
\end{lemma}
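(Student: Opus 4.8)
The plan is to adapt the proof of Lemmas~1--3 of \cite{maeda21a}, which establish the $X'=X$ case, by treating the variables in $X\setminus X'$ exactly as that paper treats the genuinely unobserved variables $U$: every regression is performed only over $X'$, and every path condition is read ``with respect to $X'$''. So the real work is to re-examine which steps of the original proof rely on $X'$ being all of $X$ and to check that none of them fail under this substitution.

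The backbone I would build on is a structural description of additive regression residuals. Writing $r_i(M)\defeq x_i - G_1(M)$ for the residual of the best additive fit of $x_i$ on $M\subseteq X'\setminus\{x_i\}$, and likewise $r_j(N)$, the first step is to show from \eqref{eq:CAM_UV} and the CFC that $r_i(M)$ decomposes as $n_i$ plus an additive function of noise terms and of the contributions entering $x_i$ along directed paths that $M$ fails to intercept; and then that $r_i(M)\cind r_j(N)$ is attainable for \emph{some} admissible $G_1,G_2$ precisely when $M,N\subseteq X'$ can be chosen so that no active path remains between the two residual parts --- i.e.\ $M$ intercepts (within $X'$) every directed path into $x_i$ that originates at $x_j$ or at a source common to $x_i$ and $x_j$, $N$ does the symmetric thing, and no UBP or UCP with respect to $X'$ links what is left over. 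The one genuinely subtle ingredient here is the ``for all $G_1,G_2\in\mathcal{G}$'' clauses: ruling out accidental additive cancellations is a faithfulness argument that leans on the non-linearity of the $f$'s, and I would import it essentially verbatim from \cite{maeda21a}, only checking that restricting the regressors to $X'$ does not weaken it (it does not, since out-of-$X'$ variables only play the passive role of hidden variables).

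Granting this, the forward direction is short. Assume $x_j$ is a parent of $x_i$ with no UBP or UCP between them with respect to $X'$. For \eqref{eq:visible_parent_2}: take $M\subseteq X'\setminus\{x_i\}$ to contain $x_j$ together with whatever observed variables of $X'$ are needed to collapse $r_i(M)$ to $n_i$ plus contributions from outside $X'$, and take $N\subseteq X'\setminus\{x_i,x_j\}$ analogously for $x_j$; the absence of a UBP between them (w.r.t.\ $X'$) means the two residual parts share no active confounder, and the absence of a UCP (there can be no UCP from $x_i$ to $x_j$, as that would close a cycle with the edge $x_j\to x_i$) means they share no active directed link, so $r_i(M)\cind r_j(N)$. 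For \eqref{eq:visible_parent_1}: here $M\subseteq X'\setminus\{x_i,x_j\}$ is barred from containing $x_j$, so the direct term $f_{j}^{(i)}(x_j)$ is not absorbed into $G_1(M)$ and $r_i(M)$ stays a non-degenerate function of $x_j$; $r_j(N)$ likewise always retains a non-degenerate dependence on $x_j$ (its noise $n_j$ keeps it from being determined by $N$, even when $x_i\in N$), so by the faithfulness-and-non-linearity argument above $r_i(M)\notcind r_j(N)$ for every admissible $M,N$.

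For the converse I would argue by contraposition: if the left-hand condition fails, the residual description forces one of \eqref{eq:visible_parent_1}, \eqref{eq:visible_parent_2} to fail. If $x_j$ is not a parent of $x_i$, then any influence of $x_j$ on $x_i$ runs along a path that a suitable $M\subseteq X'\setminus\{x_i,x_j\}$ (together with a suitable $N$, which here may also contain $x_i$) can render inert --- or there is no such influence at all --- so independent residuals exist and \eqref{eq:visible_parent_1} fails; if instead $x_j$ is a parent of $x_i$ but a UBP or UCP between them with respect to $X'$ exists, that path is active and passes through a vertex outside $X'$, hence cannot be intercepted by any $M,N\subseteq X'$, so the residuals are dependent for every choice and \eqref{eq:visible_parent_2} fails. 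I expect the main obstacle to be the residual-structure step of the second paragraph --- pinning down exactly which paths survive additive regression onto $X'$, and verifying that the faithfulness argument controlling the $\mathcal{G}$-quantifiers is untouched by the relativization; once that is in place, the rest is bookkeeping against the ``with respect to $X'$'' versions of UBP, UCP, and visibility.
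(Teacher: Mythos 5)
Your strategy coincides with the paper's: the paper gives no standalone proof of this lemma, stating only that it ``proceeds by the same arguments'' as Lemmas~1--3 of \cite{maeda21a} for the case $X'=X$, with the variables of $X\setminus X'$ playing the passive role of hidden variables --- exactly the relativization you propose, and your residual-decomposition backbone is the substance of that imported argument. The forward direction of your sketch is sound.

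One slip in your converse, though. In the case ``$x_j$ is not a parent of $x_i$'' you assert that a suitable $M,N\subseteq X'$ always renders the residuals independent, so that Eq.~(\ref{eq:visible_parent_1}) fails. That is false whenever a UBP or UCP with respect to $X'$ links the pair --- e.g.\ a confounder lying outside $X'$ --- since no regression onto subsets of $X'$ can intercept a path through a vertex outside $X'$; in that subcase Eq.~(\ref{eq:visible_parent_1}) in fact \emph{holds} and it is Eq.~(\ref{eq:visible_parent_2}) that fails. The contrapositive still goes through because the conjunction fails either way, but the case split should be organized as ``a UBP/UCP w.r.t.\ $X'$ is present'' (Eq.~(\ref{eq:visible_parent_2}) fails; this is the invisible-pair situation of Lemma~\ref{lemma:invisible}) versus ``no UBP/UCP and $x_j$ is not a parent of $x_i$'' (Eq.~(\ref{eq:visible_parent_1}) fails, via Lemma~\ref{lemma:visible_non_edge} or the reversed-edge case), rather than by parenthood alone.
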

The regression functions $G_1$ and $G_2$ in $\mathcal{G}$ are used only as analytical tools; they need not correspond to the true causal function $f_j^{(i)}$ in Eq.~(\ref{eq:CAM_UV}). In practice, the function $G_i$ is fitted from observed data by GAM regression~\citep{hastie_GAM}.

\textbf{Example 1.} In Fig.~\ref{fig:illustrative_1}a, $x_j = x_1$ is a visible parent of $x_i = x_2$, since there are no UBPs/UCPs between $x_1$ and $x_2$. In particular, due to the direct effect of $x_1$ on $x_2$, there is no set $M\subseteq X\setminus \{x_1,x_2\} = \{x_3\}$ and no set $N \subseteq X\setminus \{x_1\} = \{x_2,x_3\}$ that can make the residuals $x_2 - G_1(M)$ and $x_1 - G_2(N)$ independent, i.e., Eq.~(\ref{eq:visible_parent_1}) is satisfied. Furthermore, there are some sets $M\subseteq X\setminus \{x_2\} = \{x_1,x_3\}$ and $N \subseteq X\setminus \{x_1,x_2\} = \{x_3\}$, in particular $M = \{x_1,x_3\}$ and $N = \emptyset$,  that can make the residuals $x_2 - G_1(M)$ and $x_1 - G_2(N)$ independent, i.e., Eq.~(\ref{eq:visible_parent_2}) is satisfied. 

Visible non-edges are defined as follows.
\begin{lemma}[visible non-edge]\label{lemma:visible_non_edge}
Consider $X'\subseteq X$ and $x_i,x_j \in X'$. There is no direct edge between $x_j$ and $x_i$ and there is no UBP or UCP between $x_j$ and $x_i$ with respect to $X'$ if and only if
\begin{align}
\exists G_1,G_2\in \mathcal{G},M \subseteq X' \setminus \{x_i, x_j\}, N \subseteq X' \setminus \{x_i,x_j\}:\nonumber\\
x_i - G_1(M) \cind x_j - G_2(N). \label{eq:non_edge}
\end{align}
In this case, we refer to $(x_i,x_j)$ as a visible non-edge with respect to $X'$. When omitted, $X'$ is taken to be $X$.
\end{lemma}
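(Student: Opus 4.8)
The plan is to derive the statement from the already-established case $X' = X$, i.e., from the corresponding visible-non-edge lemma of \cite{maeda21a} (proved there for the full observable set), by a relabeling argument rather than re-deriving it from scratch. Given $X' \subseteq X$ with $x_i, x_j \in X'$, pass to the auxiliary causal model built from the \emph{same} DAG $G$ and the \emph{same} structural equations Eq.~(\ref{eq:CAM_UV}), but with the observable set redeclared to be $X'$ and the unobservable set redeclared to be $U \cup (X \setminus X')$; the functions $f^{(i)}_k$, the noises $n_i$, and the regression function class $\mathcal{G}$ are left untouched. In this auxiliary model, ``with respect to $X'$'' coincides with ``with respect to the (auxiliary) observable set'', so applying the $X' = X$ lemma to the auxiliary model should yield both directions of Lemma~\ref{lemma:visible_non_edge} at once.

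Making this precise amounts to checking three compatibility points. (i) The auxiliary model is again an instance of the model of Section~\ref{sec:prelim}: Eq.~(\ref{eq:CAM_UV}) writes each vertex as an additive sum of nonlinear functions of its parents plus noise, and demoting the variables of $X \setminus X'$ to ``unobservable'' merely relabels some indices from the $P_i$-sets to the $R_i$-sets, so the additive form is preserved; since $G$ is unchanged, d-separation --- and hence the Causal Faithfulness Condition (Assumption~\ref{assumption:CFC}) --- is unchanged; and $\mathcal{G}$ is a fixed subclass of GAMs, hence unaffected. (ii) The structural hypotheses match: non-adjacency of $x_i$ and $x_j$ is a property of $G$ alone, and since the definitions of UBP and UCP ``with respect to $Y$'' only test membership of the path's vertices in $Y$, ``no UBP or UCP between $x_j$ and $x_i$ with respect to $X'$'' in the original model is literally ``no UBP or UCP between $x_j$ and $x_i$'' in the auxiliary model. (iii) The regression side matches: $M, N \subseteq X' \setminus \{x_i, x_j\}$ is exactly ``subsets of the auxiliary observable set minus $\{x_i, x_j\}$'', and the residuals $x_i - G_1(M)$, $x_j - G_2(N)$ and their independence are defined identically. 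With (i)--(iii) in hand, the corresponding lemma of \cite{maeda21a}, read in the auxiliary model, is word-for-word Lemma~\ref{lemma:visible_non_edge}, so both directions of the ``if and only if'' follow simultaneously.

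The genuine obstacle is not (i)--(iii), which are routine, but confirming that the original proof of the visible-non-edge lemma nowhere exploits the observable set being all of $X$ --- e.g., by quantifying over ``all other vertices'' as candidate regressors, or by invoking faithfulness in a form tied to the full joint distribution rather than to the observed margin. Inspecting that proof, each step is an assertion about d-separation in the fixed DAG combined with the additive structure and CFC, none of which distinguishes which subset of vertices is labeled observable, so the reduction goes through; but this is precisely what the excerpt suppresses under ``the same arguments'', and it is what one should spell out. A self-contained alternative is to replay Maeda et al.'s residual analysis with $X'$ throughout: for the forward direction, take $M, N$ to be the ancestors of $x_i$, respectively $x_j$, lying in $X' \setminus \{x_i, x_j\}$, and argue via CFC that the two residuals can share an active source only through a direct edge between $x_i$ and $x_j$ or through a UBP/UCP with respect to $X'$; for the converse, such an edge or path forces residual dependence under CFC for every admissible choice of $M, N, G_1, G_2$. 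The relabeling route simply automates this bookkeeping.
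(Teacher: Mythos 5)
Your proposal is correct and matches the paper's approach: the paper itself gives no separate proof, stating only that the lemma follows ``by the same arguments'' as the $X'=X$ case of \cite{maeda21a}, and your reduction---demoting $X\setminus X'$ to the unobserved set so that ``with respect to $X'$'' becomes ``with respect to the observable set'' in an auxiliary model satisfying the same assumptions---is a clean formalization of exactly that claim. Your compatibility checks (i)--(iii), and your observation that the only real burden is verifying the original argument never exploits observability of all of $X$, are precisely the points the paper suppresses.
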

For a visible non-edge, independence can emerge even if $x_i$ and $x_j$ are excluded from the regression sets, e.g., the ranges of $M$ and $N$ in Eq.~(\ref{eq:non_edge}) are $X'\setminus \{x_i,x_j\}$. In contrast, for a visible edge $x_j \rightarrow x_i$, due to the direct effect of $x_j$ on $x_i$, $x_j$ must be included in the regression set of $x_i$ for independence to emerge (see Proposition~\ref{proposition:1} in Appendix~\ref{appendix:proof}).

\textbf{Example 2.} In Fig.~\ref{fig:illustrative_1}b, $(x_1,x_2)$ is a visible non-edge, since there are no UBPs/UCPs between $x_1$ and $x_2$. In this case, the non-edge $(x_1,x_2)$ is identifiable from data by checking Lemma~\ref{lemma:visible_non_edge}: there are some sets $M\subseteq X\setminus \{x_1,x_2\}$ and $N \subseteq X\setminus \{x_1,x_2\}$, in particular $M = \{x_4\}$ and $N = \{x_3\}$, that can make the residuals $x_2 - G_1(M)$ and $x_1 - G_2(N)$ independent, i.e., Eq.~(\ref{eq:non_edge}) is satisfied.

Finally, invisible pairs are defined as follows.
\begin{lemma}[invisible pairs]\label{lemma:invisible}
Consider $X' \subseteq X$ and $x_i,x_j \in X'$. There is a UBP/UCP between $x_j$ and $x_i$ with respect to $X'$ if and only if
\begin{align}
\forall M \subseteq X' \setminus \{x_i\}, N \subseteq X' \setminus \{x_j\},\forall G_{1},G_{2}\in \mathcal{G}:\nonumber \\
x_i - G_{1}(M) \notcind x_j - G_{2}(N). \label{eq:invisible}
\end{align}
In this case, we refer to $(x_i,x_j)$ as an invisible pair with respect to $X'$. When omitted, $X'$ is taken to be $X$.
\end{lemma}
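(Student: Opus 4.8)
The plan is to view this lemma as the third case of a trichotomy. For a pair $x_i,x_j\in X'$, either there is no UBP or UCP between them with respect to $X'$, in which case --- $G$ being a DAG --- exactly one of ``$x_j\to x_i$'', ``$x_i\to x_j$'', ``there is no edge between $x_i$ and $x_j$'' holds; or there is such a path. I would prove the two implications separately, reusing throughout the arguments behind Lemmas~\ref{lemma:visible_parent}--\ref{lemma:invisible} with $X'$ substituted for $X$; the definitions of UBP and UCP with respect to $X'$ are arranged precisely so that, for this lemma, every node of $X\setminus X'$ behaves like an unobserved node, both as something unavailable as a regressor and as a node forbidden at the ends of a confounding or causal path.

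\textbf{($\Leftarrow$), the easy direction.} I argue the contrapositive: if there is no UBP or UCP between $x_i$ and $x_j$ with respect to $X'$, then Eq.~(\ref{eq:invisible}) is violated. If $x_j\to x_i$, then $x_j$ is a visible parent of $x_i$ with respect to $X'$, and Eq.~(\ref{eq:visible_parent_2}) of Lemma~\ref{lemma:visible_parent} supplies $G_1,G_2\in\mathcal{G}$, $M\subseteq X'\setminus\{x_i\}$ and $N\subseteq X'\setminus\{x_i,x_j\}\subseteq X'\setminus\{x_j\}$ with $x_i-G_1(M)\cind x_j-G_2(N)$; the case $x_i\to x_j$ follows by applying Lemma~\ref{lemma:visible_parent} with $i$ and $j$ exchanged, which yields admissible sets $M\subseteq X'\setminus\{x_i\}$, $N\subseteq X'\setminus\{x_j\}$; and if there is no edge between $x_i$ and $x_j$ then $(x_i,x_j)$ is a visible non-edge with respect to $X'$, so Eq.~(\ref{eq:non_edge}) of Lemma~\ref{lemma:visible_non_edge} supplies such sets with $M,N\subseteq X'\setminus\{x_i,x_j\}$. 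In every case Eq.~(\ref{eq:invisible}) fails. Thus this direction is a bookkeeping corollary of Lemmas~\ref{lemma:visible_parent}--\ref{lemma:visible_non_edge}; the only thing to check is that the regression sets they produce nest inside $X'\setminus\{x_i\}$ and $X'\setminus\{x_j\}$, which they do.

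\textbf{($\Rightarrow$), the substantive direction.} Assume a UBP or UCP $\pi$ between $x_i$ and $x_j$ with respect to $X'$ exists; fix arbitrary $M\subseteq X'\setminus\{x_i\}$, $N\subseteq X'\setminus\{x_j\}$, $G_1,G_2\in\mathcal{G}$, set $r_i:=x_i-G_1(M)$ and $r_j:=x_j-G_2(N)$, and suppose for contradiction that $r_i\cind r_j$. Unrolling Eq.~(\ref{eq:CAM_UV}) along ancestors exhibits $r_i$ as a measurable function of the external noises of $x_i$ and of the ancestors of $x_i$ and of $M$; since $x_i\notin M$ and $G_1$ is additive, this function depends non-degenerately on $n_i$, because an additive fit of the variables in $M$ cannot cancel the unit-coefficient occurrence of $n_i$ in $x_i$ --- this is where Assumptions~\ref{assumption:CFC} and~\ref{assumption:function_class} are used. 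The role of $\pi$ is to force a noise that is active in $r_i$ to also be active in $r_j$. If $\pi$ is a UCP $x_i\to\cdots\to v_k\to x_j$ with $v_k\notin X'$, then $n_i$ reaches $x_j$ through the edge $v_k\to x_j$, and since $v_k\notin X'\supseteq N$ and no additive regression over $N$ can block this edge, $r_j$ also depends non-degenerately on $n_i$. If $\pi$ is a UBP $x_i\leftarrow v_k\leftarrow\cdots\leftarrow v_m\to\cdots\to v_l\to x_j$ with $v_k,v_l\notin X'$ (the fork $v_m$ possibly equal to $v_k$ or $v_l$), the noise $n_{v_m}$ plays this role: it is active in $r_i$ because $v_k\notin X'$ and in $r_j$ because $v_l\notin X'$. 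In both cases $r_i$ and $r_j$ are non-degenerate functions of the mutually independent external noises that share an active noise, and the faithfulness assumption (Assumption~\ref{assumption:CFC}, applied to the system augmented by $r_i$ and $r_j$) forbids $r_i\cind r_j$ --- contradiction. Hence Eq.~(\ref{eq:invisible}) holds.

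\textbf{The main obstacle.} Everything hard lives in the claim, used twice above, that additive (GAM) regression is strictly weaker than conditioning: regressing $x_j$ on a set that omits the node $v_k$ through which $\pi$ enters $x_j$ cannot eliminate the dependence on the noise that $\pi$ routes through $v_k$ --- not even by also regressing on observed intermediate nodes of $\pi$, or on other descendants of $x_i$ whose values carry that noise --- because the mechanisms along $\pi$ are nonlinear while $\mathcal{G}$ is additive. Making this precise is exactly what requires the structural restriction on $\mathcal{G}$ (Assumption~\ref{assumption:function_class}) together with faithfulness, and it is also the only point at which passing from $X'=X$ to $X'\subsetneq X$ matters, since the blocking node on $\pi$ is now merely required to lie outside $X'$ rather than to be genuinely unobservable. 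Every remaining step reproduces, mutatis mutandis, the proofs of Lemmas~\ref{lemma:visible_parent}--\ref{lemma:invisible} for the case $X'=X$, and I would omit the details for that reason.
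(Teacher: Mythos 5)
Your proposal is correct and follows essentially the same route as the paper, which itself omits the proof and defers to the arguments for Lemmas~1--3 of \cite{maeda21a} with $X'$ substituted for $X$: the easy direction via the trichotomy and Lemmas~\ref{lemma:visible_parent}--\ref{lemma:visible_non_edge}, and the substantive direction by tracing a shared external noise along the UBP/UCP whose blocking node lies outside $X'$ and hence outside any admissible regression set. One small note: the step forbidding $r_i \cind r_j$ when both residuals carry the same noise is the content of Assumption~\ref{assumption:function_class} rather than of the faithfulness condition applied to an ``augmented'' system, but since you invoke both assumptions this is a matter of attribution, not a gap.
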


\textbf{Example 3.} Consider  Fig.~\ref{fig:illustrative_1}a.  $(x_2,x_3)$ is invisible due to the presence of a UBP, namely $x_3 \leftarrow U_2 \rightarrow x_2$. In this case, the effects of $U_2$ on $x_2$ and $x_3$ cannot be removed through regressions, and thus there is no set $M \subseteq X\setminus \{x_2\}$ and no set $N \subseteq X \setminus \{x_3\}$ that can make $x_2 - G_1(M)$ and $x_3 - G_2(N)$ independent, i.e., Eq.~(\ref{eq:invisible}) is satisfied. While $(x_1,x_2)$ is visible with respect to $X$, it is invisible with respect to $X\setminus \{x_3\}$, because $x_1 \leftarrow U_1 \rightarrow x_3 \rightarrow x_2$ is a UBP with respect to $X \setminus \{x_3\}$.

The CAM-UV algorithm~\citep{maeda21a} is designed to detect visible edges and non-edges with respect to $X$ while marking invisible pairs as such. However, it cannot identify causal directions in invisible pairs.

\begin{remark}
The visibility and invisibility results with respect to \(X\) extend beyond CAM-UV to the separable observed/unobserved-parent model studied by~\cite{ashman2023causal}, but they do not extend to the general additive noise models (ANMs) of~\cite{Hoyer09NIPS}. The corresponding results with respect to a subset \(X' \subset X\) hold only in CAM-UV. See Appendix~\ref{appendix:visiblity_scope} for further discussion.
\end{remark}

\section{\MakeUppercase{Identifying causal relationships in invisible pairs}}\label{sec:theory}
We present conditions for identifying parent–child relationships or causal directions in invisible pairs. All omitted proofs can be found in Appendix~\ref{appendix:proof}.
\subsection{Utilizing independence between regression 
residuals}\label{sub_sec:independent}
In this section, new conditions are provided by characterizing the content of the regression sets $M$ and $N$ in Lemmas~\ref{lemma:visible_parent} and~\ref{lemma:visible_non_edge} with the following intuition.
\begin{proposition}\label{prop:invisibility_switch_meaning}
If $(x_i,x_j)$ is visible with respect to $X$ and invisible with respect to $X\setminus \{x_{k}\}$, $x_{k}$ must be a parent of $x_i$ or $x_j$.
\end{proposition}

Visibility with respect to $X$ and invisibility with respect to $X\setminus \{x_{k_q}\}$ can be confirmed using Lemmas~\ref{lemma:visible_parent},~\ref{lemma:visible_non_edge} and~\ref{lemma:invisible}. This allows us to conclude that $x_{k_q}$ must be a parent of $x_i$ or $x_j$, even when $(x_{k_q},x_i)$ and $(x_{k_q},x_j)$ are both invisible.  

The following lemma characterizes the regression sets in Eq.~(\ref{eq:visible_parent_2}). 
\begin{lemma}
\label{lemma:visible_edge_implication}
Consider distinct $x_i$, $x_j$, and $x_{k_1},\ldots,x_{k_m}$. Let $K = \{x_{k_1},\ldots,x_{k_m}\}$. $x_j$ is a visible parent of $x_i$, and for each $x_{k_q}$, $(x_i,x_j)$ is invisible with respect to $X \setminus \{x_{k_q}\}$ if and only if
\begin{align}
\text{For } q = 1,\ldots,m: \forall M \subseteq X \setminus \{x_i,x_{k_q}\}, \nonumber \\
N \subseteq X \setminus \{x_j,x_{k_q}\},
\forall G_{i}^1,G_{j}^1\in \mathcal{G}:\nonumber \\
x_i - G_{i}^{1}(M) \notcind x_j - G_{j}^1(N),\label{eq:visible_edge_implication_1}
\\
\forall M \subseteq X \setminus \{x_i,x_j\}, N \subseteq X \setminus \{x_j\},\forall G_{i}^1,G_{j}^1\in \mathcal{G}:\nonumber\\
x_i - G_{i}^{1}(M) \notcind x_j - G_{j}^1(N),\label{eq:visible_edge_implication_2}\\
\exists Q_{1},Q_{2} \subseteq K: Q_{1}\cup Q_{2} = K:\nonumber\\ 
\exists G_{i}^{2},G_{j}^{2}\in \mathcal{G},M, N \subseteq X \setminus \{x_i,x_j\}\setminus K:\nonumber\\ 
 x_i - G_{i}^{2}\left(M \cup \{x_j\} \cup Q_1\right) \cind x_j - G_{j}^{2}(N\cup Q_2 )\label{eq:visible_edge_implication_3}
\end{align}
are satisfied. Since $x_j$ is a parent of $x_i$, each $x_{k_q}$ is therefore an ancestor of $x_i$.
\end{lemma}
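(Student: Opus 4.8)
The plan is to establish the two conclusions in turn: first that $x_j$ is a visible parent of $x_i$ with respect to $X$, and then, using that, that each $x_{k_q}$ is a parent of $x_i$ or of $x_j$.

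For the first conclusion I would simply check the two conditions of Lemma~\ref{lemma:visible_parent} at $X' = X$. Condition~(\ref{eq:visible_parent_1}) is verbatim Eq.~(\ref{eq:visible_edge_implication_2}) after relabelling the regression functions. For condition~(\ref{eq:visible_parent_2}), take the witnesses $Q_1, Q_2$, $G_i^2, G_j^2$, and $M, N \subseteq X \setminus \{x_i, x_j\} \setminus K$ furnished by Eq.~(\ref{eq:visible_edge_implication_3}); since $x_i \notin K$, $x_i \neq x_j$, and $x_j \notin K$, we get $M \cup \{x_j\} \cup Q_1 \subseteq X \setminus \{x_i\}$ and $N \cup Q_2 \subseteq X \setminus \{x_i, x_j\}$, so the independence asserted in Eq.~(\ref{eq:visible_edge_implication_3}) is precisely an instance of the existential in Eq.~(\ref{eq:visible_parent_2}). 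Lemma~\ref{lemma:visible_parent} then yields that $x_j$ is a visible parent of $x_i$; in particular $x_j$ is a parent of $x_i$ and there is no UBP or UCP between $x_i$ and $x_j$ with respect to $X$.

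For the second conclusion, fix $q$. Since $x_i, x_j, x_{k_q}$ are distinct, $x_i, x_j \in X \setminus \{x_{k_q}\}$, and the quantifier ranges in Eq.~(\ref{eq:visible_edge_implication_1}) equal $(X \setminus \{x_{k_q}\}) \setminus \{x_i\}$ and $(X \setminus \{x_{k_q}\}) \setminus \{x_j\}$; hence Eq.~(\ref{eq:visible_edge_implication_1}) for index $q$ is exactly condition~(\ref{eq:invisible}) of Lemma~\ref{lemma:invisible} instantiated at $X' = X \setminus \{x_{k_q}\}$. So $(x_i, x_j)$ is an invisible pair with respect to $X \setminus \{x_{k_q}\}$, i.e., there is a path $P$ of $G$ that is a UBP or a UCP between $x_i$ and $x_j$ with respect to $X \setminus \{x_{k_q}\}$. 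By the first conclusion no path is a UBP or UCP between $x_i$ and $x_j$ with respect to $X$, so $P$ is not. The plan is then a short case split on the shape of $P$. If $P$ is a UBP, write it as $x_i \leftarrow v_k \leftarrow \cdots \leftarrow v_a \rightarrow \cdots \rightarrow v_l \rightarrow x_j$ (allowing $v_a$ to coincide with $v_k$ or $v_l$): being a UBP with respect to $X \setminus \{x_{k_q}\}$ forces $v_k, v_l \notin X \setminus \{x_{k_q}\}$, while not being a UBP with respect to $X$, together with $x_i, x_j \in X$, forces $v_k \in X$ or $v_l \in X$; the only element removed in passing from $X$ to $X \setminus \{x_{k_q}\}$ is $x_{k_q}$, so $v_k = x_{k_q}$ or $v_l = x_{k_q}$, and since $v_k \rightarrow x_i$ and $v_l \rightarrow x_j$ along $P$ this makes $x_{k_q}$ a parent of $x_i$ or of $x_j$. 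If $P$ is a UCP, say $x_i \rightarrow \cdots \rightarrow v_k \rightarrow x_j$ (the orientation from $x_j$ to $x_i$ being symmetric), the same argument applied to $v_k$ gives $v_k = x_{k_q}$, hence $x_{k_q}$ is a parent of $x_j$ (or of $x_i$ under the reverse orientation). Either way $x_{k_q}$ is a parent of $x_i$ or of $x_j$, and since $x_j$ is a parent of $x_i$, $x_{k_q}$ is an ancestor of $x_i$.

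The set-inclusion bookkeeping and the relabelling in the first conclusion are routine. The substantive step — and the place where care is needed — is the graph-theoretic claim in the case split: that deleting the single observed variable $x_{k_q}$ can turn a non-UBP/UCP into a UBP/UCP between $x_i$ and $x_j$ only by ``exposing'' $x_{k_q}$ as the variable adjacent to, and directed into, one of the endpoints. Making this airtight requires systematically handling the definitional corner cases (the fork $v_a$ possibly coinciding with $v_k$ or $v_l$, the symmetric roles of the two endpoints, the possibility that both endpoint-adjacent nodes equal $x_{k_q}$, and both orientations of a UCP), but it uses nothing beyond the path definitions and Lemmas~\ref{lemma:visible_parent} and~\ref{lemma:invisible}, with no additional faithfulness appeal.
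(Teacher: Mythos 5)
Your proposal is correct and follows essentially the same route as the paper's proof: Eqs.~(\ref{eq:visible_edge_implication_2}) and~(\ref{eq:visible_edge_implication_3}) give visibility of $x_j \rightarrow x_i$ via Lemma~\ref{lemma:visible_parent}, Eq.~(\ref{eq:visible_edge_implication_1}) gives invisibility with respect to $X\setminus\{x_{k_q}\}$ via Lemma~\ref{lemma:invisible}, and the witnessing path $P$ must therefore lose its UBP/UCP status when $x_{k_q}$ is restored, forcing $x_{k_q}$ to be the endpoint-adjacent node. Your explicit case split on the shape of $P$ and the set-inclusion checks for Eq.~(\ref{eq:visible_parent_2}) are details the paper leaves implicit, but the argument is the same.
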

Eq.~(\ref{eq:visible_edge_implication_1}) means that for each node $x_{k_q}$ in $K$, the residuals cannot be independent if one excludes $x_{k_q}$ from $X$. Eq.~(\ref{eq:visible_edge_implication_3}) means that the residuals become independent when one includes each $x_{k_q}$ into the regression sets. Eqs.~(\ref{eq:visible_edge_implication_1}) and (\ref{eq:visible_edge_implication_3}) together imply that each $x_{k_q}$ must be on a UBP/UCP between $x_i$ and $x_j$ with respect to $X\setminus\{x_{k_q}\}$, and $x_{k_q}$ must be a parent of $x_i$ or a parent of $x_j$. Eq.~(\ref{eq:visible_edge_implication_2}) means that the residuals cannot be independent if one does not add $x_j$ to the regression set when regressing $x_i$. Eqs.~(\ref{eq:visible_edge_implication_2}) and~(\ref{eq:visible_edge_implication_3}) would imply that $x_j$ is a parent of $x_i$. Eqs.~(\ref{eq:visible_edge_implication_1}),~(\ref{eq:visible_edge_implication_2}), and~(\ref{eq:visible_edge_implication_3}) together give the lemma.
  
\textbf{Example~4.} The pair $(x_3,x_2)$ in Fig.~\ref{fig:illustrative_1}a is invisible, due to the presence of the UBP $x_3 \leftarrow U_2 \rightarrow x_2$. Thus, existing theories do not provide sufficient conditions for certifying causal relationships in $(x_3,x_2)$. If Eqs.~(\ref{eq:visible_edge_implication_1}),~(\ref{eq:visible_edge_implication_2}), and~(\ref{eq:visible_edge_implication_3}) hold with $x_i = x_2$, $x_j = x_1$, and $K =\{x_3\}$, Lemma~\ref{lemma:visible_edge_implication} can certify that $x_3$ is an ancestor of $x_2$. 


The full separability of causal effects in CAM-UV is crucial for our theory and may be the additional structural constraint that enables identification in cases not covered by~\cite{Schultheiss_2024}. Additionally, they aim to identify variables whose causal effect on a target variable is identifiable, which is generally a harder problem than just identifying the ancestors of the target. For example, \cite{Hoyer08IJAR} showed examples where the causal direction is identifiable, even when the causal effect is not.


For visible non-edges, Lemma~\ref{lemma:visible_non_edge_implication} characterizes the regression sets in Eq.~(\ref{eq:non_edge}).
\begin{lemma}
\label{lemma:visible_non_edge_implication}
Consider distinct $x_i$, $x_j$, and $x_{k_1},\ldots,x_{k_m}$. Let $K = \{x_{k_1},\ldots,x_{k_m}\}$. $(x_j,x_i)$ is a visible non-edge, and for each $x_{k_q}$, $(x_i,x_j)$ is invisible with respect to $X \setminus \{x_{k_q}\}$ if and only if Eq.~(\ref{eq:visible_edge_implication_1}) and 
\begin{align}
\exists Q_{1},Q_{2} \subseteq K: Q_{1}\cup Q_{2} = K:\nonumber\\ 
\exists G_{i}^{2},G_{j}^{2}\in \mathcal{G},M, N \subseteq X \setminus \{x_i,x_j\}\setminus K:\nonumber \\ 
 x_i - G_{i}^{2}\left(M \cup Q_1 \right) \cind x_j - G_{j}^{2}(N\cup Q_2), \label{eq:visible_non_edge_implication_1} 
\end{align}
\end{lemma}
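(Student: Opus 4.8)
The plan is to mirror the proof of Lemma~\ref{lemma:visible_edge_implication}, showing that $(x_i,x_j)$ is a visible non-edge and that each $x_{k_q}$ is a parent of $x_i$ or of $x_j$. Here Eq.~(\ref{eq:visible_non_edge_implication_1}) plays the role of Eq.~(\ref{eq:visible_edge_implication_3}), and the accompanying hypothesis (the non-edge analogue of Eq.~(\ref{eq:visible_edge_implication_1}), asserting that $(x_i,x_j)$ is invisible with respect to each $X\setminus\{x_{k_q}\}$) replaces the remaining conditions; there is no counterpart of Eq.~(\ref{eq:visible_edge_implication_2}), since for a non-edge there is no orientation to establish. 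First I would note that in Eq.~(\ref{eq:visible_non_edge_implication_1}) the regression sets $M\cup Q_1$ and $N\cup Q_2$ both lie in $X\setminus\{x_i,x_j\}$: indeed $M,N\subseteq X\setminus\{x_i,x_j\}\setminus K$ while $Q_1,Q_2\subseteq K\subseteq X\setminus\{x_i,x_j\}$, the $x_{k_q}$ being distinct from $x_i$ and $x_j$. Hence the independence in Eq.~(\ref{eq:visible_non_edge_implication_1}) is precisely a witness of the form demanded by Lemma~\ref{lemma:visible_non_edge} with $X'=X$, so $(x_i,x_j)$ is a visible non-edge: there is no direct edge between $x_i$ and $x_j$ and no UBP or UCP between them with respect to $X$.

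Next I would invoke Lemma~\ref{lemma:invisible} with $X'=X\setminus\{x_{k_q}\}$ on the invisibility hypothesis, obtaining, for each $q$, a UBP or UCP $\pi$ between $x_i$ and $x_j$ with respect to $X\setminus\{x_{k_q}\}$. The key observation is that $\pi$ is \emph{not} a UBP/UCP with respect to $X$, by the first step; since $X$ and $X\setminus\{x_{k_q}\}$ differ only in the node $x_{k_q}$, the change in status must be caused by $x_{k_q}$, and in particular $\pi$ must pass through $x_{k_q}$ (otherwise its defining conditions are unaffected by deleting $x_{k_q}$, so $\pi$ would already be a UBP/UCP with respect to $X$).

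Then I would make this precise from the definitions. A UCP requires the node of $\pi$ lying immediately before its endpoint to be unobserved, and a UBP requires the two nodes of $\pi$ immediately adjacent to $x_i$ and to $x_j$ both to be unobserved; write $v$ for such a required-unobserved, endpoint-adjacent node. Because $\pi$ is a UBP/UCP with respect to $X\setminus\{x_{k_q}\}$ but not with respect to $X$, at least one such $v$ satisfies $v\in X$ yet $v\notin X\setminus\{x_{k_q}\}$, forcing $v=x_{k_q}$ (this also covers the degenerate UBP case where the fork node coincides with $v$, in which $x_{k_q}$ is a common parent of $x_i$ and $x_j$). Since $v$ is the tail of an edge of $\pi$ pointing into $x_i$ or into $x_j$, we conclude $x_{k_q}$ is a parent of $x_i$ or a parent of $x_j$. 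Running this for every $q$ gives the lemma; because $x_i$ and $x_j$ are non-adjacent, no stronger ancestry statement (unlike in Lemma~\ref{lemma:visible_edge_implication}, where the edge $x_j\to x_i$ upgrades ``parent of $x_j$'' to ``ancestor of $x_i$'') is available or needed.

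The step I expect to be the main obstacle is not conceptual but the careful bookkeeping in the third paragraph: being precise that a node adjacent to an endpoint along a UCP/UBP is the \emph{tail} of an edge into that endpoint (hence a parent), handling the coincidences allowed by the UBP definition ($v_k=v_l$, or either coinciding with the fork), and checking that the argument uses only properties shared by every UBP/UCP with respect to $X\setminus\{x_{k_q}\}$ — namely that it passes through $x_{k_q}$ and is not a UBP/UCP with respect to $X$ — so that the particular choice of witness path $\pi$ is immaterial.
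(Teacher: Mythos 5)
Your proof is correct and follows essentially the same route as the paper's: Eq.~(\ref{eq:visible_non_edge_implication_1}) plus Lemma~\ref{lemma:visible_non_edge} gives a visible non-edge (hence no UBP/UCP with respect to $X$), Eq.~(\ref{eq:visible_edge_implication_1}) plus Lemma~\ref{lemma:invisible} gives a UBP/UCP with respect to $X\setminus\{x_{k_q}\}$, and comparing the two forces $x_{k_q}$ to be the endpoint-adjacent node excluded from the reference set, hence a parent of $x_i$ or $x_j$. You correctly supplied the hypothesis Eq.~(\ref{eq:visible_edge_implication_1}) that the typeset statement omits, and your final bookkeeping step is actually spelled out more carefully than in the paper, which asserts it in one sentence.
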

are satisfied.

\textbf{Example 5.} Consider Fig.~\ref{fig:illustrative_1}b. Since $(x_3,x_1)$, $(x_3,x_2)$, $(x_4,x_1)$, and $(x_4,x_2)$ are invisible, existing theories cannot identify causal relationships in these pairs. If Eqs.~(\ref{eq:visible_edge_implication_1}) and~(\ref{eq:visible_non_edge_implication_1}) hold in the observed data with $x_i = x_1$, $x_j = x_2$, and $K = \{x_3,x_4\}$, Lemma~\ref{lemma:visible_non_edge_implication} can certify that $x_3$ is a parent of $x_1$ or $x_2$, and $x_4$ is a parent of $x_1$ or $x_2$. 

See Appendix~\ref{appendix:additional_results} for Lemma~\ref{lemma:visible_non_edge_implication_2}, which provides an alternative condition to Lemma~\ref{lemma:visible_non_edge_implication}.

The presence of a bow between two variables poses a major challenge for causal discovery. For example, in non-Gaussian linear models, the parent–child relationship within a bow cannot be determined without additional assumptions, such as adopting a canonical model where every hidden variable is parentless and acts as a confounder of at least two observed variables~\citep{Hoyer08IJAR,pmlr-v235-tramontano24a,10.1609/aaai.v38i18.30017}, or assuming that the number of hidden variables is known~\citep{NEURIPS2021_c0f6fb5d}. Without such assumptions, only the causal direction, i.e., the ancestor relationship, is identifiable~\citep{Salehkaleybar2020learning}. To our knowledge, no existing causal model can identify the parent–child relationship within a bow without imposing additional assumptions on the graph structure. 


Lemmas~\ref{lemma:visible_edge_implication},~\ref{lemma:visible_non_edge_implication}, or~\ref{lemma:visible_non_edge_implication_2} imply that $x_{k_q}$ must be a parent of $x_i$ or $x_j$, without specifying which. If we can rule out $x_{k_q}$ as a parent of $x_i$, for example, when $(x_{k_q},x_i)$ is a visible non-edge or $x_i \rightarrow x_{k_q}$ is a visible edge, then $x_{k_q}$ must be a parent of $x_j$. Even when both $(x_{k_q},x_i)$ and $(x_{k_q},x_j)$ are invisible and form bows, the regression approach can sometimes still resolve the relationship (see Appendix~\ref{appendix:example_parent_child_regression}). When it cannot, conditional independence among the original variables may provide the needed information.

\subsection{Utilizing conditional independence}\label{sub_sec:conditional_independent}
We provide sufficient conditions utilizing independence between regression residuals and conditional independence between the original variables to identify causal relationships in invisible pairs. We show some examples where the causal direction is identifiable by this hybrid approach, but not by the regression approach alone or conditional independence alone. 

We introduce the following lemma for identifying that some $x_j$ is not an ancestor of $x_i$.
\begin{lemma}
\label{lemma:y_structure}
If $x_k$ is an ancestor of $x_i$ and $x_k \cind x_j \mid x_i$, then 1) there is no backdoor path between $x_i$ and $x_j$, and 2) $x_j$ is not an ancestor of $x_i$.
\end{lemma}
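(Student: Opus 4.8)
The plan is to prove both conclusions of Lemma~\ref{lemma:y_structure} by exploiting the assumed conditional independence $x_k \cind x_j \mid x_i$ together with the faithfulness assumption (Assumption~\ref{assumption:CFC}), which lets us convert this probabilistic independence into a d-separation statement: $x_k$ and $x_j$ are d-separated by $\{x_i\}$ in $G$. The key structural fact I would exploit is that $x_k$ is an \emph{ancestor} of $x_i$, so there is a directed path $\pi$ from $x_k$ to $x_i$; every node on $\pi$ is a non-collider on $\pi$, and since conditioning is only on $x_i$ (the endpoint of $\pi$), no intermediate node of $\pi$ is in the conditioning set, so $\pi$ is an active (d-connecting) path from $x_k$ to $x_i$ given $\{x_i\}$. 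The main work is to show that if either conclusion failed, we could splice $\pi$ (or a sub-path of it) together with an offending path into $x_j$ to obtain a path from $x_k$ to $x_j$ active given $\{x_i\}$, contradicting the d-separation.

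For part~2, suppose toward contradiction that $x_j$ \emph{is} an ancestor of $x_i$; then there is a directed path $\rho$ from $x_j$ to $x_i$. Now consider the walk $x_k \to \cdots \to x_i \leftarrow \cdots \leftarrow x_j$ obtained by concatenating $\pi$ with the reverse of $\rho$. I would take the first node $w$ where $\pi$ meets $\rho$ (following $\pi$ from $x_k$); the portion of $\pi$ from $x_k$ to $w$ followed by the portion of $\rho$ reversed from $w$ to $x_j$ is a path (no repeated vertices). Along it, $w$ is the only possible collider, and $w$ lies on the directed path $\rho$ into $x_i$, hence $w$ is an ancestor of $x_i$, so $w \in \mathrm{An}(\{x_i\})$; since every non-endpoint node of this spliced path is an intermediate node of a directed path terminating at $x_i$ and conditioning is on $x_i$ alone, every non-collider is unconditioned and the collider $w$ has a descendant ($x_i$) in the conditioning set. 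Hence the path is active given $\{x_i\}$, connecting $x_k$ and $x_j$ — contradiction. (If $\pi$ and $\rho$ share no internal vertex other than $x_i$, the spliced path has $x_i$ itself as a collider with $x_i$ in the conditioning set, equally active; I would handle this boundary case explicitly.) Thus $x_j$ is not an ancestor of $x_i$.

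For part~1, suppose there is a backdoor path between $x_i$ and $x_j$, i.e.\ a path of the form $x_i \leftarrow \cdots$ on the $x_i$ end. I would argue that such a path, possibly after reduction, gives a path from $x_i$ to $x_j$ that is active given $\emptyset$ and whose $x_i$-end arrow points into $x_i$; more carefully, I should invoke the existence of an unblocked (given $\{x_i\}$) connection. The cleanest route: among all paths from $x_i$ to $x_j$, faithfulness plus the marginal/ conditional structure forces something, but the sharper tool is to again prepend $\pi$. Take the directed path $\pi: x_k \to \cdots \to x_i$ and a backdoor path $\beta: x_i \leftarrow a \leftarrow \cdots \cdots x_j$. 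Concatenate $\pi$ with $\beta$ at $x_i$; since $\pi$ enters $x_i$ via $\to x_i$ and $\beta$ leaves $x_i$ via $x_i \leftarrow a$, the node $x_i$ is a \emph{non-collider} on the concatenation, and it \emph{is} in the conditioning set, so this naive concatenation is blocked at $x_i$ — so this must be reduced. Instead I take the last vertex $u$ on $\beta$ (traversing from $x_i$) that also lies on $\pi$; then the sub-path of $\pi$ from $x_k$ to $u$ followed by the sub-path of $\beta$ from $u$ to $x_j$ is a genuine path. At $u$: if $u \ne x_i$, then $u$ is an internal vertex of $\pi$, hence an ancestor of $x_i$ and not equal to $x_i$, so $u \notin \{x_i\}$; one then checks the collider/non-collider status at $u$ and argues the resulting path is active given $\{x_i\}$, contradiction. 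The main obstacle — and the step I expect to need the most care — is exactly this collider bookkeeping at the splice point $u$: I must verify that $u$ is either a non-collider outside the conditioning set, or a collider that is itself $x_i$ or has $x_i$ as a descendant, and that all other internal vertices (coming from $\pi$, all ancestors of $x_i$; coming from $\beta$, subject to $\beta$ being active given $\emptyset$) are handled — so the bulk of the proof is a case analysis on where and how $\pi$ and $\beta$ (resp.\ $\pi$ and $\rho$) first/last intersect. I would organize this by choosing the intersection vertex minimally/maximally along $\pi$ so that the two sub-paths share only that vertex, reducing to at most a couple of cases at the splice.
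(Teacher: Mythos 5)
Your overall strategy is the same as the paper's: use faithfulness to turn $x_k \cind x_j \mid x_i$ into a d-separation statement, then derive a contradiction by concatenating the directed path $\pi$ from $x_k$ to $x_i$ with the offending path into $x_j$ and observing that conditioning on $x_i$ opens the resulting path. Your part 2 is correct and is in fact more careful than the paper's own two-line argument, which simply asserts that the concatenation "exists" and is open without addressing whether $\pi$ and $\rho$ share internal vertices; your splice at the first intersection vertex $w$, together with the observation that $w$ is a collider that is an ancestor of the conditioned node $x_i$, handles this cleanly.

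Part 1, however, contains a genuine error. You write that since $\pi$ enters $x_i$ via $\rightarrow x_i$ and the backdoor path $\beta$ leaves $x_i$ via $x_i \leftarrow a$, the node $x_i$ is a \emph{non-collider} on the concatenation and the path is therefore blocked by conditioning on $x_i$. This has the collider status backwards: both incident edges have their arrowheads at $x_i$ (the edge $x_i \leftarrow a$ points \emph{into} $x_i$), so $x_i$ is a collider, and conditioning on a collider \emph{opens} the path. That is precisely the contradiction the paper uses, and it is the same mechanism you correctly invoke in part 2. The error is not merely cosmetic: it motivates your detour of splicing at the last intersection vertex $u$, and in the generic case where $\pi$ and $\beta$ share only $x_i$ you have $u = x_i$, so your spliced path \emph{is} the naive concatenation that you have declared blocked — meaning your part 1, as conceived, derives no contradiction in exactly that case. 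The fix is immediate: recognize $x_i$ as a conditioned collider, after which your splicing machinery (with $u \neq x_i$ on $\pi$ being a collider with descendant $x_i \in \{x_i\}$, or a non-collider distinct from $x_i$, depending on which arm of the trek $u$ lies on) completes the simple-path bookkeeping that the paper leaves implicit.
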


Lemma~\ref{lemma:y_structure} is a direct consequence of standard d-separation reasoning~\citep{Spirtes2001} and not new. It is closely related to local causal-discovery arguments based on Y-structures~\citep{Mani2006UAI}. We therefore state it here only in the particular form needed for our setting.

Combining Lemma~\ref{lemma:y_structure} with independence conditions between regression residuals can give sufficient conditions to identify the causal direction in certain invisible pairs in the following corollary.
 
\begin{corollary}\label{coro:ancestorship}
Consider an invisible pair $(x_i,x_j)$, i.e., Eq.~(\ref{eq:invisible}) is satisfied. If  $x_k$ is an ancestor of $x_i$ and $x_k \cind x_j \mid x_i$, then $x_i$ is an ancestor of $x_j$.
\end{corollary}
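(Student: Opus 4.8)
The plan is to chain Lemma~\ref{lemma:y_structure} together with the structural characterization of invisibility in Lemma~\ref{lemma:invisible}, and then eliminate every possibility except ``$x_i$ is an ancestor of $x_j$'' by a short case analysis on path types.

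First I would apply Lemma~\ref{lemma:y_structure} to the hypotheses of the corollary ($x_k$ an ancestor of $x_i$, and $x_k \cind x_j \mid x_i$). This delivers two facts at once: (a) there is no backdoor path between $x_i$ and $x_j$ in $G$; and (b) $x_j$ is not an ancestor of $x_i$. Next I would use that $(x_i,x_j)$ is an invisible pair: Eq.~(\ref{eq:invisible}) holds, so by Lemma~\ref{lemma:invisible} (with $X' = X$) there exists a UBP or a UCP between $x_i$ and $x_j$ with respect to $X$. A UBP has the form $x_i \leftarrow v_k \leftarrow \cdots \leftarrow v_i \rightarrow \cdots \rightarrow v_l \rightarrow x_j$, which is in particular a backdoor path between $x_i$ and $x_j$; fact (a) therefore rules the UBP case out, leaving a UCP between $x_i$ and $x_j$.

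Finally, by definition a UCP between $x_i$ and $x_j$ is a directed path of the form $x_i \rightarrow \cdots \rightarrow v \rightarrow x_j$ or of the form $x_j \rightarrow \cdots \rightarrow v \rightarrow x_i$ (with the penultimate node $v$ unobserved). The second form would make $x_j$ an ancestor of $x_i$, contradicting fact (b); hence the UCP runs from $x_i$ to $x_j$, and $x_i$ is an ancestor of $x_j$, which is exactly the conclusion. I expect no serious obstacle here: the only care needed is in the bookkeeping of path types, namely confirming that every UBP counts as a ``backdoor path between $x_i$ and $x_j$'' in the sense of Lemma~\ref{lemma:y_structure}, and that ``UCP between $x_i$ and $x_j$'' genuinely exhausts the two directed-path orientations --- both are immediate from the definitions in Section~\ref{sec:prelim}.
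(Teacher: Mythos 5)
Your proposal is correct and follows essentially the same route as the paper's own proof: invoke Lemma~\ref{lemma:invisible} to get a UBP/UCP from invisibility, use Lemma~\ref{lemma:y_structure} to exclude any UBP (it is a backdoor path) and the $x_j \to x_i$ orientation, and conclude a UCP from $x_i$ to $x_j$. The extra bookkeeping you flag (every UBP is a backdoor path; the two UCP orientations exhaust the cases) is exactly what the paper leaves implicit, so no gap remains.
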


\textbf{Example 6.} In Fig.~\ref{fig:illustrative_1}c, consider the invisible pair $(x_4,x_5)$. Since this pair is not on any backdoor path or causal path of any visible pair, Lemmas~\ref{lemma:visible_edge_implication}, ~\ref{lemma:visible_non_edge_implication}, or~\ref{lemma:visible_non_edge_implication_2} cannot identify the causal direction in this pair. Since $(x_4,x_5)$ is invisible, Eq.~(\ref{eq:invisible}) is satisfied for $x_i = x_4$ and $x_j = x_5$. From the visible pairs $(x_2,x_4)$ and $(x_3,x_4)$, $x_2$ and $x_3$ are identified as parents of $x_4$ by Lemma~\ref{lemma:visible_parent}. If Lemma~\ref{lemma:visible_non_edge_implication} is satisfied with $x_i = x_3$, $x_j = x_2$, and $K=\{x_1\}$, $x_1$ can be identified as a parent of $x_3$ or a parent of $x_2$, and thus is an ancestor of $x_4$. If $x_1 \cind x_5 \mid x_4$, $x_4$ can be identified as an ancestor of $x_5$ by applying Corollary~\ref{coro:ancestorship} with $x_k = x_1$.

\begin{remark} See Appendix~\ref{appendix:example_ancestor} for an example in which Corollary~\ref{coro:ancestorship} can identify the causal direction in an invisible pair, whereas neither the regression approach alone nor conditional independence alone, e.g., FCI, can. 
\end{remark}

Lemma~\ref{lemma:y_structure} can also identify parent–child relationships in invisible pairs, as in the following corollary. 
\begin{corollary}\label{coro:parentship}
Suppose that $x_k$ is a parent of $x_i$ or a parent of $x_j$ (e.g., by Lemmas~\ref{lemma:visible_edge_implication},~\ref{lemma:visible_non_edge_implication}, or~\ref{lemma:visible_non_edge_implication_2}). Furthermore, for a fourth variable $x_u$, if $x_u$ is an ancestor of $x_i$ and $x_u \cind x_k \mid x_i$, then $x_k$ is a parent of $x_j$.   
\end{corollary}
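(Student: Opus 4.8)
The statement reduces almost immediately to the auxiliary Lemma~\ref{lemma:y_structure} once the variables are matched correctly. The plan is to argue by elimination: by hypothesis $x_k$ is a parent of $x_i$ or a parent of $x_j$, so it suffices to rule out the first alternative, i.e.\ to show that $x_k$ is \emph{not} a parent of $x_i$. In fact I will show the stronger fact that $x_k$ is not even an ancestor of $x_i$, from which ``$x_k$ is not a parent of $x_i$'' follows since a parent is (a proper) ancestor.

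To do this I would invoke Lemma~\ref{lemma:y_structure} under the relabelling in which the lemma's ``$x_k$'' is played by our $x_u$, the lemma's ``$x_j$'' is played by our $x_k$, and the lemma's ``$x_i$'' remains $x_i$. The two hypotheses required by Lemma~\ref{lemma:y_structure} in this relabelling are precisely ``$x_u$ is an ancestor of $x_i$'' and ``$x_u \cind x_k \mid x_i$'', both of which are assumed in the statement of Corollary~\ref{coro:parentship}. The lemma then delivers its conclusion~2), namely that $x_k$ is not an ancestor of $x_i$. In particular $x_k$ is not a parent of $x_i$, so the disjunction forces $x_k$ to be a parent of $x_j$, which is what we wanted. (Conclusion~1) of Lemma~\ref{lemma:y_structure}, that there is no backdoor path between $x_i$ and $x_k$, is not needed here, though it is consistent with the picture.)

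There is essentially no hard step: the only points requiring care are bookkeeping ones. First, one should confirm the four variables $x_i, x_j, x_k, x_u$ are distinct so that Lemma~\ref{lemma:y_structure} is legitimately applicable in the relabelled form; this is implicit in the setup, since $x_k$ being a parent of $x_i$ or $x_j$ and $x_u$ being a separate ``fourth variable'' means they are all different nodes. Second, one must use the convention (already implicit in Lemma~\ref{lemma:y_structure}'s phrasing ``$x_j$ is not an ancestor of $x_i$'', which there excludes the direct-parent case) that ``ancestor'' refers to connection via a directed path of length at least one, so that ruling out ancestorship of $x_i$ indeed rules out parenthood of $x_i$. With these conventions fixed, the corollary is an immediate consequence of Lemma~\ref{lemma:y_structure} together with the assumed parent-of-$x_i$-or-$x_j$ dichotomy.
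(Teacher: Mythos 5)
Your proposal is correct and is exactly the paper's argument: apply Lemma~\ref{lemma:y_structure} (with the lemma's $x_k$ and $x_j$ played by $x_u$ and $x_k$ respectively) to conclude that $x_k$ is not an ancestor, hence not a parent, of $x_i$, so the assumed disjunction forces $x_k$ to be a parent of $x_j$. No gaps.
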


\textbf{Example 7.} Consider the bow $(x_2,x_3)$ in Fig.~\ref{fig:illustrative_1}d. Corollary~\ref{coro:ancestorship} cannot identify the causal direction in this pair. If Lemma~\ref{lemma:visible_non_edge_implication} is satisfied with $x_i = x_1$, $x_j = x_2$, and $K = \{x_3\}$, one can identify that $x_3$ is a parent of $x_1$ or a parent of $x_2$. Note that this is a hard case where the regression approach alone cannot resolve. The edge $x_4\rightarrow x_1$ is visible and thus can be identified. If $x_4 \cind x_3 \mid  x_1$ holds, applying Corollary~\ref{coro:parentship} with $x_k = x_3$, $x_i = x_1$, $x_j = x_2$, and $x_u = x_4$ identifies $x_3$ as a parent of $x_2$.
\begin{remark}
This is an example in which regression and conditional independence together can identify the parent–child relationship in a bow that remains unidentifiable by regression alone.
\end{remark}

\begin{remark}
Corollary~\ref{coro:parentship} relies on the pair $(x_i,x_j)$ being visible with respect to $X$ but invisible with respect to $X \setminus \{x_k\}$. Consequently, it does not extend beyond CAM-UV. By contrast, Corollary~\ref{coro:ancestorship} extends beyond CAM-UV to a slightly larger class of models considered in~\cite{ashman2023causal}; see Appendix~\ref{appendix:visiblity_scope} for further discussion.
\end{remark}

\section{\MakeUppercase{Search methods}}\label{sec:algorithms}
In Section~\ref{subsection:cam_uv_limitation}, we discuss previously overlooked limitations of the CAM-UV algorithm, which can lead to incorrect identification of some visible pairs as invisible. Our proposed method is presented in Section~\ref{subsection:our_method}.

\subsection{Limitations of the CAM-UV algorithm}\label{subsection:cam_uv_limitation}

The CAM-UV algorithm is incomplete in identifying visible pairs and unsound in identifying invisible pairs. Soundness and completeness~\citep{Spirtes2001}  can be defined in our setting as follows. An algorithm is complete for visible edges if and only if every visible edge in the ground truth also appears as a visible edge in the algorithm’s output. Conversely, an algorithm is sound for visible edges if and only if every visible edge in the output also exists in the ground truth. Analogous definitions apply to visible non-edges and invisible pairs.

\textbf{Fig.~\ref{fig:illustrative_1}a} illustrates that CAM-UV might misclassify a visible edge as invisible, making it incomplete for visible edges and unsound for invisible pairs. Although the visible edge $x_1\rightarrow x_2$ is identifiable by Lemma~\ref{lemma:visible_parent}, the presence of the invisible parent $x_3$ of $x_2$ leads CAM-UV to wrongly label it as invisible. 

\textbf{Fig.~\ref{fig:illustrative_1}b} illustrates that CAM-UV might misclassify some visible non-edge as invisible, making it also incomplete for visible non-edges. The visible non-edge $(x_1,x_2)$ is identifiable by Lemma~\ref{lemma:visible_non_edge}. However, to block all backdoor and causal paths between $x_1$ and $x_2$, one must add $x_3$ and $x_4$ to the regression sets in Eq.~(\ref{eq:non_edge}). Furthermore, the pairs $(x_1,x_3)$, $(x_1,x_4)$, $(x_2,x_3)$, and $(x_2,x_4)$ are all invisible. In this case, CAM-UV will wrongly label the visible non-edge as invisible. 

See Appendix~\ref{appendix:cam_uv_step_by_step} for step-by-step executions of CAM-UV in these examples. 

\subsection{Proposed search method}\label{subsection:our_method}
Our proposed method CAM-UV-X, described in Algorithm~\ref{alg:CAM_UV_extended}, addresses the limitations of CAM-UV in identifying visible pairs, and leverages the new theory in this paper to infer parent–child relationships and causal directions in invisible pairs. 

CAM-UV-X additionally uses Lemmas~\ref{lemma:non_parent_condition} and~\ref{lemma:visible_non_edge_implication_2} in Appendix~\ref{appendix:additional_results}. Lemma~\ref{lemma:non_parent_condition} provides sufficient conditions to certify that $(x_i,x_j)$ is visible and $x_i$ is not a parent of $x_j$. If one can certify non-parent relations in both directions, then one can conclude that $(x_i,x_j)$ is a visible non-edge, i.e.,  a different route than using Lemma~\ref{lemma:visible_non_edge}. Lemma~\ref{lemma:visible_non_edge_implication_2} utilizes this new visible non-edge condition in the same way as Lemma~\ref{lemma:visible_non_edge_implication}.

The output of CAM-UV-X is $A$, $M_1,\ldots,M_p$, $H_{1},\ldots, H_{p}$, and $C_1,\ldots,C_p$. $A$ is the adjacency matrix over the observed variables. $A(i,j) = 1$ if $x_j$ is inferred to be a parent of $x_i$, $0$ if there is no directed edge from $x_j$ to $x_i$, and NaN (Not a Number) if $(x_i,x_j)$ is inferred to be invisible. $M_i$ is the set of ancestors of $x_i$ identified by Lemma~\ref{lemma:visible_edge_implication} and Corollary~\ref{coro:ancestorship}. $H_{i}$ is the set of nodes guaranteed not to be an ancestor of $x_i$, identified by, for example, Lemma~\ref{lemma:y_structure}. $C_k$ contains ordered pairs $[i,j]$ such that $x_k$ is a parent of $x_i$ or $x_k$ is a parent of $x_j$.  See Appendix~\ref{appendix:running_time} for the running time of CAM-UV-X.
\algrenewcommand\algorithmicrequire{\textbf{Input:}}
\algrenewcommand\algorithmicensure{\textbf{Output:}}
\begin{algorithm}[!ht]
\caption{CAM-UV-X}\label{alg:CAM_UV_extended}
\begin{algorithmic}[1]
\Require{$n\times p$ data matrix $X$ for $p$ observed variables, maximum number of parents $d$, significance level $\alpha$}
\Ensure{$A$, $\{M_1,\ldots,M_p\}$, $\{H_1,\ldots,H_p\}$, $\{C_1,\ldots,C_p\}$}

\State $A \gets$ CAM-UV($X,d,\alpha$)\;

\State Initialize $M_i \gets \emptyset$, $H_i \gets \emptyset$, $C_i \gets \emptyset$ for $i = 1,\ldots,p$

\State Find the set $S = \{ (i,j) \mid A(i,j) = A(j,i) = \text{NaN}\}$

 \For{each $(i,j) \in S, i < j$}
\State    \texttt{checkVisible}(i,j)
\EndFor
\State Find the set $I = \{ (i,j) \mid A(i,j) \ne \text{NaN}; A(j,i) \ne \text{NaN}\}$

\For{each $(i,j) \in I, i < j$}
  \State  \texttt{checkOnPath}(i,j)
\EndFor
\State Find the set $S = \{ (i,j) \mid A(i,j) = A(j,i) = \text{NaN}\}$

\For{each $(i,j) \in S$}
\State \texttt{checkCI}(i,j) 
\EndFor
\State \texttt{checkParentInvi}()
\end{algorithmic}
 \end{algorithm}

In line 1, CAM-UV is executed to obtain an initial estimate of $A$. The procedure~\texttt{checkVisible}, described in Algorithm~\ref{alg:checkVisible}, tests whether each NaN element in the current matrix $A$ can be converted to $1$, that is, a visible edge, or to $0$, i.e., a visible non-edge. 

\begin{algorithm}[!ht]
\caption{\texttt{checkVisible}}\label{alg:checkVisible}
\begin{algorithmic}[1]

\Require{indices $i$ and $j$}
\State $P_i \gets \{v\mid A(i,v) = 1\};\  P_j\gets \{v \mid A(j,v) = 1\}$

\State $Q \gets \{k \mid k\notin \{i,j\},\ A(j,k) = \text{NaN} \text{ or } A(i,k) = \text{NaN} \} \cup P_i \cup P_j$ 
    
\State $iNotParent \gets False$; $jNotParent \gets False$
    
\For{each $M\subseteq Q$ and $N \subseteq Q$}
    
\State $e\gets\widehat{\text{p-HSIC}}\big(x_i - G_1(M),$
    $x_j - G_2(N)\big)$ 
    
\If{$e > \alpha$}
 \State   $A(i,j) \gets 0$; $A(j,i) \gets 0$; \Return
\Else
\State $a_1 \gets \widehat{\text{p-HSIC}}\big(x_i - G_1(M\cup \{x_j\}),$
    $x_j - G_2(N)\big)$
    
\State $a_2 \gets \widehat{\text{p-HSIC}}\big(x_i - G_1(M),$
    $x_j - G_2(N\cup \{x_i\})\big)$
    
\If{$a_1 > \alpha$}
    \State $iNotParent \gets True$
\EndIf
\If{$a_2 > \alpha$}
 \State $jNotParent \gets True$
\EndIf
\If{$iNotParent \land jNotParent$}
\State   $A(i,j) \gets 0$; $A(j,i) \gets 0$; \Return
\EndIf
\EndIf
\EndFor
\If{$iNotParent$}
\State  $A(i,j) \gets 1$; $A(j,i) \gets 0$
\EndIf
\If{$jNotParent$}
\State $A(j,i) \gets 1$; $A(i,j) \gets 0$
\EndIf     
\end{algorithmic}
\end{algorithm}

On line~6 of \texttt{checkVisible}, Lemma~\ref{lemma:visible_non_edge} is checked. To find the functions $G_i$, we use GAM regression implemented in pyGAM~\citep{Servan18pyGAM}. Eq.~(\ref{eq:non_edge}) is satisfied if the value $e$, calculated on line 5, is greater than $\alpha$. Here, $\widehat{\text{p-HSIC}}$ is the p-value of the gamma independence test based on Hilbert–Schmidt Independence Criterion~\citep{gretton_hsic}. If so, $(x_i,x_j)$ is concluded to be a visible non-edge. 

If the condition on line 6 is not satisfied, the p-HSIC values $a_1$ and $a_2$ are calculated. If $a_1 > \alpha$ in line 11, Eq.~(\ref{eq:not_parent_1}) is satisfied, and we conclude that $x_i$ is not a parent of $x_j$ in line 12, due to Lemma~\ref{lemma:non_parent_condition} (see Appendix~\ref{appendix:additional_results}). We proceed similarly to $a_2$. If $x_i$ is not a parent of $x_j$ and $x_j$ is not a parent of $x_i$, i.e., line 17 is executed, we conclude that $(x_i,x_j)$ is a visible non-edge, due to Lemma~\ref{lemma:non_parent_condition}. 

If line 22 is reached, Eq.~(\ref{eq:visible_parent_1}) is satisfied. Strictly speaking, we only check subsets in $Q$, not all subsets of $X$ as in Eq.~(\ref{eq:visible_parent_1}). Reassuringly, this is enough; see Appendix~\ref{appendix:proof_theorem}. Then, if $x_i$ is not a parent of $x_j$, i.e., the condition on line 22 is satisfied, or if $x_j$ is not a parent of $x_i$, i.e., the condition on line 25 is satisfied, we conclude that $(x_i,x_j)$ is a visible edge due to Lemma~\ref{lemma:visible_parent}. If the conditions in both line 22 and line 25 are not satisfied, we conclude that $(x_i,x_j)$ is invisible.  

The algorithm \texttt{CAM-UV-X} then invokes the procedure \texttt{checkOnPath}, described in Algorithm~\ref{alg:check_OnPath}, to check Eq.~(\ref{eq:visible_edge_implication_1}) for every visible pair $(x_i,x_j)$ and every $x_{k_q}$ whose parent–child relationship to $x_i$ or $x_j$ is invisible. If the equation is satisfied, i.e., $isOnPath$ is $True$ at line 10, $x_{k_q}$ is concluded to be a parent of $x_i$ or a parent of $x_j$, due to Lemmas~\ref{lemma:visible_edge_implication},~\ref{lemma:visible_non_edge_implication}, and~\ref{lemma:visible_non_edge_implication_2}.

\algnewcommand\algorithmicbreak{\textbf{break}}
\algnewcommand\Break{\State \algorithmicbreak}

\begin{algorithm}[!ht]
\caption{\texttt{checkOnPath}}\label{alg:check_OnPath}
\begin{algorithmic}[1]    
\Require{indices $i$, $j$}
\Ensure{Boolean value $isOnPath$}

\For{each $x_k \in X\setminus \{x_i,x_j\} \text{ that satisfies } A(i,k) = \text{NaN} \text{ or } A(j,k) = \text{NaN}$}

\State $isOnPath \gets True$

\For{each $M \subseteq X \setminus \{x_i,x_k\}$ and $N \subseteq X \setminus \{x_j,x_k\}$} 
\State $a\gets\widehat{\text{p-HSIC}}\big(x_i - G_1(M),x_j - G_2(N)\big)$

\If{$a > \alpha$} 
  \State $isOnPath \gets False$; \Break
\EndIf  
\EndFor
\If{$isOnPath$}
\State $C_k \gets C_k \cup \{[i,j]\} \cup \{[j,i]\}$

\If{A(i,j) = 1} 
\State $M_i \gets M_i \cup \{x_k\}$; $H_k \gets H_k \cup \{x_i\}$; 
$A(k,i) \gets 0$
\ElsIf{A(j,i) = 1}

\State $M_j \gets M_j \cup \{x_k\}$; $H_k \gets H_k \cup \{x_j\}$; $A(k,j) \gets 0$
\EndIf
\EndIf
\EndFor
\end{algorithmic}
\end{algorithm}

The procedure \texttt{checkCI}, described in Algorithm~\ref{alg:check_CI}, checks conditional independence of the form $x_k \cind x_j \mid x_i$ with $(x_i,x_j)$ being invisible, and $x_k$ being an ancestor of $x_i$. $\widehat{\text{p-CI}}$ is the p-value of some conditional independence test. Some examples are the conditional mutual information test based on nearest-neighbor estimator (CMIknn from~\cite{Runge_CMIknn}) and the conditional independence test based on Gaussian process regression and distance correlations (GPDC from~\cite{GPDC}). If conditional independence is satisfied (line 3), $x_i$ is an ancestor of $x_j$ due to Lemma~\ref{lemma:y_structure} and Corollary~\ref{coro:ancestorship}.
\begin{algorithm}[!ht]
\caption{\texttt{checkCI}}\label{alg:check_CI}
\begin{algorithmic}[1]
\Require{indices $i$ and $j$}
\For{each currently identified ancestor $x_k$ of $x_i$} 
\State $e \gets \widehat{\text{p-CI}}(x = x_k, y = x_j, z = x_i)$

\If{$e > \alpha$} 
  
  \State $H_i \gets H_i\cup \{x_j\}$;  $A(i,j) \gets 0$;
  $M_j \gets M_j \cup \{x_i\}$
\EndIf
\EndFor
\end{algorithmic}
\end{algorithm}

In \texttt{checkParentInvi}, described in Algorithm~\ref{alg:enforceConsistency}, for each ordered pair $[i,j] \in C_k$ such that $A(j,k) = \text{NaN}$, we check a) whether $x_k$ is not an ancestor of $x_i$ by checking whether $x_k$ is in $H_i$ and b) whether $x_k$ is not a parent of $x_i$ by checking whether $A(i,k) = 0$. If either case is true, we conclude that $x_k$ is a parent of $x_j$. The loop repeats until there is no new change.

\begin{algorithm}[!ht]
\caption{\texttt{checkParentInvi}}\label{alg:enforceConsistency}
\begin{algorithmic}[1]
    
\Repeat
\State $isChange \gets False$

\For{$k = 1,\ldots, p$}
    \For{each ordered pair $[i,j] \in C_k$}
        \If{$(A(j,k) = \text{NaN}) \land (x_k \in H_i \lor A(i,k) = 0$)}
           \State $A(j,k) \gets 1; A(k,j) \gets 0$ 
            
            \State $isChange \gets True$
            \EndIf
        \EndFor
\EndFor
\Until{$isChange = False$}
\end{algorithmic}
\end{algorithm}

To prove the correctness of CAM-UV-X in identifying visibilities, we need the following assumption.
\setcounter{assumption}{2}
\begin{assumption}\label{assumption:correct_regression}
For any $x_i,x_j\in X$, $M \subseteq X \setminus \{x_i\}$, $N \subseteq X\setminus \{x_j\}$, let $e = \widehat{\text{p-HSIC}}\big(x_i - G_i(M),x_j - G_j(N)\big)$, where $G_i,G_j$ are GAM regression functions fitted by pyGAM. For a given $\alpha$, the following equation holds: 
\begin{equation}
\exists G_1,G_2\in \mathcal{G}:
x_i - G_1(M) \cind x_j - G_2(N) \iff e > \alpha.\nonumber
\end{equation}
\end{assumption}
This assumption means that the pyGAM regression and the HSIC independence test can identify the regression functions in $\mathcal{G}$. Note that CAM-UV is not complete in identifying visible pairs, and is not sound in identifying invisible pairs, even with this assumption. The following theorem, whose proof can be found in Appendix~\ref{appendix:proof}, holds.
\begin{theorem}\label{theorem:CAM_UV_X_sound_complete}
With Assumptions~\ref{assumption:CFC},~\ref{assumption:function_class}, and~\ref{assumption:correct_regression}, CAM-UV-X is sound and complete in identifying visible edges, visible non-edges, and invisible pairs.
\end{theorem}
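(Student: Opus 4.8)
The plan is to reduce the theorem to a collection of ``local correctness'' claims about the four subroutines and then glue them with an induction over the order in which pairs are processed. The first move is to use Assumption~\ref{assumption:correct_regression} to replace, for every fixed choice of regression sets $(M,N)$, the test ``$\widehat{\text{p-HSIC}}(x_i-G_1(M),x_j-G_2(N))>\alpha$'' by the exact predicate ``$\exists G_1,G_2\in\mathcal G:\ x_i-G_1(M)\cind x_j-G_2(N)$''. With this in hand, every loop in \texttt{checkVisible}, \texttt{checkOnPath}, and \texttt{checkCI} decides exactly the corresponding (in)dependence statement appearing in Lemmas~\ref{lemma:visible_parent}--\ref{lemma:y_structure}, Corollaries~\ref{coro:ancestorship}--\ref{coro:parentship}, and the appendix non-parent lemma~\ref{lemma:non_parent_condition}, \emph{provided the loop's ``$M,N$'' range over the same families of sets as those statements}. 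So there are two remaining obligations: (i) showing that restricting the search to $Q$ in \texttt{checkVisible} (and to the analogous bounded families elsewhere) never changes the truth value of any $\forall$- or $\exists$-test; and (ii) maintaining a soundness invariant on $A$ so that every conclusion is drawn in a state already consistent with the ground truth.

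For obligation~(ii) I would carry a single invariant: \emph{every non-NaN entry of $A$ agrees with the ground truth} (a $1$ marks a true parent, a $0$ a true non-parent) \emph{and NaN-ness is monotone} --- once an entry is written it is only ever re-asserted, never flipped. It holds after line~1 because CAM-UV is sound (though incomplete) for visible edges and visible non-edges, so every non-NaN entry it produces is correct and the NaN-NaN pairs are a superset of the true invisible pairs; a scan of Algorithms~\ref{alg:checkVisible}--\ref{alg:enforceConsistency} shows each write either fills a currently-NaN entry or re-asserts a consistent value, so monotonicity is structural and only the \emph{content} of each write needs justification. The key observation that makes \texttt{checkVisible} insensitive to the processing order is that, for the pair $(x_i,x_j)$ under consideration, the set $Q$ built on line~2 always contains \emph{every} true parent of $x_i$ and of $x_j$: a resolved true parent has its entry equal to $1$ (by the invariant), hence lies in $P_i\cup P_j$, while an unresolved true parent still has its entry equal to NaN, hence lies in the first component of $Q$; and no true parent can have been frozen to $0$, again by the invariant. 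Therefore the output of \texttt{checkVisible}$(i,j)$ is independent of how many other pairs were processed first, and by Lemmas~\ref{lemma:visible_parent},~\ref{lemma:visible_non_edge} and~\ref{lemma:non_parent_condition} it is the correct classification of $(x_i,x_j)$ as visible edge (with orientation), visible non-edge, or invisible. Completeness then follows: a true visible edge/non-edge missed by CAM-UV is still NaN-NaN after line~1, hence lies in $S$, hence is resolved correctly by \texttt{checkVisible} and, by monotonicity, is not subsequently disturbed; since the three classes partition all pairs, the pairs still NaN-NaN at the end of the \texttt{checkVisible} phase are exactly the true invisible pairs. The later phases only add orientation information inside invisible pairs and are \emph{sound} (which is all the theorem requires for them) --- \texttt{checkOnPath} by Eq.~(\ref{eq:visible_edge_implication_1}) together with Lemmas~\ref{lemma:visible_edge_implication},~\ref{lemma:visible_non_edge_implication},~\ref{lemma:visible_non_edge_implication_2}; \texttt{checkCI} by Lemma~\ref{lemma:y_structure} and Corollary~\ref{coro:ancestorship}; \texttt{checkParentInvi} by Corollary~\ref{coro:parentship} (a disjunctive syllogism from a pair recorded in some $C_k$ and a ruled-out option in $H$ or $A$), its fixed-point loop terminating because each productive iteration converts at least one NaN entry of $A$.

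Obligation~(i) is the technical heart, and it is where the new characterization of regression sets enters. For each $\exists$-test in \texttt{checkVisible} --- Eq.~(\ref{eq:non_edge}) on line~6, and the ``not a parent'' tests on lines~8--9 targeting the conditions behind Lemma~\ref{lemma:non_parent_condition} --- I would show that whenever the relevant independence is achievable with some $M,N\subseteq X$, it is already achievable with $M,N\subseteq Q$: a \emph{minimal} witness $(M,N)$ may be chosen so that $M\cup N$ consists only of variables that are parents of $x_i$ or of $x_j$ (together, in the visible-edge case, with $x_j$ itself on $x_i$'s side), and all such variables lie in $Q$ by the argument above, while adding the harmless extra members of $Q$ to the search cannot destroy a witness. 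For the $\forall$-tests --- the loop concluding ``no $(M,N)$ makes the residuals independent'', i.e.\ Eq.~(\ref{eq:visible_parent_1}) or its visible-non-edge analogue --- one instead needs that $Q$ contains no \emph{spurious} witness: for a genuinely invisible pair this is immediate, since the universal quantifier in Lemma~\ref{lemma:invisible} already ranges over all $M\subseteq X\setminus\{x_i\}$, $N\subseteq X\setminus\{x_j\}$, which contains $Q\setminus\{x_i\}$ and $Q\setminus\{x_j\}$, and the visible-edge case is covered by the corresponding containment in Lemma~\ref{lemma:visible_parent}. \texttt{checkOnPath} requires no such reduction because it already searches the full $X$; one only has to note that it is invoked with the correct visibility label for $(x_i,x_j)$, which the preceding phase guarantees.

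The main obstacle I anticipate is precisely the regression-set characterization underlying obligation~(i): proving that it always suffices to draw regression sets from $(\text{parents of }x_i)\cup(\text{parents of }x_j)$ --- equivalently, that a variable known (in the current, invariant-consistent $A$) to be a non-parent of both $x_i$ and $x_j$ is never needed in a regression set in order to create the required (in)dependence. I expect this to follow from the same d-separation and additive-structure arguments that prove Lemmas~\ref{lemma:visible_edge_implication}--\ref{lemma:visible_non_edge_implication_reverse} --- informally, that the only observed variables whose removal from a regression set can obstruct the emergence of independence are those sitting ``closest'' to $x_i$ or $x_j$ on a backdoor or causal path, and these are necessarily parents --- but making it airtight while keeping the bookkeeping with the evolving matrix $A$ consistent (so that ``currently unresolved'' is the right notion of what must be searched, and nothing genuinely needed has been prematurely set to $0$) is the delicate part of the argument.
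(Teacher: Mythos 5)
Your overall architecture matches the paper's proof almost exactly: Assumption~\ref{assumption:correct_regression} converts every p-HSIC test into the exact existential independence predicate; soundness of CAM-UV guarantees that no true parent of $x_i$ or $x_j$ has been frozen to $0$, so $Q$ always contains every true parent of both endpoints; the three visibility classes partition all pairs, so soundness and completeness for invisible pairs follow from the four claims about visible edges and non-edges; and the later phases (\texttt{checkOnPath}, \texttt{checkCI}, \texttt{checkParentInvi}) only need the soundness supplied by Lemmas~\ref{lemma:visible_edge_implication}, \ref{lemma:visible_non_edge_implication}, \ref{lemma:y_structure} and Corollaries~\ref{coro:ancestorship}, \ref{coro:parentship}. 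The one place you stop short is exactly the paper's technical core. You correctly isolate the needed claim --- that a variable which is a parent of neither $x_i$ nor $x_j$ is never required in a regression set to witness the relevant independence --- but you leave it as an anticipated obstacle ``expected to follow from d-separation and additive-structure arguments.'' The paper proves this as Propositions~\ref{proposition:2}--\ref{proposition:5} (together with Proposition~\ref{proposition:1}, which pins $x_j$ into $x_i$'s regression set for a visible edge), and the argument is shorter and more structural than you anticipate: if a witness $(M,N)$ with $x_k\in M\cup N$ exists, then $(x_i,x_j)$ is visible (edge or non-edge) with respect to $X$; since $x_k$ is a parent of neither endpoint, any UBP/UCP with respect to $X\setminus\{x_k\}$ would also be a UBP/UCP with respect to $X$, so the pair is also visible with respect to $X\setminus\{x_k\}$; applying Lemma~\ref{lemma:visible_parent} or~\ref{lemma:visible_non_edge} with $X'=X\setminus\{x_k\}$ then directly produces a witness avoiding $x_k$, and iterating over the non-parents in $M\cup N$ yields a witness contained in $Q$. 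No fresh d-separation analysis is needed --- the relative-visibility definitions you already invoke for the $\forall$-direction do all the work. With that lemma supplied your proof closes; without it, the completeness claims for visible edges and visible non-edges (and hence the soundness claim for invisible pairs, which you derive from them) are unsupported.
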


Note that Theorem~\ref{theorem:CAM_UV_X_sound_complete} does not assert soundness or completeness for identifying the causal direction within an invisible pair. Section~\ref{sec:theory} provides sufficient conditions for such identification, and therefore CAM-UV-X is sound whenever it orients an invisible pair. Completeness, however, remains open, even when restricting the class of graphs $G$ to exclude obvious non-identifiable cases such as a standalone bow. Indeed, there may exist bows that are resolvable in principle but not by the current theory. We are unaware of any such example, but we also cannot rule out this possibility, since that would require a characterization of the necessary conditions under which such graphical structures arise, which is currently unknown.

See Appendix~\ref{sec:related_works} for a discussion of related works.

\section{\MakeUppercase{Experiments}}\label{sec:experiment}
\subsection{Illustrative examples}\label{sec:experiment_illustrative}
We demonstrate that CAM-UV-X can address the limitations of CAM-UV discussed in the previous section by applying it to Figs.~\ref{fig:illustrative_1}a and~\ref{fig:illustrative_1}b. The DGP in Eq.~(\ref{eq:CAM_UV}) is defined as follows. The function $f_{j}^{(i)}(x_j)$ is set to $(x_j + a)^c + b$ with random coefficients $a$, $b$, and $c$. Note that this function belongs to GAMs. We use the same setting for $f_{k}^{(i)}$. The noise terms $n_i$ are Gaussian. For each graph, $100$ datasets, each of $500$ samples, were generated. We ran CAM-UV and CAM-UV-X with the significance level $\alpha = 0.1$. In CAM-UV-X, we used CMIknn from~\cite{Runge_CMIknn} as the conditional independence estimator. 

We measured the success rate of identifying the visible edge $x_1\rightarrow x_2$ and $x_3$ as an ancestor of $x_2$ in Fig.~\ref{fig:illustrative_1}a, and identifying the visible non-edge $(x_1,x_2)$ in Fig.~\ref{fig:illustrative_1}b. Figure~\ref{fig:illustrative_2} shows the results. CAM-UV-X successfully addresses the limitations of CAM-UV discussed in Section~\ref{subsection:cam_uv_limitation}.

\begin{figure}[!ht]
\centering
\includegraphics[width = \columnwidth]{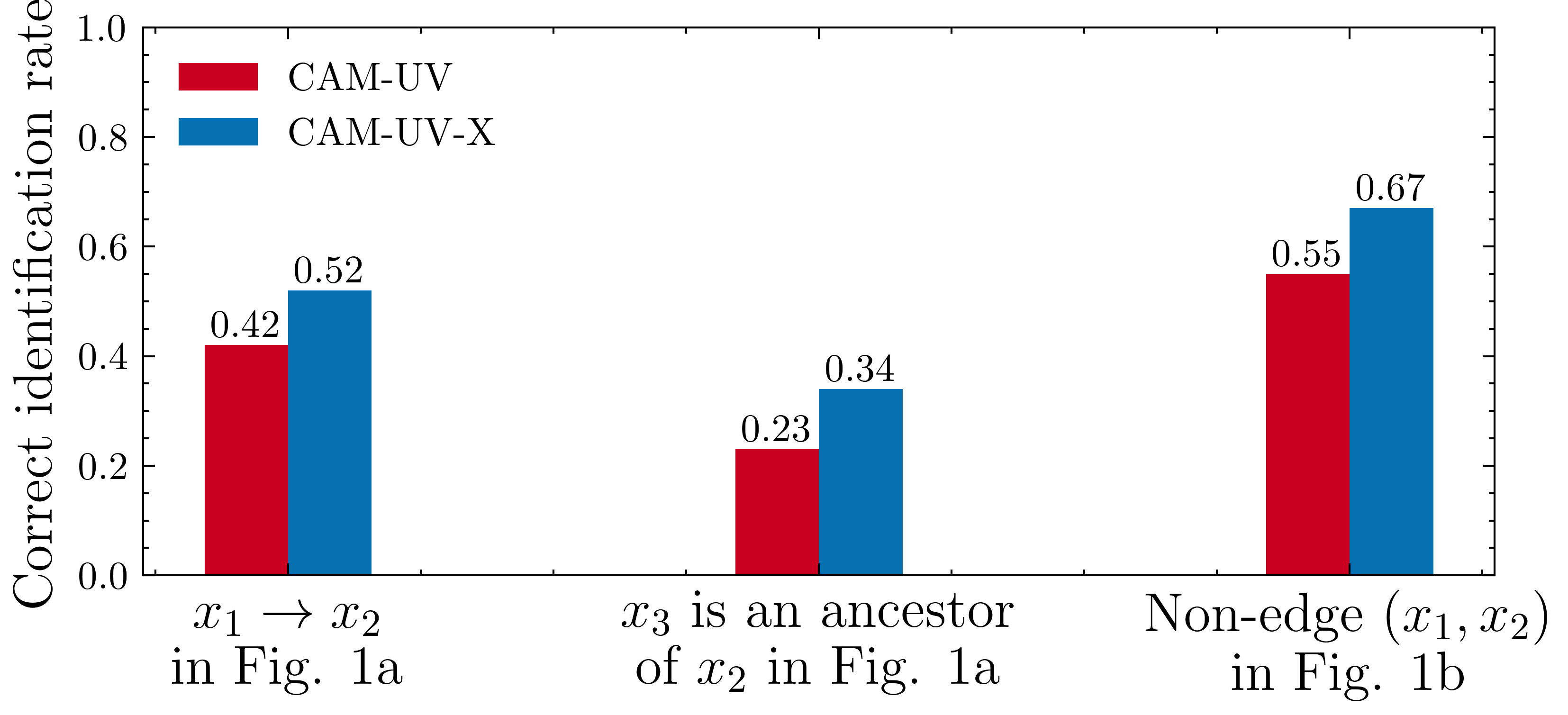}
\caption{Performance on the graphs in Figs.~\ref{fig:illustrative_1}a and~\ref{fig:illustrative_1}b.}\label{fig:illustrative_2}
\end{figure}
\subsection{Random graph experiment}
We investigate CAM-UV-X using simulated data generated from Eq.~(\ref{eq:CAM_UV}). We report the results for the Barab{\'{a}}si–Albert (BA) model~\citep{ba_model} here; results for the Erd\"{o}s-R\'{e}nyi (ER) model~\citep{erdos59a} are deferred to Appendix~\ref{appendix:experiment_details}.

We generate $50$ BA graphs with $40$ nodes, where each node has five children. For each graph, we create data using the same process as in Section~\ref{sec:experiment_illustrative}. We then randomly select $10$ variables and form the final dataset using only these $10$ variables. Each dataset contains $500$ samples. In addition to CAM-UV, we also include an adaptation of~\cite{Schultheiss_2024}, called S-B, as another baseline. See Appendix~\ref{appendix:experiment_details} for details.

We ran the algorithms with significance levels $\alpha = 0.05$, $0.1$, and $0.2$. We show results of two tasks: identifying the adjacency matrix and identifying ancestor relations between the observed variables, in Figs.~\ref{fig:BA_gaussian_combine}a and~\ref{fig:BA_gaussian_combine}b, respectively. See Appendix~\ref{appendix:experiment_details} for definitions of evaluation metrics used in each task. 

\begin{figure}[!h]
\centering
\includegraphics[width = \columnwidth]{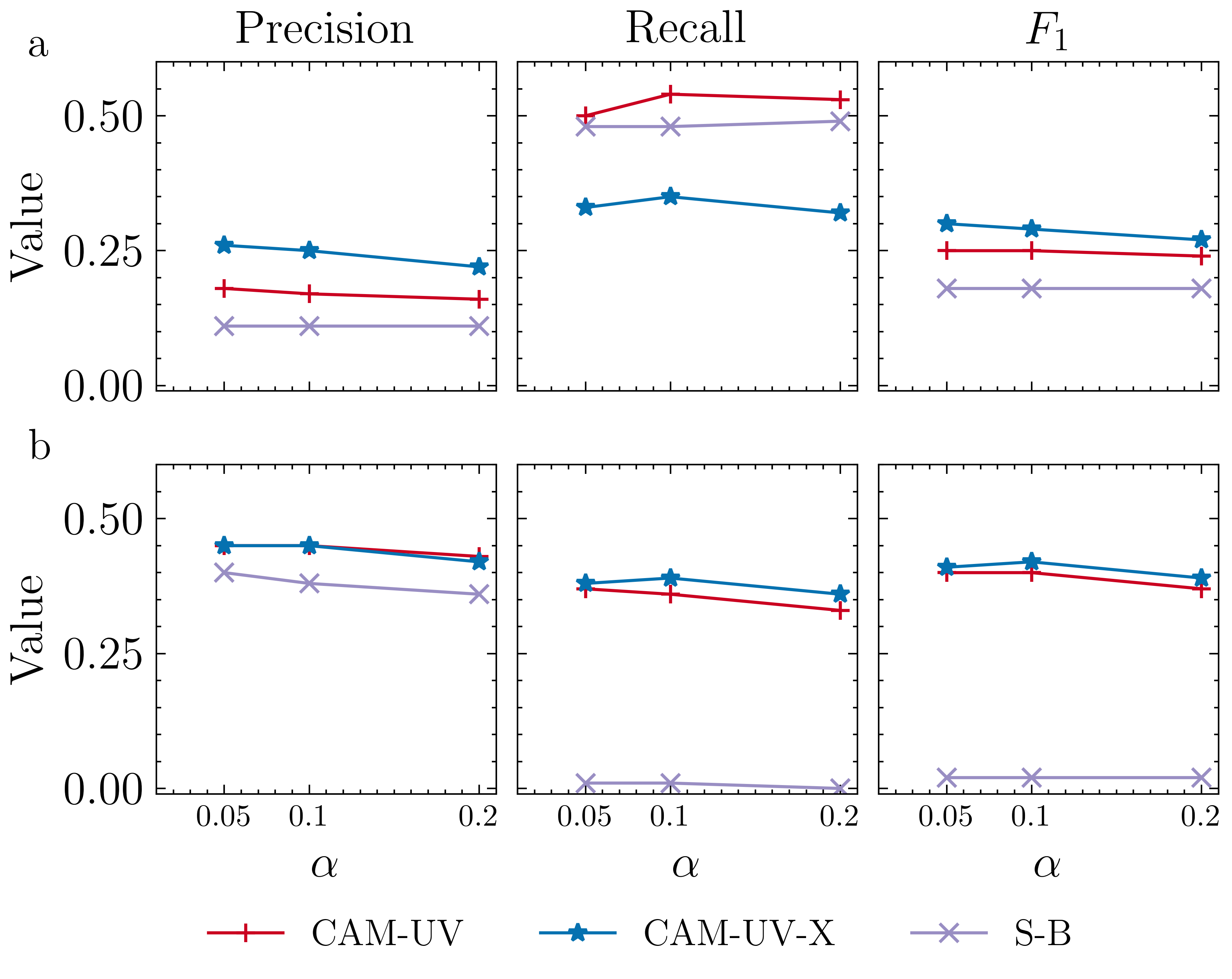}
\caption{Performance in BA random graphs with Gaussian noise. a: identifying the adjacency matrix, b: identifying ancestral relationships.}\label{fig:BA_gaussian_combine}
\end{figure}

For the identification of the adjacency matrix, CAM-UV-X achieved higher precision and F1 scores than CAM-UV. For the identification of ancestor relationships, CAM-UV-X achieved higher recall and F1 scores than CAM-UV. The remaining results in Appendix~\ref{appendix:ER_results} also show that CAM-UV-X compares favorably with the baselines.

\subsection{Real-world sociological data}
We present in Fig.~\ref{fig:status_attainment_model_result} the results of applying CAM-UV and CAM-UV-X to a sociological dataset~\citep{Shimizu11JMLR}. The data consisted of six observed variables: father's occupation ($x_1$), son's income ($x_2$), father's education ($x_3$), son's occupation ($x_4$), son's education ($x_5$), and number of siblings ($x_6$). 

 Compared to CAM-UV, CAM-UV-X pruned more spurious bidirected edges, namely edges that CAM-UV should have pruned if it were sound in identifying invisible pairs. For example, CAM-UV-X replaced the bidirected edge between $x_1$ and $x_4$ in the CAM-UV result by a directed edge from $x_1$ to $x_4$, which is consistent with the ground truth. 
 
 Furthermore, CAM-UV-X can provide more information for invisible pairs than CAM-UV. For example, for the pairs $(x_1,x_3)$ and $(x_2,x_4)$, while there is a UBP/UCP between these pairs, CAM-UV-X asserts that each pair is not adjacent. See Appendix~\ref{appendix:sociology_data} for details of the experiment settings. 

\begin{figure}[t]
\centering
\includegraphics[width = \columnwidth]{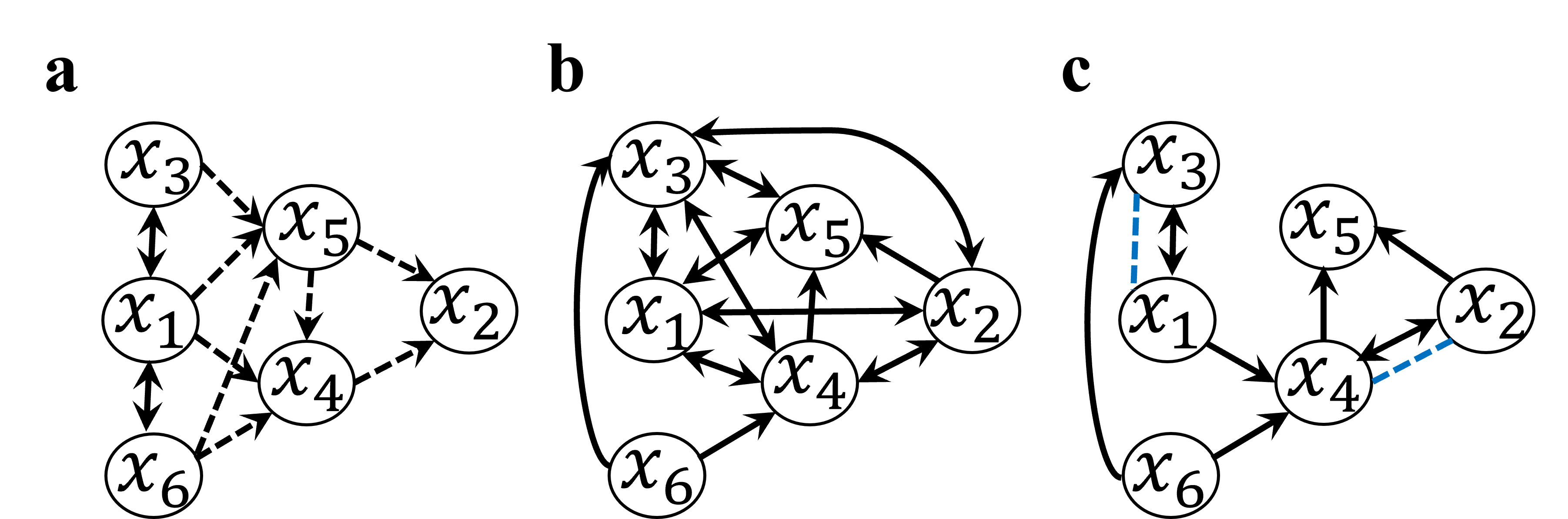}
\caption{Results on sociological data. \textbf{a}: ground truth based on domain knowledge~\citep{Duncan72book}. A bidirected edge indicates that the relationship is not modeled. A dashed directed edge represents an ancestor relationship. \textbf{b}: result by CAM-UV. A solid directed edge denotes a visible parent–child relationship (adjacency). An empty edge denotes a visible non-edge (non-adjacency). A bidirected edge denotes an invisible pair. \textbf{c}: result of CAM-UV-X. Solid directed/bidirected edges and empty edges have the same meaning as those of CAM-UV. A dashed undirected edge in blue, unique to CAM-UV-X and appearing only in an invisible pair connected by a bidirected edge, indicates non-adjacency.}\label{fig:status_attainment_model_result}
\end{figure}

\section{\MakeUppercase{Conclusions}}\label{sec:concluding}
We provided new identifiability results for causal additive models with hidden variables, including cases where bows are fully identified. Furthermore, we introduced the CAM-UV-X algorithm, proved its soundness and completeness, and demonstrated that it performs comparably to state-of-the-art methods.

Our current theory does not fully utilize all constraints implied by d-separation among the observed variables. A more refined characterization of the regression sets that are sufficient or necessary to certify visibility or invisibility could lead to more efficient algorithms. Furthermore, systematically investigating the gap between sufficient and necessary conditions in CAMs with hidden variables remains an important open problem. Finally, extending the use of conditional independence between observed variables and regression residuals to other classes of causal models offers a promising direction for future research.

\subsubsection*{Acknowledgements}
This work was partially supported by the Japan Science and Technology Agency (JST) under CREST Grant Number JPMJCR22D2 and by the Japan Society for the Promotion of Science (JSPS) under KAKENHI Grant Numbers  JP24K20741 and JP25K03084.


\section*{Checklist}

\begin{enumerate}

  \item For all models and algorithms presented, check if you include:
  \begin{enumerate}
    \item A clear description of the mathematical setting, assumptions, algorithm, and/or model. [Yes]
    \item An analysis of the properties and complexity (time, space, sample size) of any algorithm. [Yes]
    \item (Optional) Anonymized source code, with specification of all dependencies, including external libraries. [Yes]
  \end{enumerate}

  \item For any theoretical claim, check if you include:
  \begin{enumerate}
    \item Statements of the full set of assumptions of all theoretical results. [Yes]
    \item Complete proofs of all theoretical results. [Yes]
    \item Clear explanations of any assumptions. [Yes]     
  \end{enumerate}

  \item For all figures and tables that present empirical results, check if you include:
  \begin{enumerate}
    \item The code, data, and instructions needed to reproduce the main experimental results (either in the supplemental material or as a URL). [Yes]
    \item All the training details (e.g., data splits, hyperparameters, how they were chosen). [Yes]
    \item A clear definition of the specific measure or statistics and error bars (e.g., with respect to the random seed after running experiments multiple times). [Yes]
    \item A description of the computing infrastructure used (e.g., type of GPUs, internal cluster, or cloud provider). [Yes]
  \end{enumerate}

  \item If you are using existing assets (e.g., code, data, models) or curating/releasing new assets, check if you include:
  \begin{enumerate}
    \item Citations of the creator if your work uses existing assets. [Not Applicable]
    \item The license information of the assets, if applicable. [Yes]
    \item New assets either in the supplemental material or as a URL, if applicable. [Yes]
    \item Information about consent from data providers/curators. [Not Applicable]
    \item Discussion of sensible content if applicable, e.g., personally identifiable information or offensive content. [Not Applicable]
  \end{enumerate}

  \item If you used crowdsourcing or conducted research with human subjects, check if you include:
  \begin{enumerate}
    \item The full text of instructions given to participants and screenshots. [Not Applicable]
    \item Descriptions of potential participant risks, with links to Institutional Review Board (IRB) approvals if applicable. [Not Applicable]
    \item The estimated hourly wage paid to participants and the total amount spent on participant compensation. [Not Applicable]
  \end{enumerate}

\end{enumerate}

\clearpage

\appendix
\onecolumn
\thispagestyle{empty}
\aistatstitle{Causal Additive Models with Unobserved Causal Paths and Backdoor Paths: \\
Supplementary Materials}

\renewcommand\thefigure{\thesection.\arabic{figure}}    
\setcounter{figure}{0}  
\renewcommand\thetable{\thesection.\arabic{table}}    
\setcounter{table}{0} 
\section{\MakeUppercase{Assumptions of the CAM-UV model}}\label{appendix:assumption}

\subsection{CFC implies Assumption~1 of~\cite{maeda21a}} 
Assumption~1 of~\cite{maeda21a} is stated as the following Assumption~1b.

\textbf{Assumption 1b.}
If variables $v_i$ and $v_j$ have terms involving functions of the same external effect $n_k$, then $v_i$ and $v_j$ are mutually dependent, i.e., $(n_k \notcind v_i) \land (n_k \notcind v_j) \Rightarrow v_i \notcind v_j$.

In our paper, we assume the standard CFC and Assumption~\ref{assumption:function_class}. Since the proofs of Lemmas~\ref{lemma:visible_parent},~\ref{lemma:visible_non_edge}, and~\ref{lemma:invisible} in~\cite{maeda21a} rely crucially on Assumption~1b, we prove that CFC implies Assumption~1b as follows. 
\begin{proof}
Assume CFC. If variables $v_i$ and $v_j$ have terms involving functions of the same external effect $n_k$, there is a confounder between $v_i$ and $v_j$, or a directed path from $v_i$ to $v_j$, or a directed path from $v_j$ to $v_i$. This implies that $v_i$ and $v_j$ are not d-separated by the empty set. By CFC, this implies that $v_i$ is not independent of $v_j$. Thus, Assumption 1b is true under CFC. This means CFC implies Assumption 1b.
\end{proof}

\section{\MakeUppercase{Scope of the visibility and invisibility results}}\label{appendix:visiblity_scope}
\cite{ashman2023causal} extended Lemmas 1-3 of~\cite{maeda21a}, i.e., visibility/invisibility with respect to $X$, to a broader class of models where the causal effects between the observed and unobserved parents are separable, while the effects among the observed parents might not be separable:
\begin{equation}
v_i = f_i(P_i) + g_i(R_i) + n_i, \label{eq:ashman_model}
\end{equation}
where $P_i$ and $R_i$ are the sets of observed and unobserved parents of $v_i$, respectively. The functions $f_i$ and $g_i$ are non-linear functions. Equation~(\ref{eq:ashman_model}) is equivalent to Eq.~(5) of~\cite{ashman2023causal}.

Beyond the model in Eq.~(\ref{eq:ashman_model}), visibility/invisibility with respect to $X$ cannot be extended further to more general ANMs, as shown in the following example. Consider the model: $x_2 \rightarrow x_3$, $x_1 \rightarrow x_3$, $U \rightarrow x_3$, $U \rightarrow x_1$. $x_2$ and $U$ are external. The equation for $x_3$ is $x_3 = \sin(x_1 + x_2 + U) + n_3$. This is a more general ANM than the model in Eq.~(\ref{eq:ashman_model}), since the unobserved parent $U$ is not separable from the observed parents $x_1$ and $x_2$ in the equation of $x_3$.

In this example, there is no UBP/UCP between $x_2$ and $x_3$, and $x_2$ is a parent of $x_3$. However, there is no regression function $G$ of two variables $x_1$ and $x_2$ such that $x_3 - G(x_1,x_2)$ can be independent of $x_2$, due to the unobserved factor $U$ inside the non-separable function $sin(x_1 + x_2 + U)$. This means the lemma about visible parents does not hold.

Lemmas~\ref{lemma:visible_parent},~\ref{lemma:visible_non_edge}, and ~\ref{lemma:invisible} in this paper, however, cannot be extended to the model in Eq.~(\ref{eq:ashman_model}). These lemmas introduce the new concept of visibility/invisibility with respect to a subset $X' \subset X$, and thus require that the model class must be closed under reclassifying an observed variable as unobserved. CAM-UV satisfies this closure property, while the model in Eq.~(\ref{eq:ashman_model}) of~\cite{ashman2023causal} does not.

Therefore, all results in this paper that rely on invisibility with respect to a proper subset $X' \subset X$ can only hold in CAM-UV, not in the model of Eq.~(\ref{eq:ashman_model}).

On the other hand, since Corollary~\ref{coro:ancestorship} only requires invisibility with respect to $X$, it holds in the larger model of Eq.~(\ref{eq:ashman_model}).

\section{\MakeUppercase{Examples where the parent–child relationship in an invisible pair is identified by the regression approach}}\label{appendix:example_parent_child_regression}
\textbf{Fig.~\ref{fig:example_appendix}a}: There are two visible pairs in this example: $(x_2,x_3)$ is a visible non-edge and $x_1\rightarrow x_2$ is a visible edge. Applying Lemma~\ref{lemma:visible_edge_implication} to the visible edge $x_1 \rightarrow x_2$ with $K = \{x_3\}$, we get that $x_3$ is a parent of either $x_1$ or $x_2$. However, since $(x_2,x_3)$ is a visible non-edge, $x_3$ cannot be a parent of $x_2$. Therefore, $x_3$ can be identified as a parent of $x_1$ by the regression approach. 

\textbf{Fig.~\ref{fig:example_appendix}b}: This is an example where both $(x_{k_q},x_i)$ and $(x_{k_q},x_j)$ are invisible, and are bows. There are two visible pairs in this example: $x_4 \rightarrow x_3$ is a visible edge, and $(x_1,x_2)$ is a visible non-edge. Applying Lemma~\ref{lemma:visible_non_edge_implication} to $(x_1,x_2)$ with $K = \{x_3\}$, we get that $x_3$ is a parent of either $x_1$ or $x_2$. Note that both $(x_1,x_3)$ and $(x_2,x_3)$ are invisible, and are also bows. Applying Lemma~\ref{lemma:visible_edge_implication} to $x_4 \rightarrow x_3$ with $K = \{x_1\}$, we get that $x_1$ is an ancestor of $x_3$. Therefore, $x_3$ must be a parent of $x_2$. Thus, the parent–child relationship in the bow $(x_3,x_2)$ is fully resolved.

\begin{figure}[!ht]
\centering
\includegraphics[width = 0.75\columnwidth]{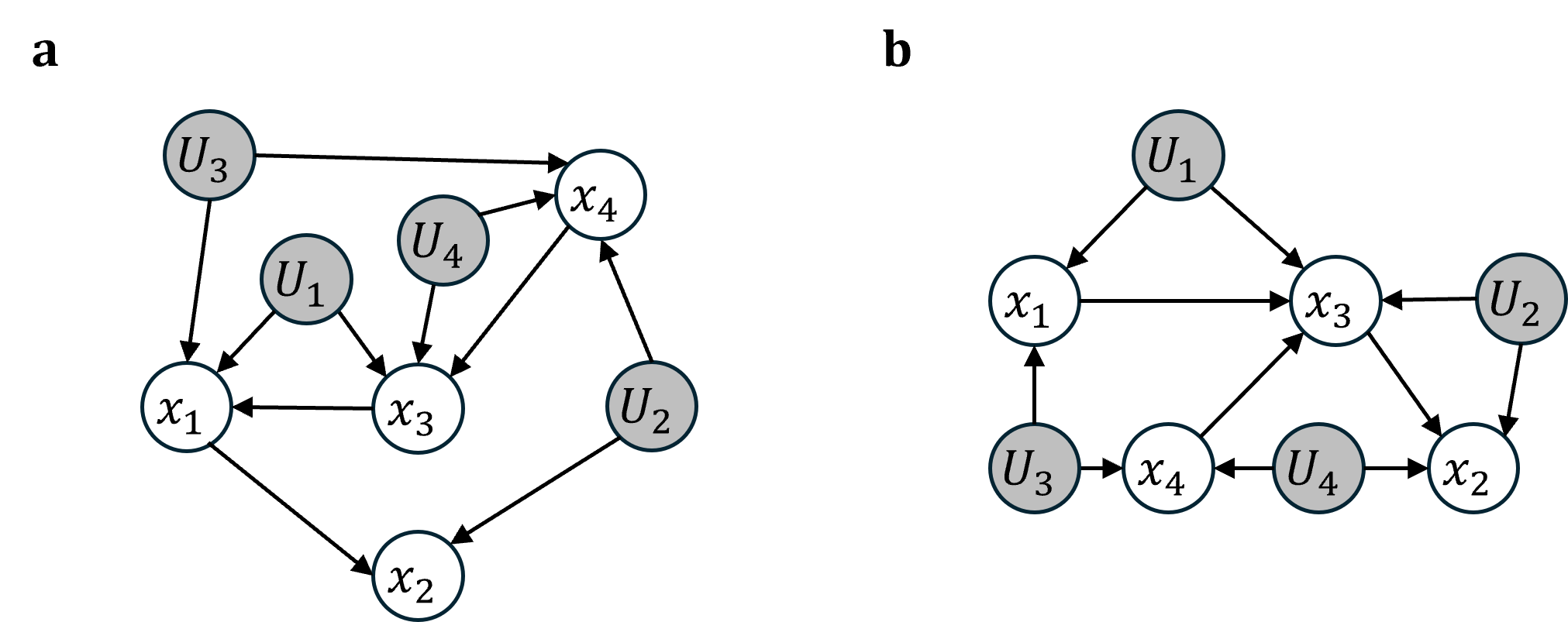}
\caption{Examples where the regression approach alone is sufficient to resolve parent–child relationships in an invisible pair.\label{fig:example_appendix}}
\end{figure}

\section{\MakeUppercase{An example where Corollary~\ref{coro:ancestorship} can identify the causal direction in an invisible pair}}\label{appendix:example_ancestor}

Consider Fig.~\ref{fig:ancestor_example_appendix}. The pair $(x_2,x_3)$ is invisible, due to the UCP $x_2 \rightarrow U_1 \rightarrow x_3$.

\begin{figure}[!ht]
\centering
\includegraphics[width = 0.25\columnwidth]{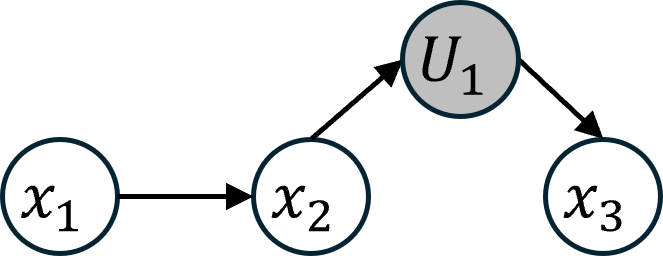}
\caption{Corollary~\ref{coro:ancestorship} can identify the ancestor relationship, i.e., causal direction, in the invisible pair $(x_2,x_3)$, whereas neither the regression approach alone nor conditional independence alone, e.g., the FCI framework, can.\label{fig:ancestor_example_appendix}}
\end{figure}

\textbf{Using Corollary~\ref{coro:ancestorship}}. $x_1 \rightarrow x_2$ is a visible edge, and is identifiable by Lemma~\ref{lemma:visible_parent}. Thus, $x_1$ can be identified as a parent of $x_2$, and therefore an ancestor of $x_2$. $(x_2,x_3)$ can be identified as an invisible pair by Lemma~\ref{lemma:invisible}. Furthermore, $x_1 \cind x_3 \mid x_2$ if we assume the causal Markov condition. Therefore, the condition of Corollary~\ref{coro:ancestorship} is satisfied, and thus $x_2$ can be identified as an ancestor of $x_3$ by Corollary~\ref{coro:ancestorship}.

\textbf{Using the regression approach alone.} $x_1 \rightarrow x_2$ can be identified as a visible edge. $(x_2,x_3)$ is identified as invisible, due to Lemma~\ref{lemma:invisible}. The pair $(x_1,x_3)$ is invisible, due to the UCP $x_1 \rightarrow x_2 \rightarrow U_1 \rightarrow x_3$. Thus, $(x_1,x_3)$ is identified as invisible by Lemma~\ref{lemma:invisible}. Lemmas~\ref{lemma:visible_edge_implication} and~\ref{lemma:visible_non_edge_implication} cannot be applied in this example. Therefore, the causal direction in $(x_2,x_3)$ is unidentifiable by the regression approach alone.

\textbf{Using conditional independence alone}. With the causal Markov condition and CFC, step~1 of FCI outputs the skeleton $x_1 \circ \mathdash \circ x_2 \circ \mathdash \circ x_3$, where there is one unshielded triple $x_1 \circ \mathdash \circ x_2 \circ \mathdash \circ x_3$, with $x_1 \cind x_3 \mid \{x_2\}$. Since the middle node $x_2$ of this unshielded triple belongs to the $d$-separation set $\{x_2\}$, one cannot identify more causal directions. Therefore, the final output of FCI is $x_1 \circ \mathdash \circ x_2 \circ \mathdash \circ x_3$, where the causal direction between $x_2$ and $x_3$ is unidentifiable.

\section{\MakeUppercase{Related works}}\label{sec:related_works}
CAMs~\citep{Buhlmann14CAM} belong to a subclass of the Additive Noise Models (ANMs)~\citep{Hoyer09NIPS}, which are causal models that assume nonlinear causal functions with additive noise terms, but the causal effects can be non-additive. Another important causal model is the Linear Non-Gaussian Acyclic Model (LiNGAM)~\citep{Shimizu06JMLR}, which assumes linear causal relationships with non-Gaussian noise.

A key extension of these causal models involves cases with hidden common causes~\citep{Hoyer08IJAR,Zhang10UAI-GP,Tashiro14NECO,Salehkaleybar2020learning}. To address such scenarios, methods like the Repetitive Causal Discovery (RCD) algorithm~\citep{Maeda20AISTATS} and the CAM-UV (Causal Additive Models with Unobserved Variables) algorithm~\citep{maeda21a} have been developed. CAM-UV has been applied to integrate causal graphs estimated from non-overlapping variable sets~\citep{Suzuki2026}.

Estimation approaches for causal discovery can be generally categorized into three groups: constraint-based methods~\citep{Spirtes91PC,Spirtes95FCI}, score-based methods~\citep{Chickering2002}, and continuous-optimization-based methods~\citep{Zheng18Neurips,Bhattacharya21ABC}. Additionally, a hybrid approach that combines the ideas of constraint-based and score-based methods has been proposed for non-parametric cases~\citep{Ogarrio16Hybrid}.

\section{\MakeUppercase{Running time of CAM-UV-X}}\label{appendix:running_time}
Assume that each regression, each calculation of p-HSIC, and each conditional independence check costs $O(n^2)$, where $n$ is the number of samples. After executing CAM-UV, the remaining running time of CAM-UV-X is bounded by the time of executing lines 4-6, where \texttt{checkVisible} is executed for every NaN entry of the adjacency matrix, and by the time of executing lines 8-9, where \texttt{checkOnPath} is executed for every non-NaN entry of the adjacency matrix. Let the two times be $t_1$ and $t_2$, respectively. 
\begin{itemize}
\item For $t_1$: The number of NaN entries in the adjacency matrix is $O(p^2)$. In each execution of \texttt{checkVisible}, in the worst case, one needs to check all sets $M \subseteq Q$ and $N \subseteq Q$, where the size of $Q$ is at most $p-2$. Therefore, $t_1$ is $O(p^2 \times 2^{p - 2} \times 2^{p - 2} \times n^2)$.
\item For $t_2$: The number of non-NaN entries of the adjacency matrix is $O(p^2)$. For each execution of \texttt{checkOnPath}, a third variable $x_k$ is considered. For each $x_k$, in the worst case, the procedure checks all sets $M \subseteq X\setminus \{x_i,x_k\}$ and $N \subseteq X\setminus \{x_j,x_k\}$. The size of $X\setminus \{x_i,x_k\}$ is $p-2$. Therefore, $t_2$ is $O(p^2 \times p \times 2^{p - 2} \times 2^{p - 2} \times n^2)$.
\end{itemize}
Given that the running time of CAM-UV is $O(p2^{p}n^2)$~\citep{maeda21a}, the running time of CAM-UV-X is $O(p2^{p}n^2 + p^22^{2p - 4}n^2 + p^32^{2p-4}n^2) = O(p^32^{2p - 4}n^2)$.   

\section{\MakeUppercase{Additional theoretical results}}\label{appendix:proof}
\subsection{Some additional identifiability results}\label{appendix:additional_results}

We provide another sufficient condition for certifying that some variable $x_k$ must be a parent of $x_i$ or $x_j$, even if $(x_k,x_i)$ and $(x_k,x_j)$ are both invisible, by utilizing visible non-edges. First, we have the following sufficient condition for non-parent relations and non-edges.
\begin{lemma}\label{lemma:non_parent_condition}
If the following equation holds:
\begin{align}
&\exists G_1,G_2\in \mathcal{G}, M \subseteq X \setminus \{x_i,x_j\}, N \subseteq X \setminus \{x_i,x_j\}: \nonumber \\
&x_i - G_1(M \cup \{x_j\}) \cind x_j - G_2(N), \label{eq:not_parent_1}
\end{align}
then $(x_i,x_j)$ is not invisible, and $x_i$ is not a parent of $x_j$.

Similarly, if the following equation holds:
\begin{align}
&\exists G_1,G_2\in \mathcal{G}, M \subseteq X \setminus \{x_i,x_j\}, N \subseteq X \setminus \{x_i,x_j\}: \nonumber \\
&x_i - G_1(M) \cind x_j - G_2(N\cup \{x_i\}), \label{eq:not_parent_2}
\end{align}
then $(x_i,x_j)$ is not invisible, and $x_j$ is not a parent of $x_i$. If Eqs.~(\ref{eq:not_parent_1}) and~(\ref{eq:not_parent_2}) are satisfied, $(x_i,x_j)$ is a visible non-edge.
\end{lemma}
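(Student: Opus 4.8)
The plan is to derive every conclusion directly from the structural characterizations in Lemmas~\ref{lemma:visible_parent}, \ref{lemma:visible_non_edge} and~\ref{lemma:invisible}, by reading the hypothesized independence as a \emph{witness} that falsifies one universally quantified statement while realizing the correct existential one. No new analysis of the additive model is needed; the intuitive content (the direct term $f_i^{(j)}(x_i)$ in $x_j$'s equation cannot be removed by regressing $x_j$ on variables other than $x_i$) is already packaged inside those lemmas.

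First I would treat Eq.~(\ref{eq:not_parent_1}). Since $M\subseteq X\setminus\{x_i,x_j\}$, the regression set $M\cup\{x_j\}$ for $x_i$ satisfies $M\cup\{x_j\}\subseteq X\setminus\{x_i\}$, and $N\subseteq X\setminus\{x_i,x_j\}\subseteq X\setminus\{x_j\}$. Hence the asserted independence $x_i-G_1(M\cup\{x_j\})\cind x_j-G_2(N)$ is a counterexample to the universal dependence in Eq.~(\ref{eq:invisible}). By the equivalence in Lemma~\ref{lemma:invisible}, there is therefore no UBP or UCP between $x_i$ and $x_j$; in particular $(x_i,x_j)$ is not invisible.

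Next I would show $x_i$ is not a parent of $x_j$, arguing by contradiction. Assume $x_i$ is a parent of $x_j$; combined with the just-established absence of any UBP/UCP, the hypothesis of Lemma~\ref{lemma:visible_parent} holds with the roles of $x_i$ and $x_j$ interchanged, so Eq.~(\ref{eq:visible_parent_1}) holds in that interchanged form: for all $G_1,G_2\in\mathcal{G}$, all $M'\subseteq X\setminus\{x_i,x_j\}$ and $N'\subseteq X\setminus\{x_i\}$, one has $x_j-G_1(M')\notcind x_i-G_2(N')$. Taking $M'=N$ and $N'=M\cup\{x_j\}$, both admissible, together with the functions from Eq.~(\ref{eq:not_parent_1}) contradicts that equation. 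Hence $x_i$ is not a parent of $x_j$. The argument for Eq.~(\ref{eq:not_parent_2}) is the mirror image under $i\leftrightarrow j$: its witness places $x_i$ in the regression set of $x_j$ and omits $x_i$ from its own, giving that $(x_i,x_j)$ is not invisible and that $x_j$ is not a parent of $x_i$. Finally, when both Eq.~(\ref{eq:not_parent_1}) and Eq.~(\ref{eq:not_parent_2}) hold, the previous steps give that there is no directed edge in either direction between $x_i$ and $x_j$ and no UBP/UCP between them; these are precisely the two structural conditions characterized by Lemma~\ref{lemma:visible_non_edge}, so $(x_i,x_j)$ is a visible non-edge.

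The main obstacle I anticipate is entirely bookkeeping: getting the subset-inclusion conditions and the $i\leftrightarrow j$ relabeling of Lemma~\ref{lemma:visible_parent} exactly right, so that each exhibited witness lands inside the admissible regression sets of the intended universally quantified statement (to obtain a contradiction) and, where needed, of the intended existential one. Once the admissibility of $M\cup\{x_j\}$ and $N$ is checked carefully, the proof is a short sequence of invocations of the three characterization lemmas.
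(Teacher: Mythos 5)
Your proof is correct and follows essentially the same route as the paper's: use the independence witness to contradict Eq.~(\ref{eq:invisible}) via Lemma~\ref{lemma:invisible}, then contradict the (role-interchanged) universal dependence Eq.~(\ref{eq:visible_parent_1}) of Lemma~\ref{lemma:visible_parent} to rule out parenthood, and finally conclude the visible non-edge from the exhaustion of cases / Lemma~\ref{lemma:visible_non_edge}. The only difference is that you spell out the subset-admissibility bookkeeping that the paper leaves implicit, which is a welcome clarification rather than a deviation.
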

Eq.~(\ref{eq:not_parent_1}) means that, if the residual of regressing $x_i$ on a set containing $x_j$ is independent of the residual of regressing $x_j$ on a set that does not contain $x_i$, the pair is not invisible, and $x_i$ is not a parent of $x_j$. When Eqs.~(\ref{eq:not_parent_1}) and~(\ref{eq:not_parent_2}) are satisfied, both directions of parent relations are forbidden; thus $(x_i,x_j)$ is a visible non-edge. Instead of Eq.~(\ref{eq:non_edge}), Eqs.~(\ref{eq:not_parent_1}) and~(\ref{eq:not_parent_2}) provide an alternative, and sometimes more convenient, way to certify visible non-edges.

Similar to Lemma~\ref{lemma:visible_non_edge_implication}, we can use the above condition to identify causal relationships in potentially invisible pairs as follows.
\begin{lemma}
\label{lemma:visible_non_edge_implication_2}
Consider distinct $x_i$, $x_j$, and $x_{k_1},\ldots,x_{k_m}$. Let $K = \{x_{k_1},\ldots,x_{k_m}\}$. If Eqs.~(\ref{eq:not_parent_1}),~(\ref{eq:not_parent_2}), and~(\ref{eq:visible_edge_implication_1}) are satisfied, $(x_j,x_i)$ is a visible non-edge, and each $x_{k_q}$ is either a parent of $x_i$ or a parent of $x_j$.
\end{lemma}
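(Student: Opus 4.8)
The plan is to reduce this lemma to the already-established machinery of Lemmas~\ref{lemma:non_parent_condition} and~\ref{lemma:visible_non_edge_implication}, rather than re-deriving everything from the d-separation calculus. First I would observe that Eqs.~(\ref{eq:not_parent_1}) and~(\ref{eq:not_parent_2}) together are exactly the hypothesis of the last sentence of Lemma~\ref{lemma:non_parent_condition}, so they immediately yield that $(x_i,x_j)$ is a visible non-edge: there is no direct edge between $x_i$ and $x_j$, and there is no UBP or UCP between them with respect to $X$. This disposes of the first conclusion. The remaining work is to show that each $x_{k_q}$ is a parent of $x_i$ or a parent of $x_j$, and here Eq.~(\ref{eq:visible_edge_implication_1}) does the heavy lifting.

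For the second conclusion, fix an index $q$ and set $X'' = X \setminus \{x_{k_q}\}$. Eq.~(\ref{eq:visible_edge_implication_1}), restricted to this $q$, says precisely that for all $M \subseteq X'' \setminus \{x_i\}$, $N \subseteq X'' \setminus \{x_j\}$ and all $G_i^1, G_j^1 \in \mathcal{G}$, the residuals $x_i - G_i^1(M)$ and $x_j - G_j^1(N)$ are dependent — which is the defining condition (Eq.~(\ref{eq:invisible})) for $(x_i,x_j)$ being invisible \emph{with respect to} $X''$. By Lemma~\ref{lemma:invisible}, this means there is a UBP or UCP between $x_i$ and $x_j$ with respect to $X \setminus \{x_{k_q}\}$. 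On the other hand, from the first conclusion there is no UBP or UCP between $x_i$ and $x_j$ with respect to $X$ itself. The only way a confounding or causal path can appear when $x_{k_q}$ is removed from the observed set but be absent when it is present is if $x_{k_q}$ lies on that path and its removal "unblocks" it; since $x_{k_q}$ is an interior node of a path that becomes a UBP/UCP, a short case analysis on the two path shapes (the $x_i \leftarrow \cdots \leftarrow v \rightarrow \cdots \rightarrow x_j$ form and the $x_i \rightarrow \cdots \rightarrow x_j$ form) shows $x_{k_q}$ must be a parent of $x_i$ or a parent of $x_j$. This is the same argument used in Lemma~\ref{lemma:visible_non_edge_implication} and in the discussion following Lemma~\ref{lemma:visible_edge_implication}, so I would invoke it rather than repeat it in full.

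The main obstacle is making the "unblocking" step fully rigorous: one must rule out the alternative explanations for why removing $x_{k_q}$ creates a UBP/UCP — for instance, that $x_{k_q}$ was a collider blocking the path, or that $x_{k_q}$ is an endpoint rather than an interior vertex. The UBP/UCP definitions are about \emph{directed} subpaths and about the endpoints $v_k, v_l$ not being in $X'$, not about conditioning sets, so colliders are not the issue here; the real content is that a path which is "almost" a UBP/UCP with respect to $X$ — failing only because some interior observed node breaks the all-directed or the all-unobserved-interior requirement — has exactly $x_{k_q}$ as the offending node, and $x_{k_q}$'s position forces it to be adjacent to $x_i$ or $x_j$ with the edge oriented into that variable. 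Since Lemmas~\ref{lemma:visible_non_edge_implication} and~\ref{lemma:visible_edge_implication} already carry out this bookkeeping for structurally identical hypotheses, the cleanest write-up is to state that the proof is verbatim that of Lemma~\ref{lemma:visible_non_edge_implication} with Eq.~(\ref{eq:non_edge}) replaced by the pair Eqs.~(\ref{eq:not_parent_1})--(\ref{eq:not_parent_2}) and Lemma~\ref{lemma:non_parent_condition} used in place of Lemma~\ref{lemma:visible_non_edge}, and to point to the appendix proof of that lemma for the path analysis.
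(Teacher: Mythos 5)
Your proposal is correct and follows essentially the same route as the paper's proof: Eqs.~(\ref{eq:not_parent_1})--(\ref{eq:not_parent_2}) give the visible non-edge via Lemma~\ref{lemma:non_parent_condition}, Eq.~(\ref{eq:visible_edge_implication_1}) gives invisibility with respect to $X\setminus\{x_{k_q}\}$ via Lemma~\ref{lemma:invisible}, and the contrast between the two forces $x_{k_q}$ to be the offending endpoint of the UBP/UCP, hence a parent of $x_i$ or $x_j$. Your added remark that the "unblocking" step concerns only the $v_k,v_l\notin X'$ endpoint condition in the UBP/UCP definitions (not colliders or conditioning) is a correct and slightly more explicit justification of a step the paper leaves implicit.
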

\subsection{Proof of Proposition~\ref{prop:invisibility_switch_meaning}}
If $(x_i,x_j)$ is an invisible pair with respect to $X \setminus \{x_{k}\}$, there exists a UBP/UCP between $(x_i,x_j)$ with respect to $X \setminus \{x_k\}$. Call this UBP/UCP $P$.  
If $(x_i,x_j)$ is visible with respect to $X$, there are no UBPs/UCPs between $(x_i,x_j)$ with respect to $X$. This implies that $P$ is a UBP/UCP between $(x_i,x_j)$ with respect to $X \setminus \{x_k \}$, and is not a UBP/UCP between $(x_i,x_j)$ with respect to $X$. Thus, $x_k$ must be a parent of $x_j$ or a parent of $x_i$. 

\subsection{Proof of Lemma~\ref{lemma:visible_edge_implication}}

($\Leftarrow$):
If Eq.~(\ref{eq:visible_edge_implication_1}) is satisfied, $(x_i,x_j)$ is an invisible pair with respect to $X \setminus \{x_{k_q}\}$ due to Lemma~\ref{lemma:invisible}.  
When Eqs.~(\ref{eq:visible_edge_implication_2}) and~(\ref{eq:visible_edge_implication_3}) are satisfied, $x_j$ is a visible parent of $x_i$ with respect to $X$, due to Lemma~\ref{lemma:visible_parent}. Thus, due to Proposition~\ref{prop:invisibility_switch_meaning}, each $x_{k_q}$ must be a parent of $x_j$ or a parent of $x_i$. 

($\Rightarrow$):
Since $(x_i,x_j)$ is invisible with respect to $X\setminus \{x_{k_q}\}$, Eq.~(\ref{eq:visible_edge_implication_1}) holds, due to Lemma~\ref{lemma:invisible}. Since $x_j$ is a visible parent of $x_i$, Eqs.~(\ref{eq:visible_edge_implication_2}) and
\begin{equation}
\exists G_1,G_2\in \mathcal{G},M \subseteq X \setminus \{x_i,x_j\}, N \subseteq X \setminus \{x_i,x_j\}: 
x_i - G_1(M\cup \{x_j\}) \cind x_j - G_2(N) \label{eq:temp_1}
\end{equation}
are satisfied, due to Proposition~\ref{proposition:1}. The remaining task is to show that Eq.~(\ref{eq:visible_edge_implication_3}) holds. To do so, one way is to show that, for any $x_{k_q} \in K$, either $x_{k_q} \in M$ or $x_{k_q} \in N$ with $M,N$ satisfying Eq.~(\ref{eq:temp_1}). Suppose that this is false, i.e., there exists some $x_{k_q}$ such that $x_{k_q} \notin M$ and $x_{k_q} \notin N$, for $M,N$ satisfying Eq.~(\ref{eq:temp_1}). Then, Eq.~(\ref{eq:temp_1}) implies that:
\begin{equation}
\exists G_1,G_2\in \mathcal{G},M \subseteq X \setminus \{x_i,x_{k_q}\}, N \subseteq X \setminus \{x_i,x_j,x_{k_q}\}: 
x_i - G_1(M) \cind x_j - G_2(N),
\end{equation}
which contradicts Eq.~(\ref{eq:visible_edge_implication_1}). Therefore, any $x_{k_q} \in K$ satisfies $x_{k_q} \in M$ or $x_{k_q} \in N$.

\subsection{Proof of Lemma~\ref{lemma:visible_non_edge_implication}}

($\Leftarrow$): If Eq.~(\ref{eq:visible_edge_implication_1}) is satisfied, $(x_i,x_j)$ is invisible with respect to $X \setminus \{x_{k_q}\}$, due to  Lemma~\ref{lemma:invisible}. When Eq.~(\ref{eq:visible_non_edge_implication_1}) is satisfied, $(x_i,x_j)$ is a visible non-edge, due to Lemma~\ref{lemma:visible_non_edge}. Due to Proposition~\ref{prop:invisibility_switch_meaning}, each $x_{k_q}$ must be a parent of $x_j$ or a parent of $x_i$.  

($\Rightarrow$): Since $(x_i,x_j)$ is invisible with respect to $X\setminus \{x_{k_q}\}$, Eq.~(\ref{eq:visible_edge_implication_1}) holds, due to Lemma~\ref{lemma:invisible}. Since $(x_i,x_j)$ is a visible non-edge, we have 
\begin{equation}
\exists G_1,G_2\in \mathcal{G},M \subseteq X \setminus \{x_i, x_j\}, N \subseteq X \setminus \{x_i,x_j\}:
x_i - G_1(M) \cind x_j - G_2(N),\label{eq:temp_2}
\end{equation}
due to Lemma~\ref{lemma:visible_non_edge}. 
To show Eq.~(\ref{eq:visible_non_edge_implication_1}), we will show that for every $x_{k_q} \in K$, either $x_{k_q} \in M$ or $x_{k_q} \in N$, with $M,N$ satisfying Eq.~(\ref{eq:temp_2}). Suppose that this is false, i.e., for any $M,N$ that satisfies Eq.~(\ref{eq:temp_2}), there exists some $x_{k_q}$ such that $x_{k_q} \notin M$ and $x_{k_q} \notin N$. Eq.~(\ref{eq:temp_2}) implies that:
\begin{equation}
\exists G_1,G_2\in \mathcal{G},M \subseteq X \setminus \{x_i, x_j,x_{k_q}\}, N \subseteq X \setminus \{x_i,x_j,x_{k_q}\}:
x_i - G_1(M) \cind x_j - G_2(N),
\end{equation}
which contradicts Eq.~(\ref{eq:visible_edge_implication_1}). Therefore, every $x_{k_q} \in K$ is in $M$ or in $N$.

\subsection{Proof of Lemma~\ref{lemma:non_parent_condition}}
Assume Eq.~(\ref{eq:not_parent_1}). We will prove that $(x_i,x_j)$ is not invisible, and $x_i$ is not a parent of $x_j$, by contradiction. 

If $(x_i,x_j)$ is invisible, Eq.~(\ref{eq:invisible}) is satisfied, which contradicts Eq.~(\ref{eq:not_parent_1}). Therefore, $(x_i,x_j)$ is not invisible. 

Suppose that $x_i$ is a parent of $x_j$. This means that $x_i$ is a visible parent of $x_j$. Eq.~(\ref{eq:visible_parent_1}) is satisfied, due to Lemma~\ref{lemma:visible_parent}. However, this contradicts Eq.~(\ref{eq:not_parent_1}). Therefore, $x_i$ is not a parent of $x_j$.

When both Eqs.~(\ref{eq:not_parent_1}) and~(\ref{eq:not_parent_2}) are satisfied, $(x_i,x_j)$ is not invisible, thus one of the following three cases must happen: 1) $x_i$ is a visible parent of $x_j$, 2) $x_j$ is a visible parent of $x_i$, or 3) $(x_i,x_j)$ is a visible non-edge. However, Eq.~(\ref{eq:not_parent_1}) implies $x_i$ is not a parent of $x_j$, and Eq.~(\ref{eq:not_parent_2}) implies that $x_j$ is not a parent of $x_i$. Therefore, $(x_i,x_j)$ must be a visible non-edge.

\subsection{Proof of Lemma~\ref{lemma:visible_non_edge_implication_2}}
Eq.~(\ref{eq:visible_edge_implication_1}) implies that $(x_i,x_j)$ is invisible with respect to $X \setminus \{x_{k_q}\}$, due to  Lemma~\ref{lemma:invisible}. Eqs.~(\ref{eq:not_parent_1}) and~(\ref{eq:not_parent_2}) imply that $(x_i,x_j)$ is a visible non-edge with respect to $X$, by Lemma~\ref{lemma:non_parent_condition}. Due to Proposition~\ref{prop:invisibility_switch_meaning}, each $x_{k_q}$ must be a parent of $x_i$ or $x_j$.

\subsection{Proof of Lemma~\ref{lemma:y_structure}}
As mentioned in the main text, the lemma is standard and not new. For completeness, we record its proof here. 

We prove by contradiction.
\begin{enumerate}
\item Suppose there is a backdoor path $x_i \leftarrow \cdots \leftarrow v \rightarrow \cdots \rightarrow x_j$. Since $x_k$ is an ancestor of $x_i$, the path $x_k \rightarrow \cdots \rightarrow x_i \leftarrow \cdots \leftarrow v \rightarrow \cdots \rightarrow x_j$ exists. By conditioning on $x_i$, which is a collider on this path, the path is open and thus $x_k$ and $x_j$ cannot be independent due to CFC. This contradicts the assumption $x_k \cind x_j \mid x_i$.
\item Suppose $x_j$ is an ancestor of $x_i$. Since $x_k$ is an ancestor of $x_i$, the path $x_k \rightarrow \cdots \rightarrow x_i \leftarrow \cdots \leftarrow x_j$ exists. By conditioning on $x_i$, which is a collider on this path, the path is open and thus $x_k$ and $x_j$ cannot be independent due to CFC. This contradicts the assumption $x_k \cind x_j \mid x_i$. 
\end{enumerate}

\subsection{Proof of Corollary~\ref{coro:ancestorship}}
Eq.~(\ref{eq:invisible}) implies that a UBP/UCP must exist between $x_i$ and $x_j$, due to Lemma~\ref{lemma:invisible}. Lemma~\ref{lemma:y_structure} rules out the possibilities of any UBP and the causal direction from $x_j$ to $x_i$. Therefore, the only possible scenario is that there is a UCP from $x_i$ to $x_j$, which means $x_i$ is an ancestor of $x_j$.

\subsection{Proof of Corollary~\ref{coro:parentship}}
Lemma~\ref{lemma:y_structure} implies that $x_k$ is not an ancestor, and thus not a parent, of $x_i$. Therefore, the only possibility is that $x_k$ is a parent of $x_j$.

\subsection{Proof of Theorem~\ref{theorem:CAM_UV_X_sound_complete}}\label{appendix:proof_theorem}

We state some facts that can limit the possible contents of the regression sets $M$ and $N$ in Lemmas~\ref{lemma:visible_parent} and~\ref{lemma:visible_non_edge}. 

Lemma~\ref{lemma:visible_parent} is equivalent to the following proposition.

\begin{proposition}~\label{proposition:1} Consider $X' \subseteq X$ and $x_i,x_j \in X'$. 
$x_j$ is a visible parent of $x_i$ with respect to $X'$ if and only if Eq.~(\ref{eq:visible_parent_1}) and 
\begin{equation}
\exists G_1,G_2\in \mathcal{G},M \subseteq X' \setminus \{x_i,x_j\}, N \subseteq X' \setminus \{x_i,x_j\}:
x_i - G_1(M\cup \{x_j\}) \cind x_j - G_2(N) \label{eq:visible_parent_2_dashed}
\end{equation}
are satisfied. Eq.~(\ref{eq:visible_parent_2_dashed}) means that, in Eq.~(\ref{eq:visible_parent_2}), one must regress $x_i$ on a set that contains $x_j$.
\end{proposition}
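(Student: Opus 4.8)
The plan is to derive Proposition~\ref{proposition:1} from Lemma~\ref{lemma:visible_parent} by showing that its two hypotheses, Eq.~(\ref{eq:visible_parent_1}) together with Eq.~(\ref{eq:visible_parent_2_dashed}), are logically equivalent to the two conditions of Lemma~\ref{lemma:visible_parent}, namely Eq.~(\ref{eq:visible_parent_1}) together with Eq.~(\ref{eq:visible_parent_2}). Since Eq.~(\ref{eq:visible_parent_1}) occurs verbatim in both statements, the whole task reduces to proving that, \emph{under} Eq.~(\ref{eq:visible_parent_1}), Eq.~(\ref{eq:visible_parent_2}) holds if and only if Eq.~(\ref{eq:visible_parent_2_dashed}) holds; once this is done, Proposition~\ref{proposition:1} is just Lemma~\ref{lemma:visible_parent} restated.

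First I would handle the easy direction, Eq.~(\ref{eq:visible_parent_2_dashed}) $\Rightarrow$ Eq.~(\ref{eq:visible_parent_2}), which does not even use Eq.~(\ref{eq:visible_parent_1}). If $G_1,G_2\in\mathcal{G}$ and $M,N\subseteq X'\setminus\{x_i,x_j\}$ witness Eq.~(\ref{eq:visible_parent_2_dashed}), then $M\cup\{x_j\}\subseteq X'\setminus\{x_i\}$ and $N\subseteq X'\setminus\{x_i,x_j\}$, so the triple $(M\cup\{x_j\},\,N,\,G_1,\,G_2)$, with $G_1$ now regarded as a GAM on the index set $M\cup\{x_j\}$, is a valid witness of Eq.~(\ref{eq:visible_parent_2}). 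Presenting $G_1$ on $M\cup\{x_j\}$ rather than on $M$ changes nothing because $\mathcal{G}\subseteq$ GAMs is closed under adding an extra additive component, so this step is immediate.

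For the converse, assume Eq.~(\ref{eq:visible_parent_1}) and Eq.~(\ref{eq:visible_parent_2}), and let $G_1,G_2\in\mathcal{G}$, $M\subseteq X'\setminus\{x_i\}$, $N\subseteq X'\setminus\{x_i,x_j\}$ be a witness of Eq.~(\ref{eq:visible_parent_2}), so that $x_i-G_1(M)\cind x_j-G_2(N)$. The key observation is that $x_j\in M$: if $x_j\notin M$, then $M\subseteq X'\setminus\{x_i,x_j\}$ and $N\subseteq X'\setminus\{x_i,x_j\}\subseteq X'\setminus\{x_j\}$, so the pair $(M,N)$ lies inside the quantifier range of Eq.~(\ref{eq:visible_parent_1}), and the independence $x_i-G_1(M)\cind x_j-G_2(N)$ contradicts it. Hence $x_j\in M$; writing $M=M'\cup\{x_j\}$ with $M'=M\setminus\{x_j\}\subseteq X'\setminus\{x_i,x_j\}$ gives a witness $(M',N,G_1,G_2)$ of Eq.~(\ref{eq:visible_parent_2_dashed}). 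Combining the two directions yields the equivalence, and invoking Lemma~\ref{lemma:visible_parent} finishes the argument.

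I do not expect a substantive obstacle: the proof is essentially a quantifier-range bookkeeping argument, and the only points demanding care are (i) verifying that $N\subseteq X'\setminus\{x_i,x_j\}$ is contained in $X'\setminus\{x_j\}$, which is exactly what lets Eq.~(\ref{eq:visible_parent_1}) force $x_j\in M$, and (ii) noting that a GAM fitted on $M$ remains an admissible element of $\mathcal{G}$ when its index set is written as $M'\cup\{x_j\}$. Both are routine, so the main ``work'' is simply making sure the set inclusions in Eqs.~(\ref{eq:visible_parent_1})--(\ref{eq:visible_parent_2_dashed}) are matched precisely.
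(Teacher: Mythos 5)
Your proof is correct, and its overall shape matches the paper's: the forward direction is the trivial enlargement of the witness, and the converse hinges on showing that any witness set $M$ of Eq.~(\ref{eq:visible_parent_2}) must contain $x_j$. The one place you diverge is in how you obtain the contradiction when $x_j \notin M$. The paper notes that $M, N \subseteq X'\setminus\{x_i,x_j\}$ would make Eq.~(\ref{eq:non_edge}) hold, invokes Lemma~\ref{lemma:visible_non_edge} to conclude $(x_i,x_j)$ is a visible non-edge, and contradicts the graphical premise that $x_j$ is a parent of $x_i$. You instead observe that such an $(M,N)$ already lies in the universal quantifier range of Eq.~(\ref{eq:visible_parent_1}) (since $N \subseteq X'\setminus\{x_i,x_j\} \subseteq X'\setminus\{x_j\}$), so the independence directly contradicts Eq.~(\ref{eq:visible_parent_1}) itself. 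Your version is slightly more self-contained — it proves the purely logical equivalence of Eqs.~(\ref{eq:visible_parent_2}) and~(\ref{eq:visible_parent_2_dashed}) under Eq.~(\ref{eq:visible_parent_1}) without appealing to Lemma~\ref{lemma:visible_non_edge} or to any graphical semantics — while the paper's route makes the connection to the non-edge characterization explicit. Both are valid; your quantifier-bookkeeping observations (the inclusion $X'\setminus\{x_i,x_j\}\subseteq X'\setminus\{x_j\}$ and the closure of $\mathcal{G}$ under rewriting the index set) are exactly the points that need checking.
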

\begin{proof}
If Eqs.~(\ref{eq:visible_parent_1}) and~(\ref{eq:visible_parent_2_dashed}) are satisfied, Eqs.~(\ref{eq:visible_parent_1}) and~(\ref{eq:visible_parent_2}) are satisfied. Thus, $x_j$ is a visible parent of $x_i$ with respect to $X'$ by Lemma~\ref{lemma:visible_parent}.

Suppose that $x_j$ is a visible parent of $x_i$ with respect to $X'$. Eqs.~(\ref{eq:visible_parent_1}) and~(\ref{eq:visible_parent_2}) are satisfied by Lemma~\ref{lemma:visible_parent}. From Eq.~(\ref{eq:visible_parent_2}), we have:
\begin{equation}
\exists G'_1,G'_2\in \mathcal{G},M' \subseteq X' \setminus \{x_i\}, N' \subseteq X' \setminus \{x_i,x_j\}:
x_i - G'_1(M') \cind x_j - G'_2(N').
\end{equation}
For any $M'\subseteq X' \setminus \{x_i\}$ satisfying the above equation, we prove that $x_j$ must belong to $M'$ by contradiction. Suppose that $x_j \notin M'$. Then $M' \subseteq X'\setminus \{x_i,x_j\}$, which means that
\begin{equation}
\exists G'_1,G'_2\in \mathcal{G},M' \subseteq X' \setminus \{x_i,x_j\}, N' \subseteq X' \setminus \{x_i,x_j\}:
x_i - G'_1(M') \cind x_j - G'_2(N').
\end{equation}
By Lemma~\ref{lemma:visible_non_edge}, this implies that $(x_i,x_j)$ is a visible non-edge with respect to $X'$. However, this contradicts the fact that $x_j$ is a parent of $x_i$. Therefore, $x_j \in M'$, which means Eq.~(\ref{eq:visible_parent_2_dashed}) is true.
\end{proof}
We have the following proposition to limit the content of the regression set $M$ in Eq.~(\ref{eq:visible_parent_2_dashed}).
\begin{proposition}~\label{proposition:2}
Assume that $x_k$ is not a parent of $x_i$ and not a parent of $x_j$.  The following equation
\begin{equation}
\exists G_1,G_2\in \mathcal{G}, M \subseteq X \setminus \{x_i,x_k\}, N \subseteq X \setminus \{x_i,x_j\}: x_i - G_1(M\cup \{x_k\}) \cind x_j - G_2(N) \label{eq:visible_parent_2_a}
\end{equation}
implies that
\begin{equation}
\exists G_1,G_2\in \mathcal{G}, M \subseteq X \setminus \{x_i,x_k\}, N \subseteq X \setminus \{x_i,x_j\}: x_i - G_1(M) \cind x_j - G_2(N). \label{eq:visible_parent_2_b}
\end{equation}
In other words, one can limit the set $M$ to the union of the set of parents of $x_i$ and the set of parents of $x_j$ when checking Eq.~(\ref{eq:visible_parent_2_dashed}).
\end{proposition}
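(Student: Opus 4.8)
The plan is to argue at the level of \emph{external effects}: since $x_k$ is not a direct cause of $x_i$, forcing $x_k$ into the regression set of $x_i$ can only \emph{enlarge}, never shrink, the collection of external noise terms on which the residual of $x_i$ depends. Because residual independence is governed by disjointness of these collections — through the same mechanism that underlies Lemmas~\ref{lemma:visible_parent}--\ref{lemma:invisible}, namely CFC together with Assumption~\ref{assumption:function_class} — deleting $x_k$ from the regression set of $x_i$ preserves independence with $x_j - G_2(N)$, which is exactly what Eq.~(\ref{eq:visible_parent_2_b}) asks for.

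Concretely, take a witness $(G_1,G_2,M,N)$ for Eq.~(\ref{eq:visible_parent_2_a}) and write $G_1$, which is a GAM, as $G_1(M\cup\{x_k\}) = \tilde G_1(M) + g_k(x_k)$, where $\tilde G_1(M) = \sum_{x_m\in M} g_m(x_m)$ collects the components on $M$ and $g_k$ is the component on $x_k$; thus $x_i - G_1(M\cup\{x_k\}) = \bigl(x_i - \tilde G_1(M)\bigr) - g_k(x_k)$. If $g_k$ is constant, then $x_i - \tilde G_1(M)$ differs from $x_i - G_1(M\cup\{x_k\})$ only by an additive constant, so it is still independent of $x_j - G_2(N)$, and $(\tilde G_1,G_2,M,N)$ — with $x_k\notin M$ — witnesses Eq.~(\ref{eq:visible_parent_2_b}). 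If $g_k$ is non-constant, the step to establish is that the set of external effects on which $x_i - \tilde G_1(M)$ depends is contained in the set on which $x_i - G_1(M\cup\{x_k\})$ depends. Subtracting the additive term $g_k(x_k)$ can only \emph{introduce} (a subset of) the external effects of $x_k$; it cannot cancel the dependence of $x_i - \tilde G_1(M)$ on any noise term $n_\ell$, because — $x_k$ being neither a parent of $x_i$ nor a member of $M$ — the contribution of $n_\ell$ to $x_i - \tilde G_1(M)$ is not a function of $x_k$ alone: in the expansion of $x_i$ into nested compositions of noise terms used by~\cite{maeda21a}, $x_k$ never appears as a standalone additive summand, only inside the nonlinear functions $f_p^{(i)}$ of parents $p$ of $x_i$ that are downstream of $x_k$, and no additive function of $x_k$ can match such a term. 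Accidental cancellations are excluded by CFC and Assumption~\ref{assumption:function_class}, exactly as in the proofs of Lemmas~\ref{lemma:visible_parent}--\ref{lemma:invisible}.

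Granting this containment, the conclusion follows at once: by the characterization reused from~\cite{maeda21a}, $x_i - G_1(M\cup\{x_k\}) \cind x_j - G_2(N)$ means the external-effect sets of the two residuals are disjoint; since the external-effect set of $x_i - \tilde G_1(M)$ is a subset of that of $x_i - G_1(M\cup\{x_k\})$, it too is disjoint from that of $x_j - G_2(N)$, hence $x_i - \tilde G_1(M) \cind x_j - G_2(N)$, giving Eq.~(\ref{eq:visible_parent_2_b}). The extra hypothesis ``$x_k$ is not a parent of $x_j$'' is not actually needed for this implication; it only serves to phrase the consequence, since iterating the implication lets one delete from $M$ every element that is neither a parent of $x_i$ nor a parent of $x_j$, so one may always take $M$ within the union of the parents of $x_i$ and the parents of $x_j$ when checking Eq.~(\ref{eq:visible_parent_2_dashed}).

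I expect the middle step — ruling out that $g_k(x_k)$ cancels part of $x_i$'s dependence on some external effect — to be the main obstacle, and it is precisely here that $x_k\notin P_i$ is essential, as it forces $x_k$ to enter the structural expansion of $x_i$ only through nonlinear compositions. A more self-contained but less direct alternative is to first apply Lemma~\ref{lemma:invisible} to Eq.~(\ref{eq:visible_parent_2_a}): since $M\cup\{x_k\}\subseteq X\setminus\{x_i\}$, the residual independence shows $(x_i,x_j)$ is not invisible, so $x_j$ is a visible parent of $x_i$, or $x_i$ is a visible parent of $x_j$, or $(x_i,x_j)$ is a visible non-edge; in each case one reads off a witness for Eq.~(\ref{eq:visible_parent_2_b}) from Lemma~\ref{lemma:visible_parent}/Proposition~\ref{proposition:1} or Lemma~\ref{lemma:visible_non_edge}, using a short induction on $|M|$ to handle the case where that witness again happens to contain $x_k$.
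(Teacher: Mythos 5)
Your main route does not match the paper's and has a gap at exactly the step you yourself flag as ``the main obstacle''. You reduce the claim to: the set of external effects on which $x_i-\tilde G_1(M)$ depends is contained in the set on which $x_i-\tilde G_1(M)-g_k(x_k)$ depends, i.e., subtracting the additive component $g_k(x_k)$ never cancels the residual's dependence on any noise term $n_\ell$. This is a statement about a \emph{single} residual and an arbitrary analysis function $g_k$, and it is not delivered by the paper's stated assumptions: Assumption~\ref{assumption:function_class} only forbids two residuals that both depend on a common external effect from being jointly independent, and CFC (Assumption~\ref{assumption:CFC}) concerns conditional independencies among the variables in $V$, not cancellations inside residuals of fitted GAMs. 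Your structural argument (``$x_k$ enters the expansion of $x_i$ only inside nonlinear compositions, so no additive function of $x_k$ can match such a term'') is a genericity claim that would need its own faithfulness-type assumption to be made rigorous; as written it is an appeal to intuition. Your side remark that the hypothesis ``$x_k$ is not a parent of $x_j$'' is unnecessary is likewise unsupported once the argument is routed this way, and the paper's proof does use that hypothesis.

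The paper avoids the external-effect bookkeeping entirely by exploiting the relative notion of visibility with respect to a subset $X'\subseteq X$: Eq.~(\ref{eq:visible_parent_2_a}) shows, via the contrapositive of Lemma~\ref{lemma:invisible}, that $(x_i,x_j)$ is visible with respect to $X$; since $x_k$ is a parent of neither $x_i$ nor $x_j$, deleting $x_k$ from the observed set creates no new UBP/UCP between $x_i$ and $x_j$, so the pair remains visible with respect to $X\setminus\{x_k\}$; applying Lemma~\ref{lemma:visible_parent} or Lemma~\ref{lemma:visible_non_edge} with $X'=X\setminus\{x_k\}$ then yields witnesses $M,N$ that exclude $x_k$ \emph{by construction}, which is Eq.~(\ref{eq:visible_parent_2_b}). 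Your ``self-contained alternative'' at the end comes close to this, but it applies the lemmas only with $X'=X$ and then proposes an induction on $|M|$ to strip $x_k$ out of the resulting witness; that induction is circular (removing $x_k$ from a witness is precisely the proposition being proved) and does not obviously make progress, since the lemmas only assert existence of \emph{some} witness. The missing ingredient is the observation that the non-parent hypothesis lets you pass to $X'=X\setminus\{x_k\}$, where the quantifiers exclude $x_k$ for free.
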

\begin{proof}
Eq.~(\ref{eq:visible_parent_2_a}) implies that $(x_i,x_j)$ is visible with respect to $X$. Since $x_k$ is not a parent of $x_i$ or a parent of $x_j$, if there are no UBPs/UCPs between $(x_i,x_j)$ with respect to $X$, then there are also no UBPs/UCPs between $(x_i,x_j)$ with respect to $X \setminus \{x_k\}$. Therefore, $(x_i,x_j)$ is also visible with respect to $X\setminus \{x_k\}$, and we have
\begin{equation}
\exists G_1,G_2\in \mathcal{G}, M \subseteq X \setminus \{x_i,x_k\}, N \subseteq X \setminus \{x_i,x_j,x_k\}: x_i - G_1(M) \cind x_j - G_2(N),
\end{equation}
which implies Eq.~(\ref{eq:visible_parent_2_b}).
\end{proof}
Similarly, we have the following proposition.
\begin{proposition}~\label{proposition:3}
Assume that $x_k$ is not a parent of $x_i$ and not a parent of $x_j$. The following equation
\begin{equation}
\exists G_1,G_2\in \mathcal{G}, M \subseteq X \setminus \{x_i\}, N \subseteq X \setminus \{x_i,x_j,x_k\}: x_i - G_1(M) \cind x_j - G_2(N\cup \{x_k\})\label{eq:visible_parent_2_c}
\end{equation}
implies that
\begin{equation}
\exists G_1,G_2\in \mathcal{G}, M \subseteq X \setminus \{x_i\}, N \subseteq X \setminus \{x_i,x_j,x_k\}: x_i - G_1(M) \cind x_j - G_2(N). \label{eq:visible_parent_2_d}
\end{equation}
In other words, one can also limit the set $N$ to the union of the set of parents of $x_i$ and the set of parents of $x_j$ when checking Eq.~(\ref{eq:visible_parent_2_dashed}).
\end{proposition}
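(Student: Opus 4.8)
The plan is to prove Proposition~\ref{proposition:3} by exactly the argument used for Proposition~\ref{proposition:2}, with the roles of the two regression sets $M$ (attached to $x_i$) and $N$ (attached to $x_j$) interchanged. The first move is to extract visibility from the assumed independence. In Eq.~(\ref{eq:visible_parent_2_c}) we have $M\subseteq X\setminus\{x_i\}$ and, since $N\subseteq X\setminus\{x_i,x_j,x_k\}$, also $N\cup\{x_k\}\subseteq X\setminus\{x_i,x_j\}\subseteq X\setminus\{x_j\}$, so $(M,\,N\cup\{x_k\})$ is an admissible pair of regression sets for Lemma~\ref{lemma:invisible}. Because the associated residuals are independent, Lemma~\ref{lemma:invisible} forces $(x_i,x_j)$ to be \emph{not} invisible with respect to $X$, i.e.\ there is no UBP or UCP between $x_i$ and $x_j$ with respect to $X$.

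Next I would transfer this to $X\setminus\{x_k\}$. Reading off the forms in the definitions of UCP and UBP, a path between $x_i$ and $x_j$ can become a UBP/UCP upon deleting $x_k$ from the reference set only if the node adjacent to $x_i$, or the node adjacent to $x_j$, on that path is $x_k$; in each of those forms that adjacent node points into $x_i$ (resp.\ into $x_j$), hence is a parent of $x_i$ (resp.\ of $x_j$). Since by hypothesis $x_k$ is a parent of neither, no new UBP/UCP appears, so $(x_i,x_j)$ remains not invisible — hence visible — with respect to $X'=X\setminus\{x_k\}$. A visible pair with respect to $X'$ is in one of three cases: $x_j$ is a visible parent of $x_i$, or $x_i$ is a visible parent of $x_j$, or $(x_i,x_j)$ is a visible non-edge, all with respect to $X'$. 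In the first case Proposition~\ref{proposition:1} applied with $X'$ supplies $G_1,G_2\in\mathcal{G}$ and $M',N'\subseteq X\setminus\{x_i,x_j,x_k\}$ with $x_i - G_1(M'\cup\{x_j\}) \cind x_j - G_2(N')$; in the third case Lemma~\ref{lemma:visible_non_edge} supplies such $G_1,G_2$ and $M',N'$ with $x_i - G_1(M') \cind x_j - G_2(N')$. In both, the set used for $x_i$ lies in $X\setminus\{x_i\}$ and the set used for $x_j$ lies in $X\setminus\{x_i,x_j,x_k\}$, which is precisely Eq.~(\ref{eq:visible_parent_2_d}).

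The one step that needs genuine care — and the place I expect essentially all the work — is ruling out the middle case, $x_i$ a visible parent of $x_j$ with respect to $X'$, because there Proposition~\ref{proposition:1} would only give a witness whose $x_j$-side regression set contains $x_i$, which is not of the form demanded by Eq.~(\ref{eq:visible_parent_2_d}). This is excluded exactly as in the proof of Proposition~\ref{proposition:2}: by the first paragraph $(x_i,x_j)$ is visible with respect to $X$, so if $x_i$ were a parent of $x_j$ then $x_i\to x_j$ would be a visible edge, and then the direct effect $f_i^{(j)}(x_i)$ cannot be removed from the residual of $x_j$ unless $x_i$ itself lies in the regression set of $x_j$ — formally, the argument in the proof of Proposition~\ref{proposition:1} together with Assumption~\ref{assumption:function_class} (common external effects of $x_i$ forcing dependence of the two residuals). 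Since $x_i\notin N\cup\{x_k\}$, the independence in Eq.~(\ref{eq:visible_parent_2_c}) could not hold, a contradiction; so only the visible-parent-of-$x_i$ and visible-non-edge cases survive. Everything else is the routine index-set bookkeeping already performed for Proposition~\ref{proposition:2}.
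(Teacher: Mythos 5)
Your proof is correct and follows essentially the same route as the paper, which simply reuses the argument of Proposition~\ref{proposition:2}: the assumed independence rules out invisibility with respect to $X$, non-parenthood of $x_k$ ensures no new UBP/UCP appears when passing to $X\setminus\{x_k\}$, and visibility with respect to $X\setminus\{x_k\}$ then yields witness sets avoiding $x_k$. Your explicit case analysis — in particular ruling out the case that $x_i$ is a visible parent of $x_j$ via Eq.~(\ref{eq:visible_parent_1}) — is a detail the paper glosses over, and is handled correctly.
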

\begin{proof}
Same as the proof of Proposition~\ref{proposition:2}.
\end{proof}

We have the following Propositions~\ref{proposition:4} and~\ref{proposition:5} to limit the contents of the regression sets $M$ and $N$ when checking Eq.~(\ref{eq:non_edge}) in Lemma~\ref{lemma:visible_non_edge}. 

\begin{proposition}~\label{proposition:4}
Assume that $x_k$ is not a parent of $x_i$ and not a parent of $x_j$.  The following equation
\begin{equation}
\exists G_1,G_2\in \mathcal{G}, M \subseteq X \setminus \{x_i,x_j,x_k\}, N \subseteq X \setminus \{x_i,x_j\}: x_i - G_1(M\cup \{x_k\}) \cind x_j - G_2(N) \label{eq:visible_non_edge_a}
\end{equation}
implies that
\begin{equation}
\exists G_1,G_2\in \mathcal{G}, M \subseteq X \setminus \{x_i,x_j,x_k\}, N \subseteq X \setminus \{x_i,x_j\}: x_i - G_1(M) \cind x_j - G_2(N). \label{eq:visible_non_edge_b}
\end{equation}
\end{proposition}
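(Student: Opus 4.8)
The plan is to follow the same template as the proof of Proposition~\ref{proposition:2}: turn the algebraic independence hypothesis into a structural statement about visibility, observe that deleting $x_k$ from the observed set cannot destroy that visibility, and then read off the desired independence from Lemma~\ref{lemma:visible_non_edge} applied over the smaller vertex set $X\setminus\{x_k\}$.

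First I would note that in the hypothesis~(\ref{eq:visible_non_edge_a}) the regression set for $x_i$ is $M\cup\{x_k\}$ with $M\subseteq X\setminus\{x_i,x_j,x_k\}$, so $M\cup\{x_k\}\subseteq X\setminus\{x_i,x_j\}$, and the regression set for $x_j$ is $N\subseteq X\setminus\{x_i,x_j\}$. Both regression sets therefore avoid $x_i$ and $x_j$, so~(\ref{eq:visible_non_edge_a}) is an instance of Eq.~(\ref{eq:non_edge}) with $X'=X$. By Lemma~\ref{lemma:visible_non_edge}, $(x_i,x_j)$ is a visible non-edge with respect to $X$: there is no edge between $x_i$ and $x_j$, and no UBP or UCP between them with respect to $X$.

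Next I would show $(x_i,x_j)$ is also a visible non-edge with respect to $X\setminus\{x_k\}$. There is still no edge between $x_i$ and $x_j$, so it remains to rule out UBPs/UCPs with respect to $X\setminus\{x_k\}$. Suppose some path $P$ were a UBP or UCP between $x_i$ and $x_j$ with respect to $X\setminus\{x_k\}$. The UBP/UCP definitions constrain only the vertices of $P$ adjacent to $x_i$ and to $x_j$ (the ``tails''), requiring them to lie outside $X\setminus\{x_k\}$, i.e.\ in $U\cup\{x_k\}$; the fork vertex $v_i$ and all interior vertices are unconstrained. If both tails lie in $U$, then $P$ is already a UBP/UCP between $x_i$ and $x_j$ with respect to $X$, contradicting the previous paragraph. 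Otherwise at least one tail equals $x_k$; since in both templates a tail carries an arrow pointing into $x_i$ (resp.\ into $x_j$), this forces $x_k\to x_i$ or $x_k\to x_j$, i.e.\ $x_k$ is a parent of $x_i$ or of $x_j$ — contradicting the hypothesis. Hence no such $P$ exists, so $(x_i,x_j)$ is a visible non-edge with respect to $X\setminus\{x_k\}$. Applying Lemma~\ref{lemma:visible_non_edge} with $X'=X\setminus\{x_k\}$ then gives $G_1,G_2\in\mathcal G$ and sets $M,N\subseteq (X\setminus\{x_k\})\setminus\{x_i,x_j\}=X\setminus\{x_i,x_j,x_k\}$ with $x_i-G_1(M)\cind x_j-G_2(N)$; since $X\setminus\{x_i,x_j,x_k\}\subseteq X\setminus\{x_i,x_j\}$, this is exactly~(\ref{eq:visible_non_edge_b}).

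I expect the only delicate point to be the case analysis in the third paragraph: one must be precise that the ``with respect to $X'$'' definitions restrict only the two tail vertices — not the fork $v_i$ or the interior vertices — so that the sole difference between being a UBP/UCP with respect to $X$ versus with respect to $X\setminus\{x_k\}$ is whether a tail is permitted to be $x_k$. Once this is pinned down, the reduction to ``$x_k$ is a parent of $x_i$ or of $x_j$'' is immediate, and the rest is the same regression-set bookkeeping as in Proposition~\ref{proposition:2}, which is why one may legitimately write the proof as ``same as the proof of Proposition~\ref{proposition:2}.''
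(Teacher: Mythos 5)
Your proof is correct and follows essentially the same route as the paper's: read off from Eq.~(\ref{eq:visible_non_edge_a}) that $(x_i,x_j)$ is a visible non-edge with respect to $X$ via Lemma~\ref{lemma:visible_non_edge}, argue that the non-parent hypothesis on $x_k$ rules out any UBP/UCP with respect to $X\setminus\{x_k\}$, and apply Lemma~\ref{lemma:visible_non_edge} again over $X\setminus\{x_k\}$. Your explicit case analysis on the tail vertices merely spells out the step that the paper asserts in one sentence, so nothing further is needed.
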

\begin{proof}
Eq.~(\ref{eq:visible_non_edge_a}) implies that $(x_i,x_j)$ is a visible non-edge with respect to $X$. Since $x_k$ is not a parent of $x_i$ or a parent of $x_j$, if there are no UBPs/UCPs between $(x_i,x_j)$ with respect to $X$, then there are also no UBPs/UCPs between $(x_i,x_j)$ with respect to $X \setminus \{x_k\}$. Therefore, $(x_i,x_j)$ is also a visible non-edge with respect to $X\setminus \{x_k\}$, and we have
\begin{equation}
\exists G_1,G_2\in \mathcal{G}, M \subseteq X \setminus \{x_i,x_j,x_k\}, N \subseteq X \setminus \{x_i,x_j,x_k\}: x_i - G_1(M) \cind x_j - G_2(N),
\end{equation}
which implies Eq.~(\ref{eq:visible_non_edge_b}).
\end{proof}

\begin{proposition}~\label{proposition:5}
Assume that $x_k$ is not a parent of $x_i$ and not a parent of $x_j$.  The following equation
\begin{equation}
\exists G_1,G_2\in \mathcal{G}, M \subseteq X \setminus \{x_i,x_j\}, N \subseteq X \setminus \{x_i,x_j,x_k\}: x_i - G_1(M) \cind x_j - G_2(N\cup \{x_k\}) \label{eq:visible_non_edge_c}
\end{equation}
implies that
\begin{equation}
\exists G_1,G_2\in \mathcal{G}, M \subseteq X \setminus \{x_i,x_j\}, N \subseteq X \setminus \{x_i,x_j,x_k\}: x_i - G_1(M) \cind x_j - G_2(N). \label{eq:visible_non_edge_d}
\end{equation}
\end{proposition}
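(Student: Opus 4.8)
The plan is to mirror the proof of Proposition~\ref{proposition:4} verbatim, with the roles of the two regression sets (equivalently, of $x_i$ and $x_j$) interchanged, since Eq.~(\ref{eq:visible_non_edge_c}) differs from Eq.~(\ref{eq:visible_non_edge_a}) only in that the extra variable $x_k$ is appended to the regression set of $x_j$ rather than to that of $x_i$. First I would observe that, because $x_k$ is distinct from $x_i$ and $x_j$, the set $N\cup\{x_k\}$ still avoids $\{x_i,x_j\}$, so Eq.~(\ref{eq:visible_non_edge_c}) is already a witness for Eq.~(\ref{eq:non_edge}) of Lemma~\ref{lemma:visible_non_edge} with $X'=X$ (take $M'=M$ and $N'=N\cup\{x_k\}$, both contained in $X\setminus\{x_i,x_j\}$). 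By the ``$\Leftarrow$'' direction of Lemma~\ref{lemma:visible_non_edge}, $(x_i,x_j)$ is therefore a visible non-edge with respect to $X$: there is no direct edge between $x_i$ and $x_j$, and no UBP or UCP between them with respect to $X$.

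Next I would transfer this conclusion from $X$ to $X'=X\setminus\{x_k\}$ using the hypothesis that $x_k$ is a parent of neither $x_i$ nor $x_j$. Absence of a direct edge is unaffected by deleting $x_k$. For the path condition, any UBP (resp.\ UCP) between $x_i$ and $x_j$ with respect to $X\setminus\{x_k\}$ that fails to be one with respect to $X$ must owe this failure to the only clause in the definitions sensitive to whether $x_k$ lies in the conditioning set, namely ``$v_k,v_l\notin X'$''. This forces $x_k$ to occur as the hidden node adjacent to an endpoint — as the node $v_k$ with $v_k\to x_i$ in a UBP, as the node $v_l$ with $v_l\to x_j$ in a UBP, or as the penultimate node $v_k$ with $v_k\to x_j$ in a UCP — and in each case $x_k$ would be a parent of $x_i$ or of $x_j$, a contradiction. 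Hence there is no UBP or UCP between $x_i$ and $x_j$ with respect to $X\setminus\{x_k\}$ either, so $(x_i,x_j)$ is a visible non-edge with respect to $X\setminus\{x_k\}$.

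Finally I would apply the ``$\Rightarrow$'' direction of Lemma~\ref{lemma:visible_non_edge} to $X'=X\setminus\{x_k\}$: this produces $G_1,G_2\in\mathcal{G}$ and $M,N\subseteq(X\setminus\{x_k\})\setminus\{x_i,x_j\}=X\setminus\{x_i,x_j,x_k\}$ with $x_i-G_1(M)\cind x_j-G_2(N)$. Since $X\setminus\{x_i,x_j,x_k\}\subseteq X\setminus\{x_i,x_j\}$, this witness is admissible for Eq.~(\ref{eq:visible_non_edge_d}), which is exactly the desired conclusion.

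The only step requiring any genuine care is the topological claim in the second paragraph — that deleting a non-parent $x_k$ cannot create a new UBP/UCP between $x_i$ and $x_j$; everything else is bookkeeping around the definition of a visible non-edge. I expect this to be the same short case analysis already used implicitly in Propositions~\ref{proposition:2} and~\ref{proposition:4}, so the remark ``Same as the proof of Proposition~\ref{proposition:4}'' is fully justified here as well.
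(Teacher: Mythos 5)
Your proposal is correct and follows exactly the route the paper takes: the paper's proof of Proposition~\ref{proposition:5} is literally ``same as Proposition~\ref{proposition:4},'' whose argument is precisely your three steps (witness for Lemma~\ref{lemma:visible_non_edge} with $X'=X$, transfer of non-edge visibility to $X\setminus\{x_k\}$ via the non-parent hypothesis, then re-apply the lemma to extract $M,N\subseteq X\setminus\{x_i,x_j,x_k\}$). Your explicit case analysis of why deleting a non-parent cannot create a new UBP/UCP is a detail the paper leaves implicit, and it checks out against the definitions.
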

\begin{proof}
Same as the proof of Proposition~\ref{proposition:4}.
\end{proof}

Now we are ready to prove the soundness and completeness of CAM-UV-X as follows.

\subsubsection{Soundness in identifying visible non-edges}
A pair $(x_i, x_j)$ is identified as a visible non-edge in CAM-UV-X if 1) it is identified as a visible non-edge by CAM-UV, or 2) it is identified as invisible by CAM-UV, and is re-identified as a visible non-edge by \texttt{checkVisible}. 
\begin{itemize}
\item For the first case, since CAM-UV is sound in identifying visible pairs with Assumption~\ref{assumption:correct_regression}, $(x_i,x_j)$ is also a visible non-edge in the ground truth. 
\item For the second case, this means that either 1) $e >\alpha$ in line 6, or 2) $iNotParent = True$ and $jNotParent = True$ in line 17 of \texttt{checkVisible}. 
\begin{itemize}

\item If $e >\alpha$, Eq.~(\ref{eq:non_edge}) with $X' = X$ is satisfied due to Assumption~\ref{assumption:correct_regression}. This implies $(x_i,x_j)$ is a visible non-edge in the ground truth, due to Lemma~\ref{lemma:visible_non_edge}. 
\item If $iNotParent = True$ and $jNotParent = True$, Eqs.~(\ref{eq:not_parent_1}) and~(\ref{eq:not_parent_2}) are satisfied, due to Assumption~\ref{assumption:correct_regression}. This implies that $(x_i,x_j)$ is a visible non-edge in the ground truth, due to Lemma~\ref{lemma:non_parent_condition}.
\end{itemize}
\end{itemize}
Therefore, CAM-UV-X is sound in identifying visible non-edges.

\subsubsection{Completeness in identifying visible non-edges.}
Suppose that $(x_i,x_j)$ is a visible non-edge in the ground truth. Due to Lemma~\ref{lemma:visible_non_edge}
, Eq.~(\ref{eq:non_edge}) with $X' = X$ holds in the data. After the execution of CAM-UV, there are three cases:
\begin{itemize}
\item $(x_i,x_j)$ is concluded as a visible non-edge: CAM-UV-X leaves the pair as is. Therefore, $(x_i,x_j)$ is also a visible non-edge in the output of CAM-UV-X.
\item $(x_i,x_j)$ is concluded as a visible edge: this case does not happen, since CAM-UV is sound in identifying visible pairs with Assumption~\ref{assumption:correct_regression}.
\item $(x_i,x_j)$ is concluded as an invisible pair: \texttt{checkVisible} is executed for $(i,j)$. In lines 5 and 6 of \texttt{checkVisible}, independence is checked for all subsets of $Q$, which is the union of $P_i$, $P_j$, and the set of nodes whose parent–child relationship to either $x_i$ or $x_j$ is not clear. Since CAM-UV is sound for identifying visible non-edges with Assumption~\ref{assumption:correct_regression}, this set does not miss any parents of $x_i$ or $x_j$.  Since Eq.~(\ref{eq:non_edge}) holds with $X' = X$, there exist some sets $M\subseteq X \setminus \{x_i,x_j\}$ and $N\subseteq X\setminus \{x_i,x_j\}$ that realize the independence between residuals, and produce a p-HSIC value that is greater than $\alpha$, due to Assumption~\ref{assumption:correct_regression}. 
\begin{itemize}
\item If $M$ and $N$ do not contain non-parents of $x_i$ and $x_j$, this means $M,N \subseteq Q$, and exhaustively searching through $Q$ as in \texttt{checkVisible} ensures the finding of $M$ and $N$.

\item If $M$ or $N$ contain some non-parent of $x_i$ and $x_j$, Propositions~\ref{proposition:4} and~\ref{proposition:5} imply that there exist some sets $M'$ and $N'$ in $Q$ such that the independence between residuals is realized, and produce a p-HSIC value that is greater than $\alpha$, by Assumption~\ref{assumption:correct_regression}. Exhaustively searching through $Q$ as in \texttt{checkVisible} ensures the finding of $M'$ and $N'$. 
\end{itemize}
Therefore, \texttt{checkVisible} is guaranteed to find some set $M,N\subseteq Q$ such that $e > \alpha$ and line 6 is satisfied. Thus, CAM-UV-X outputs $(x_i,x_j)$ as a visible non-edge.  
\end{itemize}
Therefore, CAM-UV-X is complete in identifying visible non-edges.

\subsubsection{Soundness in identifying visible edges}
An edge $x_j\rightarrow x_i$ is identified as visible in CAM-UV-X if 1) it is identified as visible by CAM-UV, or 2) it is identified as invisible by CAM-UV, and is re-identified as visible by \texttt{checkVisible}. 
\begin{itemize}
\item For the first case, since CAM-UV is sound in identifying visible pairs with Assumption~\ref{assumption:correct_regression}, $x_j\rightarrow x_i$ is also a visible edge in the ground truth. 
\item For the second case, this means that $e \le \alpha$ in line 6 for all sets $M,N \subseteq Q$, $a_1 > \alpha$ (since $iNotParent$ must be $True$) for some set $M,N \subseteq Q$, and $a_2 \le \alpha$ (since $jNotParent$ must remain $False$) for all sets $M,N \subseteq Q$ in \texttt{checkVisible}. 
\begin{itemize}
\item  Since $a_1 > \alpha$ for some set $M,N \subseteq Q$, Eq.~(\ref{eq:visible_parent_2_dashed}) with $X' = Q \cup \{x_i,x_j\}$ is satisfied, due to Assumption~\ref{assumption:correct_regression}.

\item Since $e \le \alpha$ and $a_2 \le \alpha$ for all sets checked, this means that for all $M \subseteq Q$ and $N \subseteq Q \cup \{x_i\}$, independence is not established for all functions in $\mathcal{G}$, due to Assumption~\ref{assumption:correct_regression}. $Q$ is the union of $P_i$, $P_j$, and the set of nodes whose parent–child relationship to either $x_i$ or $x_j$ is not clear. Since CAM-UV is sound for identifying visible non-edges with Assumption~\ref{assumption:correct_regression}, this set does not miss any parents of $x_i$ or $x_j$. This implies that Eq.~(\ref{eq:visible_parent_1}) with $X' = Q \cup \{x_i,x_j\}$ is satisfied.
\end{itemize}
Since Eqs.~(\ref{eq:visible_parent_2_dashed}) and~(\ref{eq:visible_parent_1}) are satisfied with $X' = Q \cup \{x_i,x_j\}$, $x_j$ is a visible parent of $x_i$ with respect to $Q \cup \{x_i,x_j\}$ in the ground truth, due to Proposition~\ref{proposition:1}. This implies that $x_j$ is also a visible parent of $x_i$ with respect to $X$ in the ground truth.
\end{itemize}
Therefore, CAM-UV-X is sound in identifying visible edges.

\subsubsection{Completeness in identifying visible edges}
Suppose that $x_j\rightarrow x_i$ is a visible edge in the ground truth. Due to Proposition~\ref{proposition:1}, Eqs.~(\ref{eq:visible_parent_1}) and~(\ref{eq:visible_parent_2_dashed}) with $X' = X$ hold in the data. After the execution of CAM-UV, there are three cases:
\begin{itemize}
\item $x_j\rightarrow x_i$ is concluded as a visible edge: CAM-UV-X leaves the edge as is. Therefore, $x_j\rightarrow x_i$ is also a visible edge in the output of CAM-UV-X.
\item $x_j\rightarrow x_i$ is concluded as a visible non-edge: this case does not happen, since CAM-UV is sound in identifying visible pairs with Assumption~\ref{assumption:correct_regression}.
\item $x_j\rightarrow x_i$ is concluded as an invisible pair: \texttt{checkVisible} is executed for $(i,j)$. Due to Eq.~(\ref{eq:visible_parent_1}) with $X' = X$ and Assumption~\ref{assumption:correct_regression}, $e \le \alpha$ and $a_2 \le \alpha$ for all sets $M,N\subseteq Q$. Thus, $jNotParent$ remains $False$, and lines 7 and 18 of \texttt{checkVisible} are guaranteed to be not executed. $Q$ is the union of $P_i$, $P_j$, and the set of nodes whose parent–child relationship to either $x_i$ or $x_j$ is not clear. Since CAM-UV is sound for identifying visible non-edges with Assumption~\ref{assumption:correct_regression}, this set does not miss any parents of $x_i$ or $x_j$. Since Eq.~(\ref{eq:visible_parent_2_dashed}) with $X' = X$ holds, there exist some sets $M$ and $N$ that realize the independence between residuals. 
\begin{itemize}
\item If $M$ and $N$ do not contain non-parents of $x_i$ and $x_j$, this means $M,N \subseteq Q$, and searching through $Q$ ensures the finding of $M$ and $N$. 
\item If $M$ or $N$ contain some non-parent of $x_i$ and $x_j$, Propositions~\ref{proposition:2} and~\ref{proposition:3} imply that there exist some set $M'$ and $N'$ in $Q$ such that the independence between residuals is realized. Searching through $Q$ ensures the finding of $M'$ and $N'$.
\end{itemize}
Therefore, CAM-UV-X is guaranteed to find some sets such that regression in those sets will produce $a_1 > \alpha$ in line 11 of \texttt{checkVisible}, due to Assumption~\ref{assumption:correct_regression}. Therefore, $iNotParent$ will be changed to $True$. Coupling this with the fact that $jNotParent$ remains $False$, and the fact that lines 7 and 18 of \texttt{checkVisible} are not executed, one can conclude that line 23 of \texttt{checkVisible} is guaranteed to be executed. Thus, CAM-UV-X outputs $x_j\rightarrow x_i$ as a visible edge.  
\end{itemize}
Therefore, CAM-UV-X is complete in identifying visible edges.

\subsubsection{Soundness in identifying invisible pairs.} When CAM-UV-X identifies a pair $(x_i,x_j)$ as invisible, the pair cannot be a visible edge in the ground truth, since this would contradict the proven completeness of CAM-UV-X in identifying visible edges. It also cannot be a visible non-edge in the ground truth, since this would contradict the proven completeness of CAM-UV-X in identifying visible non-edges. Therefore, the pair must be invisible in the ground truth. This means CAM-UV-X is sound in identifying invisible pairs.
\subsubsection{Completeness in identifying invisible pairs.} 
Suppose $(x_i,x_j)$ is an invisible pair in the ground truth. In the output of CAM-UV-X, the pair cannot be a visible edge, since this would contradict the proven soundness of CAM-UV-X in identifying visible edges. It also cannot be a visible non-edge in the output, since this would contradict the proven soundness of CAM-UV-X in identifying visible non-edges. Therefore, the pair must be invisible in the output of CAM-UV-X. This means CAM-UV-X is complete in identifying invisible pairs.

\section{\MakeUppercase{Step-by-step Execution of CAM-UV on Figs.~\ref{fig:illustrative_1}a and~\ref{fig:illustrative_1}b}}\label{appendix:cam_uv_step_by_step}
We work out step-by-step the execution of the CAM-UV algorithm for the graphs in Figs.~\ref{fig:illustrative_1}a and~\ref{fig:illustrative_1}b. We assume Assumption~\ref{assumption:correct_regression}.  
\subsection{Fig.~\ref{fig:illustrative_1}a}
\begin{itemize}
\item Algorithm~1:
\begin{itemize}
\item Phase 1:
\begin{itemize}
\item $t = 2$: The sets $\{x_1,x_2\}$, $\{x_1,x_3\}$, and $\{x_2,x_3\}$ are considered as $K$. We search for the candidate sink $x_b$ in $K$. 
\begin{itemize}
\item $K = \{x_1,x_3\}$ or $K = \{x_2,x_3\}$: Since these pairs are invisible,  $x_b-G_1(M_b\cup K\setminus \{x_b\})$ and $x_j-G_2(M_j)$ for $j\in K\setminus \{x_b\}$ are not independent, regardless of $x_b$. Thus, line 15 will fail since $e \le \alpha$. There is no change in $M_i$.
\item $K = \{x_1,x_2\}$: CAM-UV correctly finds $x_b = x_2$. However, $x_b-G_1(M_b\cup K\setminus \{x_b\}) = x_2 - G_1(x_1)$ and $x_j-G_2(M_j) = x_1$ is not independent, due to the unblocked backdoor path $x_1 \leftarrow U_1 \rightarrow x_3 \rightarrow x_2$. Thus, line 15 will fail since $e \le \alpha$. There is no change in $M_i$.
\end{itemize}
$M_i$ remains empty for each $i$ and $t$ increases to $3$.
\item $t = 3$: $K = \{x_1,x_2,x_3\}$. The algorithm correctly chooses $x_b = x_2$. However, $x_b-G_1(M_b\cup K\setminus \{x_b\}) = x_2 - G(x_1,x_3)$ and $x_j-G_2(M_j) = x_3$ is not independent, due to the unobserved backdoor path $x_3 \leftarrow U_2 \rightarrow x_2$. Therefore, line 15 will fail again, since $e \le \alpha$.
\end{itemize}
$M_i$ remains empty for each $i$. Phase 1 of the Algorithm~1 ends.
\item Phase 2: Since $M_i$ is empty for each $i$, Phase~2 ends. 
\end{itemize}
Algorithm~1 ends with every $M_i$ being empty.
\item Algorithm~2: For each pair $(i,j)$, line 5 is satisfied. Therefore, the algorithm concludes that every pair is invisible. CAM-UV ends.
\end{itemize}

The final output is an adjacency matrix where each off-diagonal element is NaN.

\subsection{Fig.~\ref{fig:illustrative_1}b}
\begin{itemize}
\item Algorithm~1:
\begin{itemize}
\item Phase 1:
\begin{itemize}
\item $t = 2$: The sets $\{x_1,x_2\}$, $\{x_1,x_3\}$, $\{x_1,x_4\}$, $\{x_2,x_3\}$, $\{x_2,x_4\}$, and $\{x_3,x_4\}$ are considered as $K$. The candidate sink $x_b$ of $K$ is searched. 
\begin{itemize}
\item $K = \{x_1,x_2\}$: Regardless of which $x_b$ is, $x_b-G_1(M_b\cup K\setminus \{x_b\}) = x_b-G_1(\{x_1,x_2\}\setminus \{x_b\})$ and $x_j-G_2(M_j) = x_j$ for $j\in K\setminus \{x_b\}$ are not independent, since there are unblocked BPs/CPs when $x_3$ and $x_4$ are not added to the regression. Thus, line 15 will fail since $e \le \alpha$. There is no change in $M_i$.
\item The remaining pairs are all invisible. Therefore, $x_b-G_1(M_b\cup K\setminus \{x_b\}) = x_b - G_1(K\setminus \{x_b\})$ and $x_j-G_2(M_j)$ is not independent. Thus, line 15 will fail since $e \le \alpha$. There is no change in $M_i$.
\end{itemize}
$M_i$ remains empty for each $i$ and $t$ increases to $3$.
\item $t = 3$: 
\begin{itemize}
\item $K = \{x_1,x_2,x_3\}$: 
\begin{itemize}
\item $x_b = x_2$: $x_b-G_1(M_b\cup K\setminus \{x_b\}) = x_2 - G(x_1,x_3)$ and $x_j-G_2(M_j) = x_3$ is not independent, due to the unobserved backdoor path $x_3 \leftarrow U_4 \rightarrow x_2$. Therefore, line 15 will fail, since $e \le \alpha$. 
\item $x_b = x_1$: $x_b-G_1(M_b\cup K\setminus \{x_b\}) = x_1 - G(x_2,x_3)$ and $x_j-G_2(M_j) = x_3$ is not independent, due to the unobserved backdoor path $x_1 \leftarrow U_1 \rightarrow x_3$. Therefore, line 15 will fail, since $e \le \alpha$. 
\item $x_b = x_3$: $x_b-G_1(M_b\cup K\setminus \{x_b\}) = x_3 - G(x_1,x_2)$ and $x_j-G_2(M_j) = x_2$ is not independent, due to the unobserved backdoor path $x_3 \leftarrow U_4 \rightarrow x_2$. Therefore, line 15 will fail, since $e \le \alpha$. 
\end{itemize}

\item $K = \{x_2,x_3,x_4\}$: 
\begin{itemize}
\item $x_b = x_2$: $x_b-G_1(M_b\cup K\setminus \{x_b\}) = x_2 - G(x_3,x_4)$ and $x_j-G_2(M_j) = x_3$ is not independent, due to the unobserved backdoor path $x_3 \leftarrow U_4 \rightarrow x_2$. Therefore, line 15 will fail, since $e \le \alpha$. 
\item $x_b = x_3$: $x_b-G_1(M_b\cup K\setminus \{x_b\}) = x_3 - G(x_2,x_4)$ and $x_j-G_2(M_j) = x_2$ is not independent, due to the unobserved backdoor path $x_3 \leftarrow U_4 \rightarrow x_2$. Therefore, line 15 will fail, since $e \le \alpha$. 
\item $x_b = x_4$: $x_b-G_1(M_b\cup K\setminus \{x_b\}) = x_4 - G(x_2,x_3)$ and $x_j-G_2(M_j) = x_2$ is not independent, due to the unobserved backdoor path $x_2 \leftarrow U_3 \rightarrow x_4$. Therefore, line 15 will fail, since $e \le \alpha$. 
\end{itemize}
\item $K = \{x_1,x_3,x_4\}$:
\begin{itemize}
\item $x_b = x_1$: $x_b-G_1(M_b\cup K\setminus \{x_b\}) = x_1 - G(x_3,x_4)$ and $x_j-G_2(M_j) = x_3$ is not independent, due to the unobserved backdoor path $x_3 \leftarrow U_1 \rightarrow x_1$. Therefore, line 15 will fail, since $e \le \alpha$. 

\item $x_b = x_3$: $x_b-G_1(M_b\cup K\setminus \{x_b\}) = x_3 - G(x_1,x_4)$ and $x_j-G_2(M_j) = x_1$ is not independent, due to the unobserved backdoor path $x_3 \leftarrow U_1 \rightarrow x_1$. Therefore, line 15 will fail, since $e \le \alpha$. 
\item $x_b = x_4$: $x_b-G_1(M_b\cup K\setminus \{x_b\}) = x_4 - G(x_1,x_3)$ and $x_j-G_2(M_j) = x_1$ is not independent, due to the unobserved causal path $x_1 \rightarrow U_2 \rightarrow x_4$. Therefore, line 15 will fail, since $e \le \alpha$. 
\end{itemize}
\end{itemize}
$M_i$ remains empty for each $i$ and $t$ increases to $4$.

\item $t = 4$:
\begin{itemize}
\item $K = \{x_1,x_2,x_3,x_4\}$.

\begin{itemize}

\item $x_b = x_1$: $x_b-G_1(M_b\cup K\setminus \{x_b\}) = x_1 - G(x_2,x_3,x_4)$ and $x_j-G_2(M_j) = x_3$ is not independent, due to the unobserved backdoor path $x_3 \leftarrow U_1 \rightarrow x_1$. Therefore, line 15 will fail, since $e \le \alpha$. 

\item $x_b = x_2$: $x_b-G_1(M_b\cup K\setminus \{x_b\}) = x_2 - G(x_1,x_3,x_4)$ and $x_j-G_2(M_j) = x_3$ is not independent, due to the unobserved backdoor path $x_3 \leftarrow U_4 \rightarrow x_2$. Therefore, line 15 will fail, since $e \le \alpha$.

\item $x_b = x_3$: $x_b-G_1(M_b\cup K\setminus \{x_b\}) = x_3 - G(x_1,x_2,x_4)$ and $x_j-G_2(M_j) = x_1$ is not independent, due to the unobserved backdoor path $x_1 \leftarrow U_1 \rightarrow x_3$. Therefore, line 15 will fail, since $e \le \alpha$.

\item $x_b = x_4$: $x_b-G_1(M_b\cup K\setminus \{x_b\}) = x_4 - G(x_1,x_2,x_3)$ and $x_j-G_2(M_j) = x_2$ is not independent, due to the unobserved backdoor path $x_4 \leftarrow U_3 \rightarrow x_2$. Therefore, line 15 will fail, since $e \le \alpha$.

\end{itemize}
\end{itemize}
\end{itemize}

$M_i$ remains empty for each $i$. Phase 1 of the Algorithm~1 ends.
\item Phase 2: Since $M_i$ is empty for each $i$, Phase~2 ends. 
\end{itemize}
Algorithm~1 ends with every $M_i$ being empty.
\item Algorithm~2: For each pair $(i,j)$, line 5 is satisfied. Therefore, the algorithm concludes that every pair is invisible. CAM-UV ends.
\end{itemize}

The final output is an adjacency matrix where each off-diagonal element is NaN.

\section{\MakeUppercase{Details of experiments}}\label{appendix:experiment_details}

\subsection{S-B method}
As discussed in the main text, the theory of~\cite{Schultheiss_2024} is not designed for causal search. Below is a description of the S-B method, which is our attempt to create a baseline for identifying the parent–child relationships between observed variables using~\cite{Schultheiss_2024}.

\begin{itemize}
\item Initialize the adjacency matrix $A$ with all elements as NaN.
\item For each $j = 1,\ldots,p$:
\begin{itemize}
\item For a given significance level $\alpha$, apply Algorithm~3 of~\cite{Schultheiss_2024} with $\Tilde{\alpha} = \alpha$ and number of splits $B = 25$ to obtain $\hat{W}_j \subseteq \{1,\ldots,p\}$, the set of indices $i$ of variables $X_i$ whose causal effects on $X_j$ are well-specified. 
\item For each variable $X_i$ with $i \in \hat{W}_j$, we estimate the causal effect of $X_i$ on $X_j$ while all remaining variables in Eq.~(\ref{eq:CAM_UV}), including the hidden ones, are fixed. If this causal effect is $0$, then $X_i$ is not a parent of $X_j$. Otherwise, $X_i$ is a parent of $X_j$. The procedure is as follows.
\begin{itemize}
\item We learn the conditional expectation function $\mathbb{E}[X_{j}\mid X_{-j}]$ by learning a regression model $f(X_{-j})$ to predict $X_j$ using $X_{-j}$. Here, $X_{-j}$ are the observed variables excluding $X_j$. In the experiments, we use \mbox{XGBoost}~\citep{xgboost} as the regression model.  
\item We sample a location $m$ from $1,\ldots, n$, with $n$ being the number of observed samples, independently $M$ times. We use $M = 100$ in the experiments. For each location $m$, we calculate a value $\delta_m$ as follows.
\begin{itemize} 
\item For each location $m$, we randomly sample a location $a$ ($a \ne m$) from $1,\ldots,n$.
\begin{itemize}
\item We calculate $$\delta_m = \hat{f}\left(X_{i} = (x_{i})_a, X_{k|k\ne i,j} = (x_{k|k\ne i,j})_m\right) - \hat{f}\left(X_{i} = (x_{i})_m, X_{k|k\ne i,j} = (x_{k|k\ne i,j})_m\right).$$ $(x_{i})_a$ and $(x_{i})_m$ are the values of $X_i$ at the $a$-th and $m$-th samples, respectively. $(x_{k|k\ne i,j})_m$ are the values of the variables $X_{k|k\ne i,j}$ in the $m$-th sample.

\item $\delta_m$ is our estimate of the difference $$\mathbb{E}[X_j \mid X_{i} = (x_{i})_a, X_{k|k\ne i,j} = (x_{k|k\ne i,j})_m] - \mathbb{E}[X_j \mid X_{i} = (x_{i})_m, X_{k|k\ne i,j} = (x_{k|k\ne i,j})_m].$$
Due to Theorem~1 of~\cite{Schultheiss_2024}, this difference in conditional expectation values is the causal effect on the target variable $X_j$ when $X_i$ is changed from the value $(x_{i})_m$ to the value $(x_{i})_a$, while all remaining variables in Eq.~(\ref{eq:CAM_UV}), including the hidden ones, are fixed at their corresponding values in the $m$-th sample.

\item If $\delta_m \ne 0$, $X_i$ is a parent of $X_j$. Otherwise, $X_i$ is not a parent of $X_j$.

\end{itemize}
\end{itemize}
\item We use a $t$-test with significance level $\alpha$ for testing the null hypothesis of zero mean of the distribution of $\delta_m (m = 1,\ldots,M)$.
\begin{itemize}
\item If the null hypothesis is rejected, we conclude that $X_i$ is a parent of $X_j$ and set $A(j,i) = 1$ and $A(i,j) = 0$. 
\item If the null hypothesis is not rejected, we conclude that $X_i$ is not a parent of $X_j$, and set $A(j,i) = 0$.
\end{itemize}
\end{itemize}
\end{itemize}
\end{itemize}
\subsection{Computational environment}
All experiments were performed on a desktop computer with an Intel Core i7-13700K processor and 64 GB of RAM. 
\subsection{Metrics}
We define the metrics used in the experiments. $TP$ is the number of true positives. $TN$ is the number of true negatives. $FN$ is the number of false negatives. $FP$ is the number of false positives. 

In identifying the adjacency matrix $A$, for the case where the estimated $A(i,j)$ is NaN, we add $0.5\times P/(P+N)$, $0.5\times N/(P+N)$, $0.5\times P/(P+N)$, and $0.5\times N/(P+N)$ to $TP$, $TN$, $FN$, and $FP$, respectively, to reflect the expected values of a random guess. Here, $P$ and $N$ are the total number of positives and negatives in the ground truth, respectively.

The $precision$, $recall$, and $F_1$ are calculated as follows. $Precision$ is $TP/(TP + FP)$. $Recall$ is $TP/(TP + FN)$. $F_1$ is $2 * (precision * recall) / (precision + recall)$.

\subsection{ER graphs with Gaussian noise}\label{appendix:ER_results}
We generated random graphs from the ER model with 10 observed variables and edge probability $0.2$. We randomly selected 20 pairs of observed variables and introduced a hidden confounder between each pair, and for another 20 pairs of observed variables, we added a hidden intermediate variable. We generated $50$ random graphs in this way. For each random graph, we generated one dataset with the same process as in Section~\ref{sec:experiment_illustrative}. Each dataset contains $500$ samples. We ran the algorithms with significance level $\alpha = 0.05$, $0.1$, and $0.2$. The results are shown in Fig.~\ref{fig:ER_exp}. CAM-UV-X achieves performance comparable to that of CAM-UV.
\begin{figure}[!ht] 
\centering
\includegraphics[width = 0.6\columnwidth]{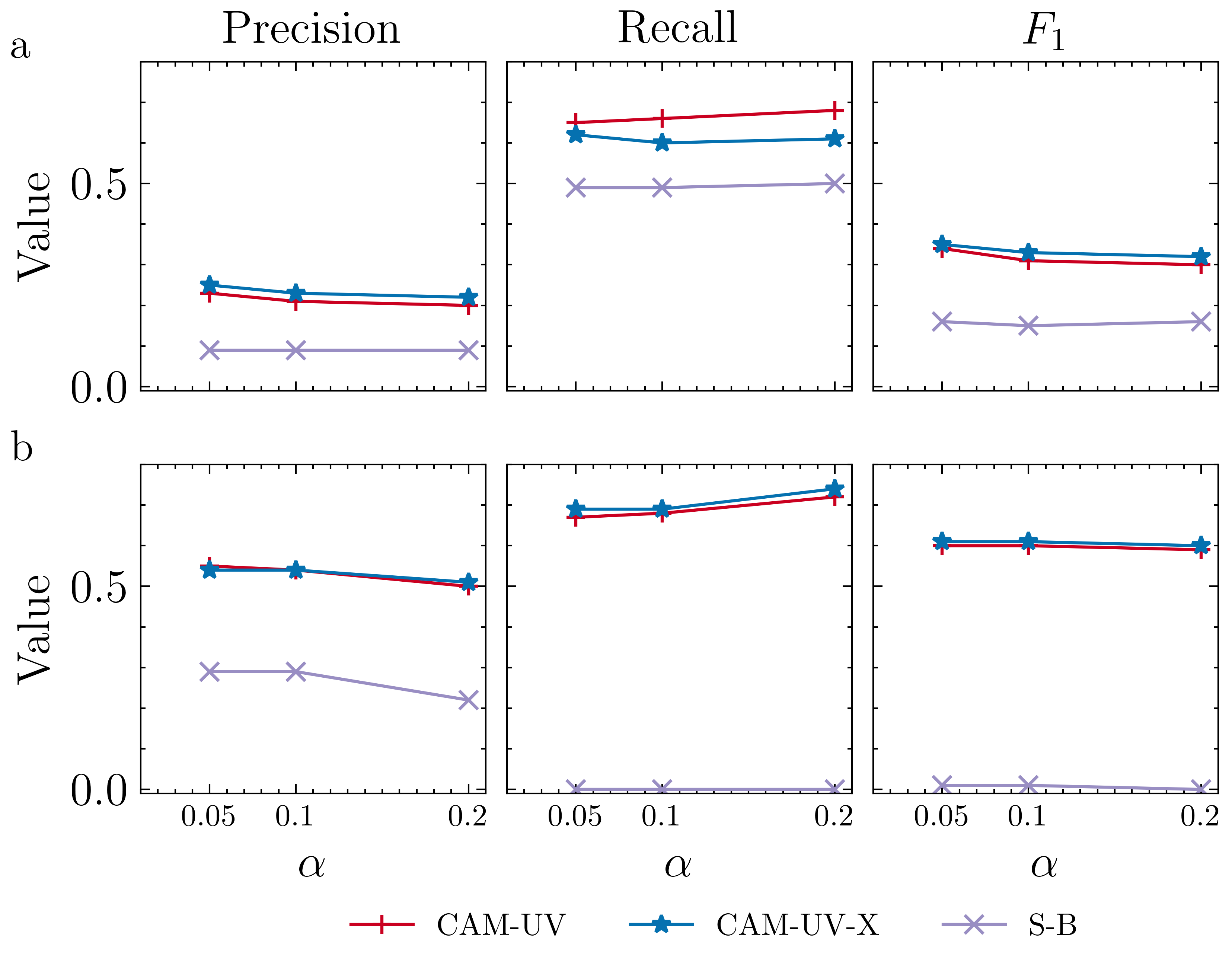}
\caption{Performance in ER random graphs. a: identifying the adjacency matrix, b: identifying ancestor relationships.}\label{fig:ER_exp}
\end{figure}

\subsection{Details on the experiment with the sociological data}\label{appendix:sociology_data}

We removed outliers in the data by retaining samples in which the value of $x_2$ (son's income) is at or below the $0.95$ quantile of its sample distribution and the value of $x_6$ (number of siblings) is at or below the $0.95$ quantile of its sample distribution. The final number of samples is $1262$.

We ran CAM-UV and CAM-UV-X with $\alpha = 0.05$. For CAM-UV, we applied the following prior knowledge: each of $x_2$ (son's income), $x_4$ (son's occupation), and $x_5$ (son's education) cannot be parents of either $x_1$ (father's occupation), $x_3$ (father's education), or $x_6$ (number of siblings). We ran CAM-UV-X on the adjacency matrix estimated by CAM-UV, without any prior knowledge. 
\end{document}